\documentclass[10pt,journal,compsoc]{IEEEtran}
\ifCLASSOPTIONcompsoc


\ifCLASSINFOpdf
\else
\fi

\usepackage{amsmath,amsfonts}
\usepackage{subfigure}
\usepackage{graphicx,graphics,color,psfrag}
\usepackage{algorithmic}
\usepackage{algorithm}
\usepackage{array}
\usepackage[caption=false,font=normalsize,labelfont=sf,textfont=sf]{subfig}
\usepackage{textcomp}
\usepackage{stfloats}
\usepackage{url}
\usepackage{verbatim}
\usepackage{cite}
\hyphenation{op-tical net-works semi-conduc-tor IEEE-Xplore}

\usepackage{amssymb}
\usepackage{mathtools}
\usepackage{amsthm}

\newtheorem{theorem}{Theorem}

\newtheorem{lemma}{Lemma}

\newtheorem{assumption}{Assumption}

\usepackage{multirow}
\usepackage{microtype}
\usepackage{booktabs} 
 
\begin{document}

\title{SCALA: Split Federated Learning with Concatenated Activations and Logit Adjustments}

\author{Jiarong Yang and Yuan Liu
\IEEEcompsocitemizethanks{\IEEEcompsocthanksitem Jiarong Yang and Yuan Liu are with school of Electronic and Information Engineering, South China University of Technology, Guangzhou 510641, China (e-mails: eejryang@mail.scut.edu.cn, eeyliu@scut.edu.cn). Corresponding author: Yuan Liu.
}
}

\markboth{Journal of \LaTeX\ Class Files,~Vol.~14, No.~8, August~2015}%
{Shell \MakeLowercase{\textit{et al.}}: Bare Demo of IEEEtran.cls for Computer Society Journals}

\IEEEtitleabstractindextext{%
\begin{abstract}
Split Federated Learning (SFL) is a distributed machine learning framework where the models are split and trained on the server and clients. However, data heterogeneity and partial client participation result in label distribution skew, which severely degrades learning performance.
To address this issue, we propose SFL with Concatenated Activations and Logit Adjustments (SCALA), in which activations from client-side models are concatenated as the input of the server-side model to centrally adjust label distribution across different clients, and logit adjustments in the loss functions of both server-side and client-side models are performed to deal with the label distribution variation across different subsets of participating clients. Theoretical analysis demonstrates that the concatenation of activations reduces the impact of data heterogeneity, while logit adjustments in loss functions enhance the recognition of low-frequency labels at the cost of sacrificing the recognition of high-frequency labels.
\end{abstract}

\begin{IEEEkeywords}
Split federated learning, model splitting, data heterogeneity.
\end{IEEEkeywords}}

\maketitle



\IEEEraisesectionheading{\section{Introduction}}
\IEEEPARstart{S}{plit} Federated Learning (SFL) \cite{9016486,thapa2022splitfed,gawali2021comparison} is a promising technology that integrates the characteristics of both Federated Learning (FL) \cite{9084352,mcmahan2017communication} and Split Learning (SL) \cite{gupta2018distributed}, where model splitting and aggregation are concurrently deployed, as depicted in Fig. \ref{SFLvsCSFL}(a). Specifically, akin to SL, SFL also splits the artificial intelligence (AI) model into two parts, with each part deployed separately on the server and clients. In the training phase, each client processes local data through the client-side model, and then sends the activations to the server to complete the forward propagation. Then, the server updates the server-side model and sends back the backpropagated gradients to each client to finish the update of the client-side model. In this way, clients only need to execute the initial layers of complex AI models, such as deep neural networks, and offload the remaining layers to the server for execution. On the other hand, SFL employs parallel training to enhance training efficiency and utilizes FL technology to aggregate server-side and client-side models. 
\begin{figure}[t]
\centering
\includegraphics[width=8.5cm]{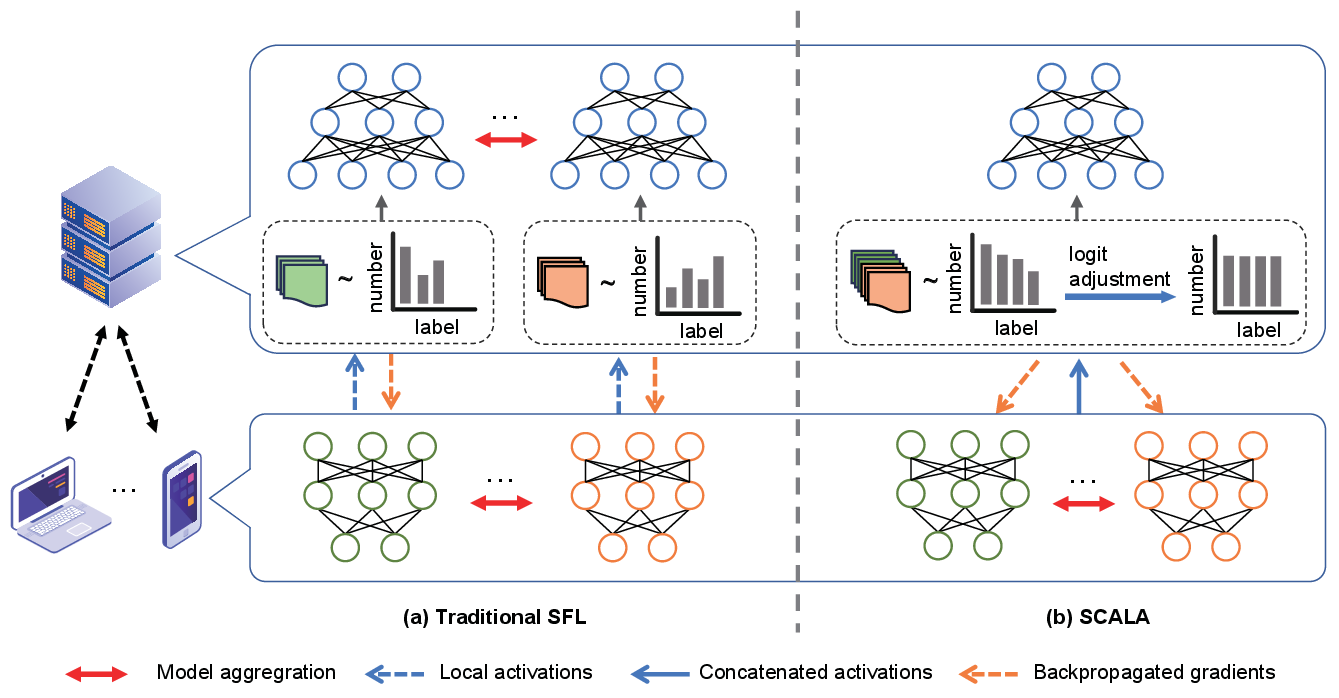}
\caption{An illustration of traditional SFL and SCALA in scenarios with skewed label distribution. Traditional SFL maintains and trains server-side models for each participating client on the server and periodically aggregates these server-side models. SCALA maintains and trains one server-side model based on concatenated activations.}
\label{SFLvsCSFL}
\end{figure}
\par
Same as other distributed machine learning methods, SFL also encounters the issue of skewed label distribution due to two main factors. The first is local label distribution skew \cite{zhang2022federated,wang2024aggregation,10510418} caused by the data heterogeneity of clients where label distribution varies across clients. The second is  global label distribution skew \cite{zhang2023fed,10910050} brought by partial client participation, resulting in label distribution variation across different subsets of participating clients. Specifically, the local objective functions are inconsistent among clients due to local label distribution skew, causing the aggregated model to deviate from the global optimum \cite{wang2020tackling,acar2021federated,wang2024aggregation}. On the other hand, partial client participation \cite{yang2021achieving} is usually assumed, where only a subset of clients are involved in the model training due to heterogenous communication and computational capabilities of clients. However, in this case the server can not access label distribution of non-participating clients, which is likely to exhibit a skewed label distribution and fails to learn a global model that can generalize well for all clients \cite{kim2024communication}. 
\par
The common approaches to skewed label distribution are regularization \cite{li2020federated,shi2023towards,kim2024communication}, loss function calibration \cite{lee2022preservation,zhang2022federated}, and client selection \cite{zhang2023fed,10910050}. However, if the local label distribution is highly skewed, there are missing classes in the local data and the local model still exhibits adverse bias towards the existing labels. Moreover, the adverse bias is present in every layer of the model and intensifies as the depth of the model increases, and thus the classifier is the most severely affected \cite{NEURIPS2021_2f2b2656}. This issue is unavoidable in the distributed machine learning algorithms, where each client may lack certain classes of its local data. 
\par
To address the above issue of skewed label distribution, we propose SCALA whose core idea is to centrally adjust label information from clients at the server. Specifically, to deal with the local label distribution skew raised from data heterogeneity, as shown in Fig. \ref{SFLvsCSFL}(b), we concatenate the activations (output of the client-side models) to serve as the input for the server-side model (i.e., the parallel SL step). This enables the server to train the server-side model centrally based on a dataset with concatenated label distribution, thereby effectively alleviating the bias introduced by local label distribution skew with missing classes. Concurrently, the clients share the same server-side model during training, directly addressing the deep-layer adverse bias as demonstrated in prior studies \cite{NEURIPS2021_2f2b2656,shang2022federated}. 
\par
Note that although recent works, such as \cite{huang2023minibatch,liao2024mergesfl}, also centrally train server-side model, they ignore the global label distribution skew. When few clients participate, e.g., in extreme cases with only two clients, the server cannot achieve a relatively balanced activations through concatenation. In such cases, the label distribution of the concatenated activations exhibit a long-tailed distribution \cite{menon2021longtail,Zhong_2021_CVPR,10105457}, resulting in common loss functions being less applicable.
Specifically, the misclassification error of loss functions is related to label distribution, where high-frequency labels result in a smaller misclassification error, leading the model to focus on improving predictive accuracy for high-frequency labels.
To tackle the global label distribution skew due to partial client participation, we perform logit adjustments for both server-side and client-side models to mitigate the impact of label distribution on loss function updates. In detail, we employ logit adjustment to decrease the logits for high-frequency labels by a larger value and for low-frequency labels by a smaller value, thereby achieving a calibrated misclassification error that is independent of label distribution. Our main contributions are summarized as follows.
\begin{itemize}
    \item We propose  SCALA (Split Federated Learning with Concatenated Activations and Logit Adjustments) to address label distribution skew at both local and global levels. The proposed SCALA mitigates the impact of missing classes in highly skewed local label distributions by centrally training the server-side model using concatenated activations from clients. Additionally, logits before softmax cross-entropy are adjusted according to the concatenated label distribution to address global label distribution skew.
    \item We perform a convergence analysis of the concatenated activations enabled SFL framework and derive the following insights: First, increasing the number of server-side model layers reduces the gradient dissimilarity, thereby mitigating the impact of local label distribution skew. Second, increasing the client participation rate alleviates the influence of global label distribution skew on convergence performance.
    We theoretically derive that logit adjustments enhance low-frequency label recognition at the cost of high-frequency accuracy. Furthermore, our experiments reveal that the efficacy of logit adjustments is fundamentally predicated on concatenated activations, where the holistic view provided by concatenation is essential for effective calibration.
\end{itemize}

\section{Related Works}

\begin{table*}[ht]
\caption{Comparison among different FL and SFL Algorithms Addressing Label Distribution Skew.}
\label{Comparison2}
\begin{tabular}{cccll}
\toprule
\textbf{Algorithms} & \textbf{Type} & \textbf{Skew Addressed} & \multicolumn{1}{c}{\textbf{Main Advantage}}  & \multicolumn{1}{c}{\textbf{Principal Limitation}} \\ \midrule
\textbf{FedACG} \cite{kim2024communication}, & FL            & Local                   & Introduce penalties to the local loss,       & Performance degrades when clients                 \\
\textbf{FedProx} \cite{li2020federated},     &               &                         & constraining the divergence between          & lack certain labels.                     \\
\textbf{FedDecorr} \cite{shi2023towards}     &               &                         & local and global objectives under            &                                                   \\
                                    &               &                         & heterogeneous data.                          &                                                   \\ \midrule
\textbf{FedLC} \cite{zhang2022federated},    & FL            & Local                   & Adjust the local loss according to           & Performance degrades when clients                 \\
\textbf{FedNTD} \cite{lee2022preservation}   &               &                         & client-specific class statistics, yielding   & lack certain labels.                     \\
                                    &               &                         & higher-quality local updates.                &                                                   \\\midrule
\textbf{CCVR} \cite{NEURIPS2021_2f2b2656},   & FL            & Local                   & Retrain the global classifier with virtual   & Requires accurate generation and storage          \\
\textbf{CReFF} \cite{shang2022federated}     &               &                         & prototypes, rebalancing the decision         & of virtual features; does not correct bias        \\
                                    &               &                         & boundary from a centralized perspective.     & in feature extraction layers.                     \\ \midrule
\textbf{FedConcat} \cite{diao2024exploiting} & FL            & Local                   & Concatenates the outputs of feature          & Introduces additional fine-tuning steps;          \\
                                    &               &                         & extraction layers instead of averaging       &  results in a larger global model with            \\
                                    &               &                         & their parameters, preserving information     &  higher storage and inference costs.              \\
                                    &               &                         & from rare or missing classes.                &                                                   \\\midrule
\textbf{FedCBS} \cite{zhang2023fed},         & FL            & Global                  & Dynamically selects a class-balanced         & Incurs coordination overhead; scheduling          \\
\textbf{AdaFL} \cite{li2024adafl}            &               &                         & subset of participants each round,           & efficacy is vulnerable to fluctuating             \\
                                    &               &                         & alleviating global label distribution skew.  & network conditions.                               \\ \midrule
\textbf{Minibatch-SFL} \cite{huang2023minibatch},  & SFL           & Local                   & Centrally trains the server-side model on    & Performance degrades when few clients         \\
\textbf{MergeSFL} \cite{liao2024mergesfl}    &               &                         & concatenated activations, preserving         & participate.                                      \\
                                    &               &                         & information from rare or missing classes.     &                                                   \\ \midrule
\textbf{CS-SFL} \cite{10910050}               & SFL           & Global                  & Integrates label-aware client sampling into       & Incurs coordination overhead; scheduling          \\
                                    &               &                         & SFL, mitigating global label distribution & efficacy is vulnerable to fluctuating             \\
                                    &               &                         & skew.                                & network conditions.                               \\ \midrule
\textbf{SCALA} (ours)                               & SFL           & Local and global        & Address both local and global label              & Lack of communication optimization                                                  \\
                                    &               &                         & distribution skew through concatenated       & mechanisms.                         \\
                                    &               &                         & activations and logit adjustment.          &                                                    \\  \bottomrule  
\end{tabular}
\end{table*}

\subsection{Federated Learning}
FL emerges from Google keyboard personalization pipeline and is first formalized by \cite{mcmahan2017communication} as the FedAvg algorithm, which performs iterative model averaging across a population of resource-constrained clients without exporting raw data \cite{9084352,mcmahan2017communication}. This decentralized paradigm rapidly gains traction because it simultaneously satisfies modern privacy regulations (e.g., GDPR \cite{voigt2017eu}, CCPA \cite{bukaty2019california}) and alleviates the burden of aggregating large-scale, siloed datasets \cite{kairouz2021advances}. Subsequent studies have extended the applicability of FL to complex distributed environments by (1) addressing statistical heterogeneity across clients and (2)  improving communication and computation efficiency. 
\par
The first research direction addresses statistical heterogeneity among client datasets, which is inherent in decentralized settings where local data are often non-independent and identically distributed (non-IID) \cite{zhu2021federated,zhang2021federated,ma2022state}. Such heterogeneity manifests in multiple forms, including label distribution skew \cite{zhang2022federated,wang2024aggregation,zhang2023fed,10910050,10510418}, feature skew \cite{zhou2023fedfa,zhou2024federated,yan2025simple}, and quantity skew \cite{karimireddy2020scaffold,wang2020tackling}, each of which distorts gradient alignment across clients and hampers global model convergence. Several representative surveys have revealed that label distribution skew has the most pronounced effect on training instability \cite{hsieh2020non,9084352,kairouz2021advances}. Specifically, label distribution skew can be categorized into local label distribution skew and global label distribution skew. Local label distribution skew is caused by the data heterogeneity of clients where the local label distribution of each client deviates from the overall population distribution. Such misalignment perturbs local gradient directions and increases the variance of model updates \cite{lee2022preservation,zhang2022federated}. Global label distribution skew stems from the practical constraint of partial client participation. Since many clients possess limited compute, battery life, or network connectivity, it is infeasible for every client to take part in each communication round \cite{yang2020federated,zhu2020toward}. Consequently, only a subset of clients participate in model tarining per round, and this participation is inherently unbalanced: resource-rich clients participate far more frequently than resource-constrained ones. The label histogram observed at the server therefore mirrors the participation bias—often manifesting a long-tailed class distribution—rather than the true population distribution, leading to a persistent divergence in the model updates \cite{zhang2023fed,li2024adafl,10910050}. 
\par
Numerous methods have been proposed to address this issue, with recent studies in FL tackling local label distribution skew through optimizations on both the client and server sides. Specifically, \cite{li2020federated,tan2022fedproto,shi2023towards,mu2023fedproc,kim2024communication} add extra regularization terms to the local loss function to reduce bias between local and global models. \cite{lee2022preservation,zhang2022federated} calibrate the local loss function based on the characteristics of local data to produce higher quality local models. And FedConcat directly combines local models via concatenation instead of averaging to address label-skewed local data \cite{diao2024exploiting}. Additionally, global classifiers are calibrated at the server side using virtual representations in \cite{NEURIPS2021_2f2b2656,shang2022federated}. Although these approaches improve robustness under local label distribution skew, they fundamentally rely on local data for model updates, and their effectiveness deteriorates when clients entirely lack samples from certain classes. To address global label distribution skew, recent works in FL employ client selection to choose a balanced subset of participating clients to address this issue \cite{zhang2023fed,li2024adafl,10910050,yang2023client}. However, these methods often incur non-negligible coordination and communication overhead and are sensitive to dynamic network conditions.
\par
The second research direction focuses on reducing the significant communication overhead that arises from iterative model synchronization between clients and the server \cite{niknam2020federated,yang2022federated,10759679}. Techniques such as gradient sparsification, adaptive client scheduling, and asynchronous update schemes have been extensively explored to mitigate bandwidth consumption while maintaining model accuracy. For example, gradient sparsification selectively transmits only the most significant gradient elements and adaptively adjusts the sparsity level to balance communication efficiency and learning performance \cite{han2020adaptive,lin2023joint,vaishnav2024communication}. Adaptive client scheduling strategies further optimize participation frequency based on client availability and network reliability, achieving more efficient use of communication resources \cite{xu2021online,cui2021client,10197174,yang2023client,11215609}. Asynchronous update schemes aim to optimize model performance under asynchronous client participation through staleness-aware model aggregation or client selection, thereby mitigating the adverse impact of stragglers on training efficiency \cite{wang2022asynchronous,xu2023asynchronous,10346989}.
\par
Despite these advances, communication and computation costs remain a dominant bottleneck in large-scale federated networks, particularly when high-dimensional deep neural models are trained over unstable wireless or edge infrastructures \cite{luo2021cost,xu2023accelerating}. Specifically, traditional FL assumes that clients can execute full forward–backward propagation locally, which may be infeasible for clients with constrained computation or memory. Moreover, as the model depth increases, the uplink communication cost grows proportionally with the number of parameters transmitted per round, imposing excessive strain on bandwidth-limited clients. As FL applications extend from lightweight models to more complex architectures such as deep neural networks and Transformers \cite{vaswani2017attention}, these issues become increasingly pronounced.
\subsection{Split Learning}
SL addresses the compute bottleneck by splitting the model at a cut-layer \cite{vepakomma2018split,11176835}. In a typical SL configuration, each participating client retains the front portion of the model, comprising the initial convolutional or feature-extraction layers, while the server hosts the remaining deeper layers responsible for higher-level representation learning and classification. During training, a client performs a partial forward propagation on its local data up to the cut layer and transmits the resulting intermediate activations (often referred to as “smashed data”) to the server. The server continues the forward pass through its portion of the model to compute the final output and evaluate the loss function using ground-truth labels. Subsequently, the server initiates backward propagation by calculating gradients with respect to the received activations and sends these gradients back to the client. The client then completes the backward pass through its local layers to update its parameters. This client–server coordination completes one round of training, and the process is repeated until convergence. 
\par 
While SL markedly reduces the computational burden on clients, its inherently sequential workflow, requiring each client to wait for the server to complete its computation before proceeding, severely restricts overall training efficiency, thereby constraining its applicability in large-scale collaborative scenarios \cite{thapa2022splitfed}.

\subsection{Split Federated Learning}
To overcome the limitations of the sequential update process in SL, researchers have proposed SFL, a hybrid paradigm that integrates the parallel model aggregation mechanism of FL with the model-splitting framework of SL \cite{turina2021federated}. In traditional SFL frameworks, such as SplitFed, clients perform local training in parallel, and their model updates are subsequently aggregated in the same manner as FL to update the global client-side model \cite{thapa2022splitfed}. Specifically, the SFL training process unfolds in three main stages within each global communication round. First, each client executes forward propagation up to the cut layer on its local data and transmits intermediate activations to the server. Second, the server performs forward and backward propagation on its portion of the model using the received activations, computes gradients, and sends gradients back to each client. Finally, clients complete the backward pass and update their local client-side models. Once local training epochs are finished, all client-side models are aggregated by the central server, following the FedAvg principle. The updated global client-side model is then redistributed to clients for the next communication round. This architecture effectively alleviates the computational bottleneck of traditional FL, as each client is responsible for training only a portion of the model \cite{singh2019detailed}. Meanwhile, by enabling parallel client participation, it mitigates the sequential dependency inherent in SL. Furthermore, by aggregating gradient updates from multiple clients, SFL achieves better generalization performance than pure SL systems, which typically train each client–server pair independently \cite{yang2024gas}. Consequently, SFL has attracted increasing attention across domains that demand both privacy and efficiency, such as federated medical imaging, edge-based anomaly detection, and intelligent transportation systems \cite{yang2022robust}. 
\par
Subsequent studies have advanced SFL along three main directions: (1) further reducing communication costs via activation quantization and pruning; (2) enhancing security through differential privacy and inversion-aware regularization; and (3) mitigating the heterogeneity of compute and bandwidth by leveraging generative activations, elastic partitioning, or ring-based scheduling \cite{yang2024gas,10522472}. Specifically, the first direction focuses on reducing communication costs between clients and the server, which remain significant due to the frequent transmission of high-dimensional activations and their corresponding gradients \cite{wu2023communication}. To alleviate this burden, researchers have explored activation quantization and adaptive pruning techniques that compress intermediate activations without substantially degrading model accuracy. For instance, FedLite leverages product quantization combined with a gradient correction mechanism to effectively compress activations and restore accurate gradients \cite{wang2022fedlite}. 
\par
The second research direction focuses on enhancing security and privacy preservation, addressing the potential vulnerability of SFL to activation inversion attacks. On one hand, researchers employ differential privacy mechanisms that inject carefully calibrated random noise into activations before transmission, effectively obfuscating individual data contributions and preventing the server or other participants from inferring sensitive information \cite{thapa2022splitfed}. On the other hand, inversion-aware adversarial objectives are introduced to improve robustness against model inversion attacks by jointly optimizing two components: (1) a strong inversion model is jointly trained to mimic the attacker’s reconstruction process, and its reconstruction quality is used as a regularization term to discourage easily invertible representations \cite{li2022ressfl,vepakomma2020nopeek}; and (2) bottleneck layers are designed to compress the intermediate feature space, thereby reducing information leakage while maintaining high model accuracy \cite{li2022ressfl}.
\par
The third research direction aims to address client heterogeneity in computation and bandwidth resources. To this end, researchers have proposed adaptive frameworks that employ elastic model-splitting strategies, dynamically adjusting the cut layer according to client hardware configurations \cite{xu2023accelerating,10522472}. In addition, ring-based scheduling mechanisms and asynchronous update schemes are introduced to accommodate unbalanced client participation \cite{shen2023ringsfl}. More recently, generative activations have been introduced to mitigate activation drift caused by asynchronously participating clients \cite{yang2024gas}.
\par
Despite these advances, research on label distribution skew in SFL is still in its infancy: Minibatch-SFL \cite{huang2023minibatch} and MergeSFL \cite{liao2024mergesfl} mitigate the impact of local label distribution skew by centrally training the server-side model and recombining activations, while CS-SFL introduces class-balanced client selection to address global label distribution skew in wireless networks \cite{10910050}.

\subsection{Summary and Motivation for SCALA}
The above discussion highlights the evolutionary progression from FL to SL, and ultimately to SFL. FL provides a decentralized training paradigm that enables large-scale collaboration while preserving client data privacy. However, its conventional architecture assumes that each client possesses sufficient computational resources to train a complete model locally and has adequate uplink bandwidth to frequently exchange high-dimensional parameters with the server. In practice, these assumptions rarely hold, particularly when deploying deep convolutional or transformer-based networks on mobile or IoT devices, leading to performance degradation and communication inefficiency. To alleviate the computational burden on clients, SL decomposes the model into client-side and server-side segments, enabling lightweight on-client computation while delegating deeper model layers to a centralized server. Although SL achieves substantial efficiency gains, it introduces a strict sequential dependency between clients and the server, making the training process inherently non-parallel and thus inefficient in multi-client settings. To integrate the advantages of both paradigms, SFL has been proposed as a hybrid framework. SFL allows multiple clients to perform split training in parallel and subsequently aggregates their client-side models in a federated manner. By enabling parallel training, SFL effectively overcomes the sequential bottleneck of SL while reducing the computational load on clients compared with FL. 
\par
Nevertheless, research on label distribution skew within the SFL framework remains in its infancy. Table \ref{Comparison2} summarizes the related algorithms, from which it can be observed that existing studies lack lightweight strategies capable of simultaneously addressing both local and global label distribution skew. To bridge this gap, we proposes SCALA, which enhances the capability of SFL in handling label distribution skew. SCALA trains the server-side model using concatenated activations, thereby mitigating the inherent inconsistency between local and global models observed in FL frameworks \cite{li2020federated,lee2022preservation,zhang2022federated}, particularly when local clients lack samples from certain classes. In addition, unlike previous works \cite{zhang2023fed,li2024adafl,10910050} that primarily rely on client-selection strategies to address global label imbalance, SCALA introduces a novel logit adjustment mechanism that directly calibrates the loss function, effectively alleviating the adverse effects of long-tailed label distributions.
Note that SCALA relies on frequent activation exchange similar to traditional SFL, leading to high communication costs. Since communication optimization is beyond the scope of this paper, designing efficient variants (e.g., via activation compression) remains a valuable direction for future research.

\section{Proposed Method: SCALA}
\begin{figure}[t]
\centering
 \includegraphics[width=8cm]{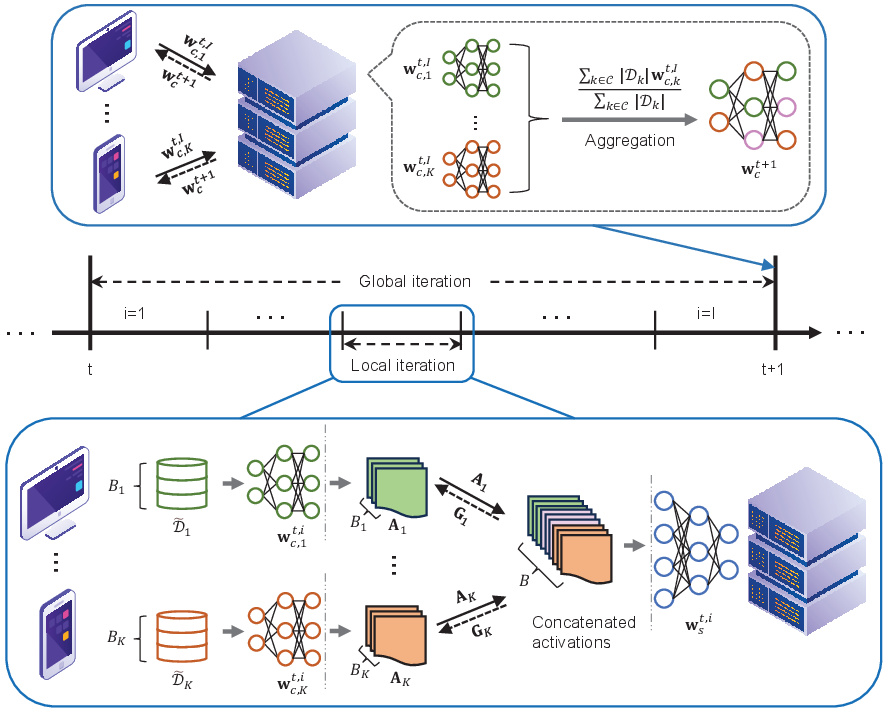}
\caption{Concatenated activations enabled SFL framework. All participating clients synchronously execute local iterations, where the client-side models are updated locally and sent to the server for aggregation at the $I$-th local iteration. The server-side model is centrally updated in each local iteration, where the input is concatenated from the activations uploaded by participating clients. }
\label{systemmodel}
\end{figure}

In this section, we first present the motivations behind the two core modules in SCALA, namely concatenated activations and logit adjustments. We then describe the concatenated activations enabled SFL framework, followed by proposing the loss functions with logit adjustments for the server-side and client-side models.


\subsection{Motivations Behind the Two Proposed Modules in SCALA}

\textbf{Addressing Local Label Distribution Skew through Concatenated Activations.} The concatenation of activations tackles the issue of missing classes caused by local label distribution skew. When local label distributions are highly skewed, clients tend to learn biased local models, and this bias amplifies with depth, most severely distorting the classifier \cite{NEURIPS2021_2f2b2656}. By concatenating the activations uploaded by participating clients and training the deeper layers centrally, the server effectively forms a mini-batch whose class support is the union of the participating clients’ classes. Each client thus contributes complementary feature representations,  which mitigates the adverse effect of isolated or missing classes.
\par
\textbf{Handling Global Label Distribution Skew via Logit Adjustments.} Despite resolving the challenge of local label distribution skew, the concatenated activations introduce a global label distribution skew when only a subset of clients participates in training. With partial participation, the class histogram observed at the server in any round reflects participation bias rather than the true population; even after concatenation it can be long-tailed, as shown in Fig. \ref{concat}. Standard cross-entropy is implicitly frequency-aware, where the expected misclassification penalty for a class scales with its prior, causing the model to overfit frequent classes and under-train rare ones. To counteract this, SCALA employs logit adjustments in the loss functions for both server-side and client-side models, ensuring balanced training that accommodates the skewed label distribution and enhances predictive performance across all labels.

\subsection{Concatenated Activations Enabled SFL Framework}
\subsubsection{Preliminaries}
We consider $K$ clients participating in the training indexed by $\mathcal{K}=\{1,2,\cdots,K\}$, each holding local data $\mathcal{D}_k$ of size $|\mathcal{D}_k|$. The goal of the clients is to learn a global model $\mathbf{w}$ with the help of the server. In SFL, the global model $\mathbf{w}$ with $N$ layers is split into the client-side model $\mathbf{w}_c$ with $N_c$ layers, which is collaboratively trained and updated by the clients and the server, and the server-side model $\mathbf{w}_s$ with $N_s$ layers, which is stored and trained solely on the server. The global model $\mathbf{w}$ is obtained by minimizing the averaged loss over all participating clients $\mathcal{C}$ as
\begin{align}
\min_{\mathbf{w}}F(\mathbf{w}):=\frac{\sum_{k\in \mathcal{C}}|\mathcal{D}_k|F_k(\mathbf{w})}{\sum_{k\in \mathcal{C}}|\mathcal{D}_k|},
\end{align}
where $F_k(\mathbf{w})$ denotes the local expected loss function, and it is unbiasedly  estimated by the empirical loss $f_k(\cdot)$, i.e., $\mathbb{E}_{\tilde{\mathcal{D}}_k\thicksim\mathcal{D}_k}f_k(\mathbf{w},\tilde{\mathcal{D}}_k)=F_k(\mathbf{w})$. In addition, under the setting of model splitting, the empirical loss of client $k$ is formulated as
\begin{align}
    f_k(\mathbf{w}_c) = l(\mathbf{w}_{s};h(\mathbf{w}_{c}; \tilde{\mathcal{D}}_k)),
\end{align}
where $h$ is the client-side function mapping the input data to the activation space and $l$ is the server-side function mapping the activation to a scalar loss value.
\subsubsection{Algorithm Description}
As shown in Fig. \ref{systemmodel}, we propose a concatenated activations enabled SFL framework. The server-side model is updated based on the concatenated activations, which we refer to as the parallel SL phase. And the client-side model is updated through aggregation, which we refer to as the FL phase. 
At the start of training, the server will set the minibatch size $B$, the number of local iterations $I$ indexed by $i$, and the number of global iterations $T$ indexed by $t$. Given a set of clients $\mathcal{K}=\{1,2,\cdots,K\}$, concatenated activations enabled SFL will output a global model $\mathbf{w}^T=[\mathbf{w}_s^T, \mathbf{w}_c^T]$ after $T$ global iterations. Taking the $t$-th global iteration as an example, we provide a detailed description of the training process in the following, where the superscript $t$ is omitted for notational brevity.
\par
\textbf{Client-side models deployments.} At the beginning of a global iteration, the server randomly selects a subset of clients $\mathcal{C}$ and sets the minibatch size for each client $k\in \mathcal{C}$ in proportion to its local data size $|\mathcal{D}_k|$ as
\begin{align}\label{eqn:bk}
    B_k = \frac{|\mathcal{D}_k|B}{\sum_{k\in \mathcal{C}} |\mathcal{D}_k|}, 
\end{align}
so that the number of local iterations can be synchronized.
Then each client $k\in \mathcal{C}$ downloads the client-side model $\mathbf{w}_c$ and the minibatch size $B_k$ from the server. 
\par
\begin{figure}[t]
\centering
 \includegraphics[width=6cm]{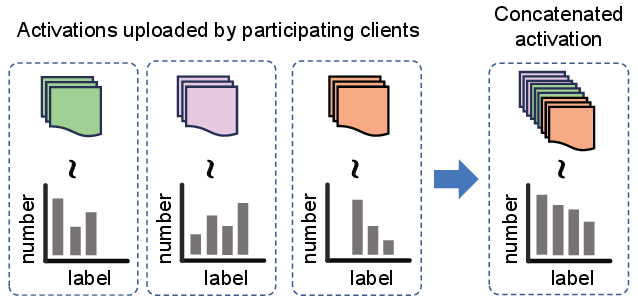}
\caption{The process of concatenating activations. The activations uploaded by participating clients are concatenated to serve as the input for the server-side model, effectively mitigating the issue of missing classes under a highly skewed local label distribution.}
\label{concat}
\end{figure}
\textbf{Parallel SL phase.} The selected clients and the server collaborate to perform parallel SL over $I$ local iterations, during which the server-side model is updated at each local iteration. It is worth noting that the server-side model is updated at each local iteration in SCALA, whereas that is updated at each global iteration, i.e., at the $I$-th local iteration, in traditional SFL. The detailed description of the process is as follows:
\begin{itemize}
\item \textbf{Step 1 (Forward propagation of client-side models):} Each client $k\in \mathcal{C}$ randomly selects a minibatch $\tilde{\mathcal{D}}_k \in \mathcal{D}_k $ of size $B_k$ as  $\tilde{\mathcal{D}}_k = \{(\mathbf{x}_1,y_1),(\mathbf{x}_2,y_2),\cdots,(\mathbf{x}_{B_k},y_{B_k})\}$, where the sample set of the minibatch is denoted by $\mathbf{X}_k=\{\mathbf{x}_1,\mathbf{x}_2,\cdots,\mathbf{x}_{B_k}\}$ and the label set of the minibatch is denoted by $\mathbf{Y}_k=\{y_1,y_2,\cdots,y_{B_k}\}$. Then each client $k\in \mathcal{C}$ performs forward propagation based on $\mathbf{w}_c$ and computes the activations $\mathbf{A}_k$ of the last layer of the client-side model as
\begin{align}\label{eqn:Ak}
    \mathbf{A}_k = h(\mathbf{w}_{c}; \tilde{\mathcal{D}}_k).
\end{align}
Then the activation $\mathbf{A}_k$ and the label set $\mathbf{Y}_k$ are uploaded to the server.
\item \textbf{Step 2 (Backpropagation and update of server-side model):} As shown in Fig. \ref{concat}, the server receives the activation sets $\{\mathbf{A}_k\}_{k \in \mathcal{C}}$ from the selected clients and concatenates them as $\mathbf{A}  \triangleq\bigcup_{k\in \mathcal{C}}\mathbf{A}_k$.
The corresponding labels for the activations are also concatenated as $\mathbf{Y} \triangleq\bigcup_{k\in \mathcal{C}}\mathbf{Y}_k$.
Then, the server performs forward propagation for the concatenated activation $\mathbf{A}$, and updates the server-side model as
\begin{align}\label{eqn:up_server}
    \mathbf{w}_s^{i+1}=\mathbf{w}_s^{i}-\eta \nabla_{\mathbf{w}_s^{i}} l(\mathbf{w}_s^{i};\mathbf{A},\mathbf{Y}),
\end{align}
where $\eta$ is the learning rate. Additionally, the server computes the backpropagated gradients as
\begin{align}\label{eqn:back}
    \mathbf{G}_k=\nabla_{\mathbf{A}_k} l(\mathbf{w}_{s}^{i};\mathbf{A}_k,\mathbf{Y}_k),
\end{align}
which is then sent back to each client $k\in \mathcal{C}$.
 \item \textbf{Step 3 (Update of client-side models):} Each client $k\in \mathcal{C}$ performs backpropagation and updates the local client-side model using chain rule \cite{rumelhart1986learning} as
\begin{align}\label{eqn:back2}
    &\mathbf{w}_{c,k}^{i+1}=\mathbf{w}_{c,k}^{i} \nonumber\\
    &~~-\eta \nabla_{\mathbf{A}_k} l(\mathbf{w}_{s}^{i};\mathbf{A}_k,\mathbf{Y}_k) \nabla_{\mathbf{w}_{c,k}^{i}} h(\mathbf{w}_{c,k}^{i};\mathbf{X}_k).
\end{align}
\end{itemize}
\par
\textbf{FL phase.} After $I$ local iterations, each client $k\in \mathcal{C}$ uploads the client-side model $\mathbf{w}_{c,k}^I$ to the server. Then the server aggregates the client-side models as 
\begin{align}\label{eqn:agg}
    \mathbf{w}_{c} = \frac{\sum_{k\in \mathcal{C}}|\mathcal{D}_k|\mathbf{w}_{c,k}^I}{\sum_{k\in \mathcal{C}}|\mathcal{D}_k|}.
\end{align}

\begin{algorithm}[t]
\caption{Training Process of SCALA}
\label{Algorithm:2}
\begin{algorithmic}[1]
\STATE {\bfseries Input:} Batchsize $B$, number of global iterations $T$, number of local iterations $I$, client set $\mathcal{K}=\{1,\cdots, K\}$, and local data $\mathcal{D}_k$ for $k\in\mathcal{K}$ 
\STATE {\bfseries Output:} Global model $\mathbf{w}^T=[\mathbf{w}_s^T, \mathbf{w}_c^T]$
\STATE {\bfseries Initialize:} Global model $\mathbf{w}^0=[\mathbf{w}_s^0, \mathbf{w}_c^0]$
\FOR{$t=1,\cdots$, $T$}
    \STATE Server randomly selects a subset of clients $\mathcal{C}^t$
    \STATE Server computes minibatch size $B_k$ of each client $k\in \mathcal{C}^t$ via (3)
    \STATE Server sends the client-side model $\mathbf{w}_c^{t}$ and minibatch size $B_k$ to each client $k\in\mathcal{C}^t$
    \FOR{$i=1,\cdots$, $I$}
        \FOR{each client $k\in \mathcal{C}^t$ \textbf{in parallel}}
            \STATE Sample a minibatch $\tilde{\mathcal{D}}_k \in \mathcal{D}_k $ of size $B_k$
            \STATE Compute activation $\mathbf{A}_k$ via (4)
            \STATE Upload activation $\mathbf{A}_k$ and label set $\mathbf{Y}_k$ to the server
        \ENDFOR
        \STATE Server concatenates $\{\mathbf{A}_k\}_{k\in\mathcal{C}^t}$ into the input and performs forward propagation
        \STATE Server adjusts the loss function via (12) and updates the server-side model $\mathbf{w}_s^{t,i}$ via (5) 
        \STATE Server adjusts the loss function via (13) and computes backpropagated gradients $\mathbf{G}_k$ of each client $k\in \mathcal{C}^t$ via (6)
        \STATE Server sends backpropagated gradients to each client $k\in \mathcal{C}^t$
        \FOR{each client $k\in \mathcal{C}^t$ \textbf{in parallel}}
            \STATE Perform backpropagation and update $\mathbf{w}_{c,k}^{t,i}$ via (7)
        \ENDFOR
    \ENDFOR
    \FOR{each client $k\in \mathcal{C}^t$ \textbf{in parallel}}
        \STATE Upload $\mathbf{w}_{c,k}^{t,I}$ to the server
    \ENDFOR
    \STATE Server aggregates $\{\mathbf{w}_{c,k}^{t,I}\}_{k\in\mathcal{C}^t}$ via (8)
\ENDFOR
\end{algorithmic}
\end{algorithm}

\subsection{Logit Adjustments in Loss Functions }
While the manner of concatenating activations offers a solution to the challenge of missing classes caused by local label distribution skew, it unveils a new obstacle: the distribution of concatenated labels often exhibits an imbalanced long-tail pattern due to the partial client participation, as shown in Fig. \ref{concat}. This distribution poses a challenge for common loss functions, as their misclassification error sensitivity varies with label frequency, where high-frequency labels result in a smaller misclassification error, leading the model to focus on improving predictive accuracy for high-frequency labels. To address this, we propose SCALA, which employs loss functions with logit adjustments for both server-side and client-side models.
\par
Specifically, consider the data distribution represented by $P(\mathbf{x},y)=P(\mathbf{x}\mid y)P(y)$. For a given data point $D = (\mathbf{x},y)$, the logit of label $y$ is denoted as $s_y(\mathbf{x})$ and the goal of standard machine learning is to minimize the misclassification error $P_{x,y}(y\neq\hat{y})$, where $\hat{y}=\arg\max_y s_y(\mathbf{x})$ is the predicted class. Since $P(y\mid \mathbf{x})\propto P(\mathbf{x}\mid y)P(y)$ according to Bayes' theorem, the class with a high $P(y)$ will achieve a reduced misclassification error. To tackle this, the balanced error is proposed by averaging each of the per-class error rates \cite{menon2021longtail}, defined as $\frac1M\sum_{y=1}^M P_{\mathbf{x}|y}(y\neq\hat{y})$, where $M$ is the total number of classes. 
In this manner, the native class-probability function $P(y\mid \mathbf{x})\propto P(\mathbf{x}\mid y)P(y)$ is calibrated to a balanced class-probability function $P^\text{bal}(y\mid \mathbf{x})\propto \frac1M P(\mathbf{x}\mid y)$, so that the varying $P(y)$ no longer affects the result of the prediction. 
\par
In this paper, we choose softmax cross-entropy to predict class probabilities and the predicted probability of label $y$ for input $\mathbf{x}$ is
\begin{align}\label{eqn:p_y}
    p_y(\mathbf{x}) = \frac{e^{s_y(\mathbf{x})}}{\sum_{y^{\prime}=1}^M e^{s_{y^{\prime}}(\mathbf{x})}}, 
\end{align}
where $p_y(\mathbf{x})\propto e^{s_y(\mathbf{x})}$ is regarded as the estimates of $P(y\mid \mathbf{x})$. Then the surrogate loss function of misclassification error can be formulated as 
\begin{align}
    g(y, s(\mathbf{x})) = -\log\left[\frac{e^{s_y(\mathbf{x})}}{\sum_{y^{\prime}=1}^M e^{s_{y^{\prime}}(\mathbf{x})}}\right].
\end{align}
When using the balanced error, the predicted class in softmax cross-entropy can be rewritten as
\begin{align}\label{eqn:pre_label}
    \arg\max_y P^\text{bal}(y\mid \mathbf{x})=\arg\max_{y}\{s_y(\mathbf{x})-\log P(y)\},
\end{align}
which indicates that the balanced class-probability function tends to reduce the logits of classes with high $P(y)$. Inspired by this, the logits for each class before softmax cross-entropy can be adjusted by \eqref{eqn:pre_label} and the softmax cross-entropy loss function with logit adjustment of the server-side model is formulated as 
\begin{align}\label{eqn:g}
    g^\text{bal}(y, s(\mathbf{x})) = -\log\left[\frac{e^{s_y(\mathbf{x})+\log P_s(y)}}{\sum_{y^{\prime}=1}^M e^{s_{y^{\prime}}(\mathbf{x})+\log P_s(y^{\prime})}}\right],
\end{align}
where $P_s(y)$ is the distribution of concatenated labels.
\par
However, the backpropagated gradients computed by \eqref{eqn:g} are not suitable for updating the client-side models due to the mismatch between the label distribution of individual clients and the concatenated label distribution, that is, $P_k(y)\neq P_s(y)$ for $k\in \mathcal{C}$. Therefore, we introduce logit adjustments for the loss functions of client-side models according to the label distribution of participating clients. 
Specifically, given a participating client $k$ along with the label distribution $P_k(y)$, the softmax cross-entropy loss function with logit adjustment of each participating client-side model is formulated as
\begin{align}\label{eqn:gk}
    g_k^\text{bal}(y, s(\mathbf{x})) = -\log\left[\frac{e^{s_y(\mathbf{x})+\log P_k(y)}}{\sum_{y^{\prime}=1}^M e^{s_{y^{\prime}}(\mathbf{x})+\log P_k(y^{\prime})}}\right].
\end{align}
\par
To summarize, we propose SCALA, which is obtained based on concatenated activations enabled SFL by introducing the loss functions with logit adjustments for server-side and client-side models. Two proposed modules are complementary and naturally aligned with SFL. Concatenated activations leverage the model split: clients execute lightweight early layers, while the server trains the deeper layers on concatenated activations, thereby addressing deep-layer bias without sharing raw data. Logit adjustments exploit the global view of the server: the server estimates the global label distribution from the concatenated activations and applies the logit adjustment to the loss function of the sever-side model, enhancing predictive performance across all labels. Compared with client-selection strategies for global balance \cite{zhang2023fed,li2024adafl,10910050}, SCALA is lightweight and scheduling-free; compared with purely local regularization or loss calibration \cite{li2020federated,lee2022preservation,zhang2022federated}, SCALA remains effective even when some clients lack entire classes, because class support is reintroduced at the activation level via concatenation.
\par
The pseudo-code of SCALA is illustrated in Algorithm \ref{Algorithm:2}. In each global iteration, the selected clients and the server collaborate to perform parallel SL over $I$ local iterations (line 8-21). In parallel SL phase, all selected clients synchronously execute local iterations (line 9-13 and line 18-20). The server-side model is centrally updated in each local iteration (line 14-15), where the loss functions for server-side and client-side models is adjust (line 15 and line 16). In FL phase, the client-side models are sent to the server for aggregation at the $I$-th local iteration (line 22-25). 

\section{Theoretical Analysis for SCALA}
In this section, we first conduct a convergence analysis of concatenated activations enabled SFL, which demonstrates how the centralized training manner of the server-side model mitigate the local label distribution skew. Subsequently, we analyze the classifier update to demonstrate how SCALA improves model performance under global label distribution skew. 

\subsection{Convergence Analysis of Concatenated Activations Enabled SFL}

Our analysis is based on the following assumptions:

\begin{assumption}\label{assumption1}

(Smoothness) The local loss functions are Lipschitz smooth, i.e., for all $\mathbf{w}$ and $\mathbf{w}'$, $\|\nabla_{\mathbf{w}} F_k(\mathbf{w}) - \nabla_{\mathbf{w}} F_k(\mathbf{w}')\| \leq \gamma \|\mathbf{w} - \mathbf{w}'\|$, where $\gamma > 0$ is the Lipschitz constant.

\end{assumption}

\begin{assumption}\label{assumption2}
(Layer-wise Bounded Gradient Variance) For each layer $n$, the stochastic gradient has a bounded variance: $\mathbb{E} \left[ \|\nabla_{\mathbf{w}_n} f_k(\mathbf{w}) - \nabla_{\mathbf{w}_n} F_k(\mathbf{w})\|^2 \right] \leq \frac{\sigma_n^2}{B_k}$.

\end{assumption}

\begin{assumption}\label{assumption3}
(Layer-wise Bounded Dissimilarity) For each layer $n$, the gradient dissimilarity is referred to as the bias caused by data heterogeneity across clients, which is bounded as: $\mathbb{E} \left[  \left\|\nabla_{\mathbf{w}_n} F(\mathbf{w})-\nabla_{\mathbf{w}_n} F_k(\mathbf{w}) \right\|^2 \right] \leq \kappa_n^2$. 
\end{assumption}

\begin{assumption}\label{assumption4}
(Bounded Inconsistency) For the server-side model, the gradient inconsistency caused by the difference between the client-side models is bounded as: $\mathbb{E} \left[ \left\|\nabla_{\mathbf{w}_s} F([\mathbf{w}_s^{t,i}; \mathbf{w}_c^{t,i}]) - \nabla_{\mathbf{w}_s} F([\mathbf{w}_s^{t,i}; \mathbf{w}_{c,k}^{t,i}])\right\|^2 \right] \leq \nu^2$. 
\end{assumption}

Bounded Gradient Variance and Bounded Dissimilarity are widely adopted in the literature for analyzing the convergence of FL algorithms \cite{wang2022unified,cho2023convergence} and we extend them to a layer-wise form to more accurately capture the impact of the split in the analysis. Assumption \ref{assumption4} characterizes the gradient inconsistency arising from the difference between client-side models during the computation of server-side model gradients. Note that when the number of client-side model layers $ N_c $ increases, the difference between the client-side models grows, which generally leads to an increase in the magnitude of this inconsistency. However, since the total gradient inconsistency is the sum of the inconsistencies across all layers of the server-side model, increasing $ N_c $ may also result in a decrease in the magnitude of the overall inconsistency. To account for these effects, we introduce a unified upper bound for gradient inconsistency, rather than a layer-wise bound.

\begin{theorem}\label{theorem1}
Under Assumptions \ref{assumption1}-\ref{assumption4}, denote $F^*=\min_{\mathbf{w}}F(\mathbf{w})$, $\sigma_{\max}^2=\max_{n\in[N]}\{\sigma_n^2\}$ and  $\kappa_{\max}^2=\max_{n\in[N]}\{\kappa_n^2\}$, let $\rho$ be the  client participation ratio, $T$ be the total global iterations and $I$ be the number of local iterations. If the learning rate $\eta = \Theta\left(\frac{1}{\sqrt{TI}}\right)$ and $\eta \leq \frac{\rho}{36\gamma I}$, then the convergence rate of Concatenated Activations Enabled SFL satisfies:
\begin{align}\label{eqn:theorem1}
&\frac{1}{T} \sum_{t=0}^{T-1} \mathbb{E}[\|\nabla_{\mathbf{w}} F(\mathbf{w}^t)\|^2] \leq \mathcal{O} \left( \frac{F(\mathbf{w}^0) - F^*}{\sqrt{T I}}+\nu^2 \right) \nonumber \\ 
&+ \mathcal{O} \left( \frac{N_s \sigma_{\max}^2}{B \sqrt{T I}} \right) + \mathcal{O} \left( \frac{N_c}{\rho\sqrt{T}}\left(\frac{ \sigma_{\max}^2}{B  \sqrt{I}} + \sqrt{I} \kappa_{\max}^2\right) \right).
\end{align}
\end{theorem}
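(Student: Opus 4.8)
The plan is to run a standard non-convex descent analysis, adapted to the asymmetric update structure in which the server block $\mathbf{w}_s$ takes $I$ centralized SGD steps per global round on the concatenated activations while each client block $\mathbf{w}_{c,k}$ takes $I$ local steps before the aggregation in \eqref{eqn:agg}. First I would invoke the $\gamma$-smoothness of Assumption~\ref{assumption1} to obtain the one-round descent inequality $\mathbb{E}[F(\mathbf{w}^{t+1})] \le F(\mathbf{w}^t) - \eta\sum_{i=0}^{I-1}\langle \nabla_{\mathbf{w}}F(\mathbf{w}^t), \mathbb{E}[\mathbf{d}^{t,i}]\rangle + \tfrac{\gamma\eta^2}{2}\,\mathbb{E}[\|\sum_{i=0}^{I-1}\mathbf{d}^{t,i}\|^2]$, where $\mathbf{d}^{t,i}$ stacks the server update direction from \eqref{eqn:up_server} with the aggregated client update direction from \eqref{eqn:back2}. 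Since the gradient splits blockwise into $\nabla_{\mathbf{w}_s}F$ and $\nabla_{\mathbf{w}_c}F$, I would carry the two blocks separately throughout.

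For the inner-product (progress) term I would add and subtract $\nabla_{\mathbf{w}}F(\mathbf{w}^t)$ in each local gradient so that the dominant piece produces $-\eta I\|\nabla_{\mathbf{w}}F(\mathbf{w}^t)\|^2$, and the remainders are controlled by two error sources: the \emph{local drift} $\mathbb{E}[\|\mathbf{w}^{t,i}-\mathbf{w}^t\|^2]$, handled via smoothness and Young's inequality, and --- for the server block only --- the \emph{inconsistency} of Assumption~\ref{assumption4}, which contributes the irreducible $\nu^2$ because the backpropagated gradients in \eqref{eqn:back} are evaluated on per-client activations $\mathbf{A}_k=h(\mathbf{w}_{c,k};\cdot)$ rather than on the aggregated client-side model. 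The key subproblem is then the recursive drift bound: unrolling the local recursion and applying Assumptions~\ref{assumption2}--\ref{assumption3} yields a geometric-type recurrence whose fixed point is finite precisely when $\eta I\gamma$ is small, which is exactly the role of the stepsize condition $\eta \le \rho/(36\gamma I)$. Here the blocks diverge: the server is trained on the concatenated minibatch of total size $B$ and is never aggregated, so its drift carries no partial-participation penalty, whereas the client blocks drift under heterogeneity and are sampled at rate $\rho$, injecting a $1/\rho$ factor through the variance of the partial-participation aggregation.

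The layer-wise form of Assumptions~\ref{assumption2}--\ref{assumption3} is what produces the explicit $N_s$ and $N_c$ prefactors. Summing the per-layer variance $\sigma_n^2/B_k$ over the $N_s$ server layers bounds the server second-order term by $\mathcal{O}(N_s\sigma_{\max}^2/B)$, while summing both the variance and the dissimilarity $\kappa_n^2$ over the $N_c$ client layers bounds the client drift contribution by $\mathcal{O}(N_c(\sigma_{\max}^2/B + I\kappa_{\max}^2))$, the extra $I$ on the dissimilarity reflecting the bias that accumulates over $I$ local steps. Substituting these into $\tfrac{\gamma\eta^2}{2}\,\mathbb{E}[\|\sum_i\mathbf{d}^{t,i}\|^2]$ and combining with the drift terms in the progress bound accounts for the two distinct $\sigma_{\max}^2$ contributions landing on the server and client terms respectively.

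Finally I would telescope the one-round inequality over $t=0,\dots,T-1$, divide by $\eta I T$, and substitute $\eta = \Theta(1/\sqrt{TI})$ together with the constraint $\eta \le \rho/(36\gamma I)$ to absorb the higher-order residuals. The $F(\mathbf{w}^0)-F^*$ and server-variance terms then inherit the $1/\sqrt{TI}$ rate; the client-drift terms, carrying the extra $1/\rho$ and the accumulated $I\kappa_{\max}^2$, reduce after normalization to the group $\mathcal{O}(N_c/(\rho\sqrt{T})(\sigma_{\max}^2/(B\sqrt{I}) + \sqrt{I}\kappa_{\max}^2))$, whose heterogeneity piece $\sqrt{I}\kappa_{\max}^2/\sqrt{T}$ grows in $I$ and decays in $T$; and $\nu^2$, lacking any decaying prefactor, survives as a constant floor. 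The main obstacle I anticipate is the drift recursion under partial participation: propagating the sampling variance so that $\rho$ appears only in the client denominator while the centrally-trained server term stays $\rho$-free, and simultaneously keeping the layer-wise bookkeeping so that $N_s$ and $N_c$ attach to the correct terms, is the delicate step that distinguishes this split, concatenation-based analysis from a vanilla FedAvg convergence proof.
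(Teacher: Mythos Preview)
Your proposal is correct and follows essentially the same route as the paper: a blockwise smoothness descent with the server and client parts handled separately, drift bounds controlled by the stepsize condition, the inconsistency $\nu^2$ entering only through the server block as a non-vanishing floor, and the $1/\rho$ factor arising from the Bernoulli variance of partial participation on the client side alone, after which telescoping and the choice $\eta=\Theta(1/\sqrt{TI})$ give the stated rates. The only cosmetic difference is that the paper uses the polarization identity $\langle a,b\rangle=\tfrac12(\|a\|^2+\|b\|^2-\|a-b\|^2)$ for the inner-product term rather than your add-and-subtract plus Young's inequality, and it imports the client drift bound $\mathbb{E}\|\mathbf{w}_c^t-\mathbf{w}_{c,k}^{t,i}\|^2$ directly from prior FedAvg analyses rather than re-deriving the recursion, but these lead to the same inequalities.
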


\begin{proof}
See Appendix \ref{proof_theorem1}.
\end{proof}
The second term on the right-hand side of \eqref{eqn:theorem1} results from the stochastic gradient error introduced by the server-side model, with its magnitude decreasing as the batch size $ B $ increases. The third term arises from both the stochastic gradient error and gradient dissimilarity error introduced by the client-side model. Specifically, the gradient dissimilarity term, $ \mathcal{O} \left( \frac{N_c \sqrt{I} \kappa_{\max}^2}{\rho \sqrt{T}} \right) $, reflects the additional error due to data heterogeneity across clients, capturing the effect of local label distribution skew on convergence performance. Furthermore, the errors introduced by the client-side model are influenced by the global label distribution skew, with a lower client participation rate $ \rho $ limiting the convergence performance. 

In summary, we derive the following insights: First, concatenated activations enabled SFL achieves a sublinear convergence rate of $\mathcal{O} \left( \frac{1}{ \sqrt{T}} \right)$. Second, increasing the number of server-side model layers $N_s$  can mitigate the impact of local label distribution skew by reducing the gradient dissimilarity term. Third, increasing the client participation rate $\rho$ can alleviate the effect of global label distribution skew on convergence performance.
\subsection{Analysis on Update Process of the Classifier of SCALA}

We denote $\pi(\mathbf{x})$ as the model feature for a given input $(\mathbf{x},y)$ and $\zeta=[\zeta_1,\zeta_2, \cdots, \zeta_M]$ as the weight matrix of the classifier, where $\zeta_y$ represents the classifier of label $y$. Then the logit of label $y$ for a given input $(\mathbf{x},y)$ is calculated as $s_y(\mathbf{x}) = \zeta_y\cdot\pi(\mathbf{x})$, where $\cdot$ represents the dot product operator.
Denote $\Delta \zeta_y = \zeta_y^{\text{new}}-\zeta_y^{\text{old}}$ as the update of the classifier. According to \eqref{eqn:p_y}, the update of the classifier should increase the logit, that is, make $\Delta \zeta_y\cdot\pi(\mathbf{x})>0$. Then $p_y(\mathbf{x})$ will increase to improve the prediction accuracy. We concentrate on the impact of label distribution on the update process of a classifier. To this end, we consider an ideal model feature extraction layer which ensures that the features of different labels are orthogonal to each other, as demonstrated below:
\begin{assumption}\label{assumption5}
Given a dataset $\mathcal{D}$ with $M$ classes, the model features $\pi(\mathbf{x})$ satisfy $\pi_y \cdot \pi_{y^{\prime}} = 0$ for all $y \neq y^{\prime}$, where $\pi_y$ is the averaged model features of label $y$ defined as $\pi_y=\frac{1}{|\mathcal{D}_y|}\sum_{\mathbf{x}_i\in\mathcal{D}_y}\pi(\mathbf{x}_i)$ and $\mathcal{D}_y$ is the subset of dataset $\mathcal{D}$ with label $y$. 
\end{assumption}

Then we propose the following theorem:

\begin{theorem}\label{theorem2}
Under Assumptions \ref{assumption5}, when $P(y)$ approaches $0$, the update of the logit of label $y$ satisfies
\begin{align}
\Delta\zeta^{\text{bal}}_y\cdot\pi_y > \Delta\zeta_y\cdot\pi_y,
\end{align}
when $P(y)$ approaches $1$, the update of the logit of label $y$ satisfies
\begin{align}
\Delta\zeta^{\text{bal}}_y\cdot\pi_y < \Delta\zeta_y\cdot\pi_y 
\end{align}
\end{theorem}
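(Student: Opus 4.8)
The plan is to compute both classifier updates explicitly under one gradient-descent step and then compare their projections onto $\pi_y$. First I would write the per-sample gradient of the softmax cross-entropy loss: for a sample $(\mathbf{x}_i,c_i)$, since $s_y(\mathbf{x})=\zeta_y\cdot\pi(\mathbf{x})$ and the additive term $\log P(y)$ in $g^{\text{bal}}$ is constant in $\zeta_y$, the derivative with respect to the classifier weight of label $y$ is $(p_y(\mathbf{x}_i)-\mathbf{1}\{c_i=y\})\,\pi(\mathbf{x}_i)$. Aggregating over the dataset, the update is $\Delta\zeta_y=-\eta\sum_i(p_y(\mathbf{x}_i)-\mathbf{1}\{c_i=y\})\pi(\mathbf{x}_i)$, and the quantity of interest is $\Delta\zeta_y\cdot\pi_y$.

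Next I would group this sum by class and project onto $\pi_y$. Invoking the ideal-feature-extractor modeling behind Assumption \ref{assumption5} — namely that every sample in class $c$ carries (approximately) the class-average feature $\pi_c$, so that the predicted probability $p_y(\mathbf{x}_i)$ is constant, equal to some $p_{y\mid c}$, across that class — the contribution of class $c$ collapses to $(p_{y\mid c}-\mathbf{1}\{c=y\})|\mathcal{D}_c|\,\pi_c$. Taking the dot product with $\pi_y$ and applying the orthogonality $\pi_c\cdot\pi_y=0$ for $c\neq y$ annihilates every cross-class term, leaving $\Delta\zeta_y\cdot\pi_y=\eta|\mathcal{D}_y|(1-p_{y\mid y})\|\pi_y\|^2$. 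Repeating the identical computation for $g^{\text{bal}}$, whose only change is the adjusted probability $p_{y\mid y}^{\text{bal}}=P(y)e^{s_y}/\sum_{y'}P(y')e^{s_{y'}}$, yields $\Delta\zeta_y^{\text{bal}}\cdot\pi_y=\eta|\mathcal{D}_y|(1-p_{y\mid y}^{\text{bal}})\|\pi_y\|^2$ with the same positive prefactor $\eta|\mathcal{D}_y|\|\pi_y\|^2$.

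The comparison then reduces to the sign of $p_{y\mid y}-p_{y\mid y}^{\text{bal}}$, both evaluated at the same current classifier (hence the same logits $s_{y'}$), so the two probabilities differ only through the $P(y')$ factors inserted by the adjustment. I would close by taking the two limits explicitly: as $P(y)\to 0$ the numerator of $p_{y\mid y}^{\text{bal}}$ vanishes while its denominator stays bounded below, so $p_{y\mid y}^{\text{bal}}\to 0<p_{y\mid y}$ and therefore $\Delta\zeta_y^{\text{bal}}\cdot\pi_y>\Delta\zeta_y\cdot\pi_y$; as $P(y)\to 1$ the competing masses $P(y')\to 0$ force $p_{y\mid y}^{\text{bal}}\to 1>p_{y\mid y}$, reversing the inequality. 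This recovers exactly the two claimed cases and matches the intended reading that logit adjustment amplifies the update for rare labels and damps it for frequent ones.

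The main obstacle is the projection step: Assumption \ref{assumption5} supplies only orthogonality of the \emph{class-averaged} features, whereas the raw gradient is a sum over \emph{individual} features $\pi(\mathbf{x}_i)$. The argument hinges on promoting this to within-class feature homogeneity, so that $\sum_{i:c_i=c}\pi(\mathbf{x}_i)=|\mathcal{D}_c|\pi_c$ with a common per-class probability; I would state this explicitly as the operative content of the ``ideal feature extractor'' and check that the subsequent orthogonal projection is then exact. A secondary point needing care is that the limits must be taken with $\sum_{y'}P(y')=1$ held fixed, so that $P(y)\to 1$ genuinely drives the remaining masses to zero, and to note that although the shared prefactor $|\mathcal{D}_y|\propto P(y)$ itself tends to zero as $P(y)\to 0$, it is nonnegative and common to both sides, so the direction of the inequality is governed solely by the $(1-p)$ factors.
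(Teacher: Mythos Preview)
Your proposal is correct and follows essentially the same route as the paper: both compute the two classifier updates by decomposing the gradient class by class, invoke within-class feature homogeneity (the paper appeals to ``Property 1'' from the FedLC reference for this, exactly the step you flag as the main obstacle), apply the orthogonality of Assumption~\ref{assumption5} to kill the cross-class contributions, and obtain closed forms proportional to $P(y)(1-p_{y\mid y})\|\pi_y\|^2$ and $P(y)(1-p^{\text{bal}}_{y\mid y})\|\pi_y\|^2$ respectively (the paper's Lemmas~\ref{lemma1} and~\ref{lemma2}).

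The difference lies only in the final comparison. You reduce the inequality to the sign of $p_{y\mid y}-p^{\text{bal}}_{y\mid y}$ and take the two limits directly, which works without any further hypotheses since $p^{\text{bal}}_{y\mid y}\to 0$ as $P(y)\to 0$ and $p^{\text{bal}}_{y\mid y}\to 1$ as $P(y)\to 1$ while $p_{y\mid y}\in(0,1)$ is fixed. The paper handles the $P(y)\to 1$ case the same way, but for the $P(y)\to 0$ case it imposes the extra simplifying assumption that the remaining classes are uniformly distributed, $P(y')=(1-P(y))/(M-1)$, and then compares the \emph{derivatives} $\partial(\Delta\zeta_y\cdot\pi_y)/\partial P(y)$ at $P(y)=0$ rather than the values themselves. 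Your argument is therefore a bit cleaner and avoids that auxiliary uniformity assumption; otherwise the two proofs coincide.
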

\begin{proof}
See Appendix \ref{proof_theorem2}.
\end{proof}
Theorem \ref{theorem2} provides us with an insight: a loss function with logit adjustment actually works by sacrificing the recognition for high-frequency labels to enhance the recognition of low-frequency labels.

\section{Experiments}

\subsection{Experiment Setting}
\textbf{Implementation details.} 
Unless otherwise stated, we set up $100$ clients and the server randomly selects $\rho=10\%$ of the total clients at each global iteration. We use AlexNet \cite{krizhevsky2012imagenet} and ResNet-18 \cite{he2016deep} as the model. For AlexNet, the split point is selected at the second convolutional layer, while for ResNet-18, it is selected at the first residual block. The size of minibatch $B$ for the server-side model is $320$ and the number of local SGD iterations is $20$. The model is updated via SGD optimizer with learning rate $\eta=0.01$. We run each experiment with $3$ random seeds and report the average accuracy.
To simulate a real communication environment, we consider a cellular network with a radius of $1000$ meters. The server is positioned at the center, while the clients are distributed randomly and uniformly within the network. The path loss between each client and the server is calculated as $128.1 + 37.6 \log_{10}(r)$ dB, where $r$ is the distance from the client to the server in kilometers, according to \cite{abetaevolved}. Each client transmits at a uniform power of $0.2$ W. The uplink channels are assumed to be orthogonal, with a total bandwidth of $10$ MHz and an additive Gaussian noise power spectral density of $-174$ dBm/Hz. Additionally, the clients are assigned random computational capacities ranging from $10^9$ to $5\times10^{9}$ FLOPs. 
\par
\textbf{Baselines.}
We choose several typical methods designed to address data heterogeneity, including FedProx \cite{li2020federated}, FedLC \cite{zhang2022federated}, FedCBS \cite{zhang2023fed} and FedConcat \cite{diao2024exploiting}, as baselines. Furthermore, we compare SCALA with several SFL algorithms, including traditional SFL methods like SplitFedV1, SplitFedV2 \cite{thapa2022splitfed}, and SFLLocalLoss \cite{han2021accelerating}, as well as SFL algorithms designed to address data heterogeneity, such as Minibatch-SFL \cite{huang2023minibatch} and CS-SFL \cite{10910050}.

\par
\textbf{Dataset.} 
We adopt three popular image classification benchmark datasets, namely CIFAR10 \cite{krizhevsky2009learning}, CIFAR100 \cite{krizhevsky2009learning}, CINIC10 \cite{darlow2018cinic} and ImageNette \cite{deng2009imagenet}. 
To simulate the label skew distribution and generate local data for each client, we consider two label skew settings \cite{zhang2022federated}: 1) \emph{quantity-based label skew} and 2) \emph{distribution-based label skew}. In quantity-based label skew, given $K$ clients and $M$ classes, we divide the data of each label into $\frac{K\cdot \alpha}{M}$ portions, and then randomly allocate $\alpha$ portions of data to each client. Consequently, each client can obtain data from at most $\alpha$ classes, indicating that there is the presence of class missing in the client data. We use $\alpha$ to represent the degree of label skew, where a smaller $\alpha$ implies stronger label skewness. In distribution-based label skew, for each selected client $k$, we use Dirichlet distribution $\text{Dir}_M(\beta)$ with $M$ classes to sample a probability vector $\mathbf{p}_k(p_{k,1}, p_{k,2},\cdots, p_{k,M})\sim \text{Dir}_M(\beta)$ and allocate a portion of $p_{k,y}$ of the samples in class $y$ to client $k$. We use $\beta$ to denote the degree of skewness, where a smaller $\beta$ implies stronger label skewness.

\begin{figure}[t]
    \centering 
    \subfigure[CIFAR10 with $\alpha=2$.]{ 
    \includegraphics[clip, viewport= 5 0 398 298,width=0.227\textwidth]{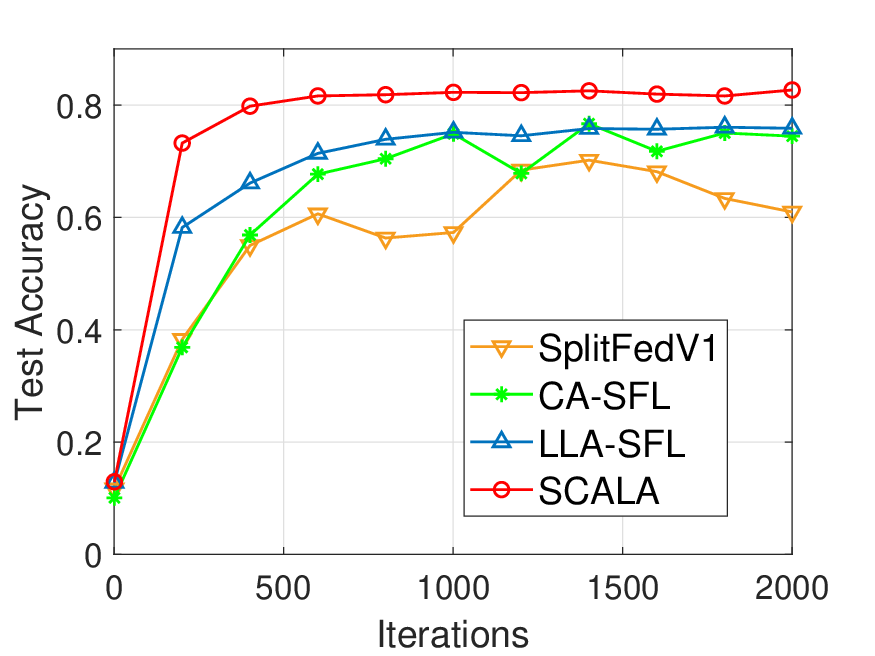} 
    } 
    \subfigure[CIFAR10 with $\beta=0.05$.]{
    \includegraphics[clip, viewport= 5 0 398 298,width=0.227\textwidth]{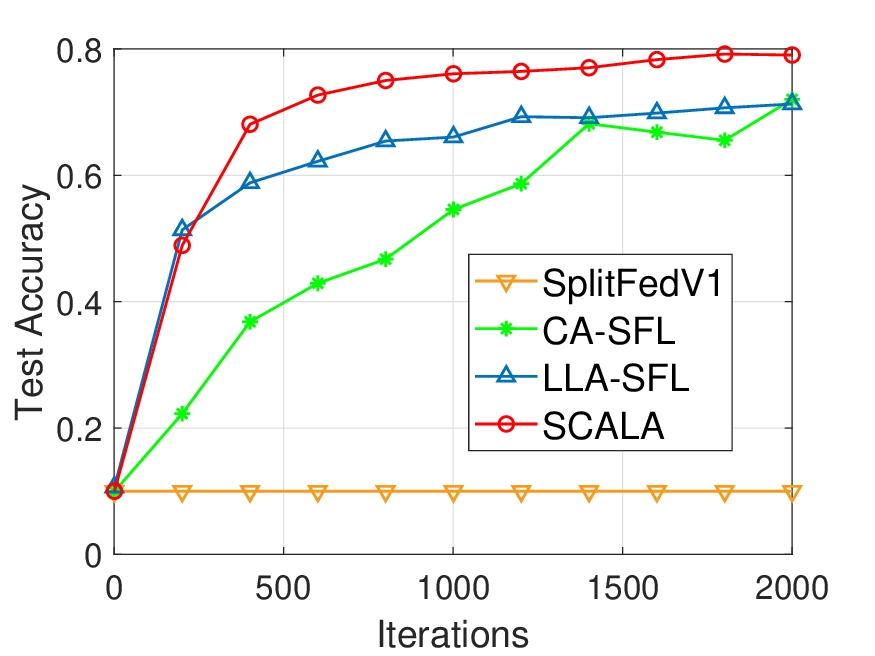} 
    }
\caption{Test accuracy of SCALA compared with alternative SFL configurations.} 
\label{Abla}
\end{figure}

\begin{table}[t]
\caption{Test accuracy (\%) on CIFAR10 under different client participation ratios with alternative SFL configurations.}
\label{ablation_1}
\resizebox{\linewidth}{!}{
\begin{tabular}{cccccc}
\toprule
Skewness                        & Method     & $\rho=5\%$         & $\rho=10\%$        & $\rho=20\%$        & $\rho=50\%$        \\ \midrule
\multirow{4}{*}{$\alpha=2$}     & SplitFedV1 & -                  & $60.97_{\pm 7.43}$ & $71.34_{\pm 2.65}$ & $76.63_{\pm 2.45}$ \\
                                & CA-SFL     & -                  & $74.47_{\pm 2.59}$ & $77.01_{\pm 1.95}$ & $82.18_{\pm 0.50}$ \\
                                & LLA-SFL    & $74.87_{\pm 0.30}$ & $75.87_{\pm 0.96}$ & $77.79_{\pm 0.15}$ & $78.42_{\pm 0.65}$ \\
                                & SCALA       & $78.66_{\pm 0.58}$ & $82.70_{\pm 0.57}$ & $83.69_{\pm 0.59}$ & $84.16_{\pm 0.43}$ \\ \midrule
\multirow{4}{*}{$\beta = 0.05$} & SplitFedV1 & -                  & -                  & $68.91_{\pm 4.70}$ & $76.62_{\pm 1.95}$ \\
                                & CA-SFL     & -                  & $72.42_{\pm 1.17}$ & $74.23_{\pm 1.61}$ & $79.21_{\pm 0.71}$ \\
                                & LLA-SFL    & $69.64_{\pm 1.01}$ & $71.29_{\pm 2.94}$ & $72.02_{\pm 1.48}$ & $73.74_{\pm 1.43}$ \\
                                & SCALA       & $74.67_{\pm 3.36}$ & $79.04_{\pm 1.33}$ & $81.02_{\pm 0.43}$ & $81.99_{\pm 0.41}$ \\ \bottomrule
\end{tabular}}
\end{table}

\begin{figure}[t]
    \centering 
    \subfigure[CIFAR10 with $\beta=0.1$.]{ 
    \includegraphics[width=0.47\textwidth]{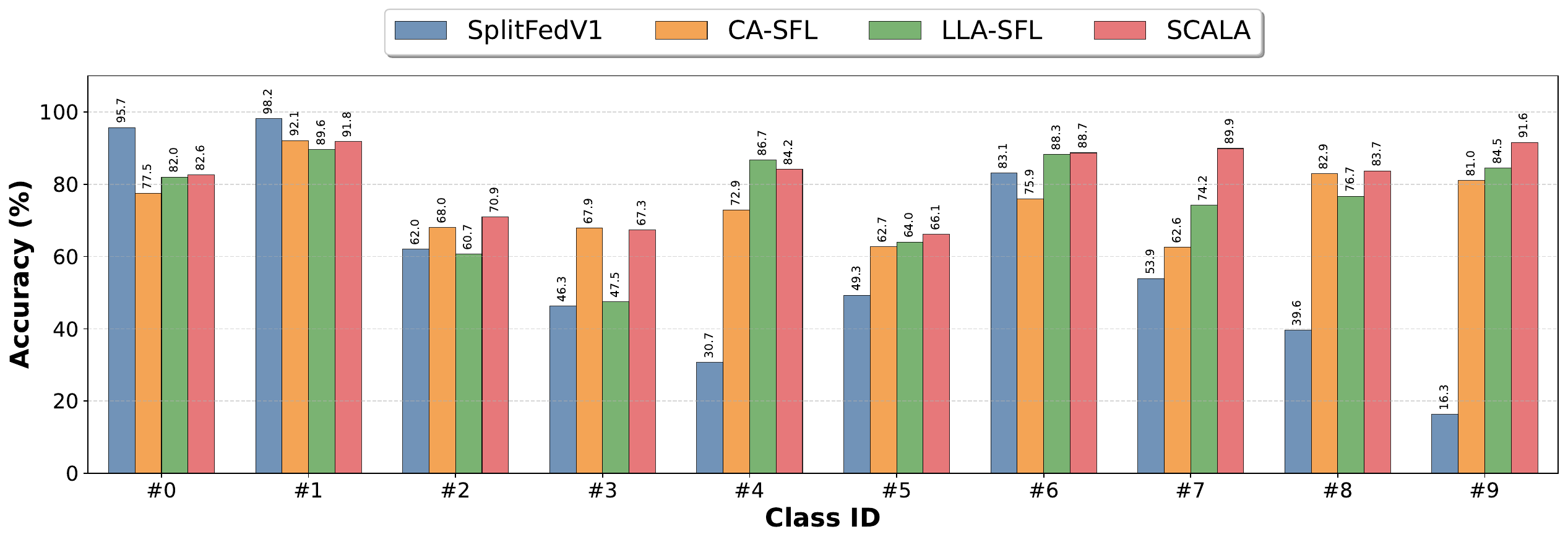} 
    } 
    \subfigure[CIFAR10 with $\beta=0.3$.]{
    \includegraphics[width=0.47\textwidth]{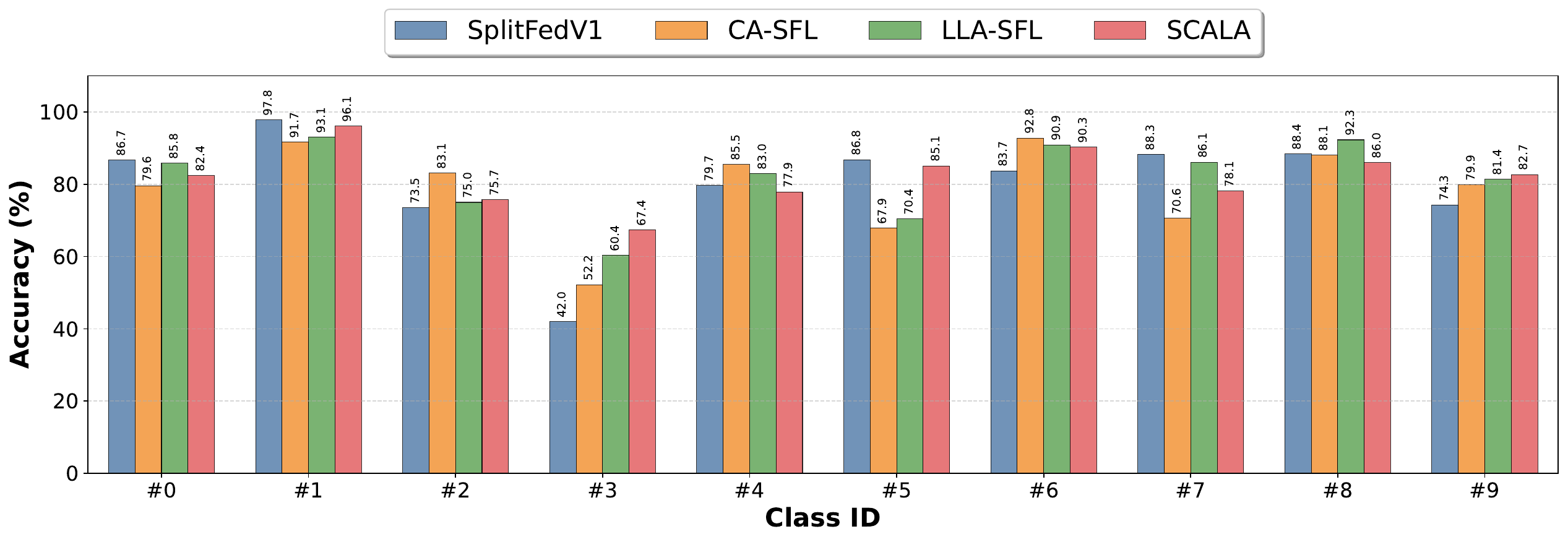} 
    }
\caption{Per-class test accuracy of SCALA compared with alternative SFL configurations.} 
\label{Abla_per_class}
\end{figure}

\subsection{Ablation Study of SCALA}

We begin by conducting an ablation study of SCALA using AlexNet to validate the individual contributions of our core mechanisms. We evaluate three distinct configurations alongside the proposed framework: (1) the traditional SFL algorithm, SplitFedV1; (2) SFL enabled only with Concatenated Activations, denoted as CA-SFL; and (3) SFL with only Local Logit Adjustments, denoted as LLA-SFL. The experimental results are presented in Fig. \ref{Abla}. As shown in the figure, both CA-SFL and LLA-SFL demonstrate significant improvements in convergence performance compared to SplitFedV1. This indicates that the centralized update manner based on concatenated activations and the application of logit adjustments to the local loss functions can enhance model performance. Furthermore, SCALA achieves the best convergence performance, highlighting that incorporating logit adjustments into the global loss function—based on concatenated activations—effectively mitigates the challenges posed by global label distribution skew and further optimizes the model performance.
\par
To provide a deeper insight into the efficacy of concatenated activations, we conduct a fine-grained analysis by varying the client participation ratio $\rho$ to assess the sensitivity of the overall accuracy to the scale of concatenation. The results, summarized in Table \ref{ablation_1}, reveal a positive correlation between $\rho$ and model performance, where `-' indicates the model fails to converge.  It can be observed that increasing the participation ratio enables the concatenated activations to encompass category information from a broader range of sources. This advantage is reflected in the performance of CA-SFL, which exhibits a substantial accuracy gain as the ratio rises—a trend that is much more pronounced compared to LLA-SFL.  This enriched representation effectively alleviates local label distribution skew, thereby leading to superior overall accuracy, particularly at higher participation ratios.
\par
To provide a deeper insight into the efficacy of logit adjustments, we conduct a fine-grained analysis by evaluating the per-class recognition accuracy at global iteration $1000$ under varying degrees of label skewness, specifically $\beta=0.1$ and $\beta=0.3$. The results, as illustrated in Fig. \ref{Abla_per_class}, demonstrate that the recognition capability across different classes becomes increasingly biased as label skewness intensifies. For instance, in the $\beta=0.1$ scenario, the baseline SplitFedV1 exhibits poor accuracy ($<40\%$) on Classes 4, 8, and 9. Although LLA-SFL attempts to mitigate this by applying local logit adjustments, it yields only marginal performance gains. In contrast, SCALA achieves the most balanced recognition capability among all baselines. This result reveals a vital insight regarding the logit adjustments module: its efficacy is fundamentally predicated on the concatenated activations. This dependence arises because individual clients in LLA-SFL lack access to the global label distribution, rendering local adjustments less effective. By contrast, SCALA constructs a ``virtual centralized" training batch via concatenation, providing the server with a holistic view of the global long-tailed distribution. It is only upon this concatenated activations that logit adjustments can accurately calibrate the loss function.


\subsection{Effect of Data Heterogeneity}
\label{BaselineExp}
We evaluate SCALA against baseline methods under varying degrees of label skewness, employing AlexNet as the backbone model. For the CIFAR10, CIFAR100, and CINIC10 datasets, the training process is conducted for $2000$ global iterations. As shown in Table \ref{table1}, SCALA significantly improves the model accuracy across various degrees of label skewness, particularly in settings where some classes of local data are missing, that is, under $\alpha =2$ and $\beta=0.05$ configurations. 
Additionally, we compare the performance of SCALA with the baseline methods in a real communication environment. As shown in Fig. \ref{figure2}, SCALA not only improves model accuracy but also accelerates the convergence speed.
The improvements demonstrated in the experimental results can be attributed to the use of concatenated activations and logit adjustments in SCALA. On one hand, SCALA conducts centralized logit adjustments on the concatenated activations, resulting in higher accuracy compared to FedLC, which only perform logit adjustments on distributed clients. This demonstrates the benefits of concatenating activations. On the other hand, by applying logit adjustments to the skewed concatenated labels, SCALA achieves higher accuracy than Minibatch-SFL, which only relies on concatenated activations. This highlights the additional gains from logit adjustments.

\begin{table*}[t]
\caption{Test accuracy (\%) on CIFAR10, CINIC10 and CIFAR100.}
\label{table1}
\centering
\resizebox{\linewidth}{!}{
\begin{tabular}{ccccccccc}
\toprule
\multirow{2}{*}{Method} & \multicolumn{2}{c}{CIFAR10}                               & \multicolumn{4}{c}{CINIC10}                                                                                           & \multicolumn{2}{c}{CIFAR100}                                                                 \\
\cmidrule(lr){2-3}\cmidrule(lr){4-7}\cmidrule(lr){8-9}
                        & $\alpha=2$                  & $\beta=0.05$                & $\alpha=2$                  & $\alpha=4$                  & $\beta=0.05$                & $\beta=0.1$                 & $\alpha=2$                                                     & $\beta=0.05$                \\ \midrule
FedAvg                  & $68.36_{\pm 2.93}$          & $35.59_{\pm 1.66}$          & $52.76_{\pm 5.48}$          & $58.40_{\pm 3.73}$          & $21.89_{\pm 6.74}$          & $46.36_{\pm 7.57}$          & $24.68_{\pm 2.24}$                                             & $46.86_{\pm 0.98}$          \\
FedProx                 & $69.03_{\pm 0.94}$          & $60.00_{\pm 6.39}$          & $53.63_{\pm 5.96}$          & $60.90_{\pm 2.45}$          & $26.11_{\pm 8.22}$          & $47.04_{\pm 8.86}$          & $24.95_{\pm 1.21}$                                             & $47.33_{\pm 1.46}$          \\
FedLC                   & $76.91_{\pm 0.36}$          & $68.46_{\pm 3.17}$          & $61.45_{\pm 2.66}$          & $69.25_{\pm 0.44}$          & $47.62_{\pm 1.18}$          & $57.05_{\pm 0.57}$          & $23.35_{\pm 0.32}$                                             & $48.75_{\pm 0.46}$          \\ 
FedCBS                   & $68.54_{\pm 4.67}$          & $54.83_{\pm 1.94}$          & $46.20_{\pm 2.89}$          & $65.75_{\pm 0.94}$          & $41.27_{\pm 4.22}$          & $55.74_{\pm 3.62}$          & $21.73_{\pm 4.13}$                                             & $47.04_{\pm 0.23}$          \\ 
\midrule
SplitFedV1              & $60.97_{\pm 7.43}$          & -                           & $42.29_{\pm 12.80}$         & $58.24_{\pm 2.16}$          & -                           & $39.63_{\pm 8.10}$          & $20.67_{\pm 3.01}$                                             & $47.28_{\pm 0.35}$          \\
SplitFedV2              & $62.24_{\pm 5.51}$          & -                           & -                           & $55.61_{\pm 2.85}$          & -                           & $47.77_{\pm 3.07}$          & -                                                              & $28.01_{\pm 2.29}$          \\
SFLLocalLoss            & $71.56_{\pm 2.92}$          & $73.84_{\pm 1.45}$          & $56.13_{\pm 6.93}$          & $67.68_{\pm 0.58}$          & $53.92_{\pm 3.26}$          & $60.36_{\pm 1.84}$          & $16.56_{\pm 0.63}$                                             & $49.68_{\pm 1.26}$          \\
Minibatch-SFL           & $74.47_{\pm 2.59}$          & $72.42_{\pm 1.17}$          & $55.46_{\pm 4.34}$          & $66.89_{\pm 1.58}$          & $29.65_{\pm 3.45}$          & $62.55_{\pm 2.20}$          & -                                                              & $47.79_{\pm 1.11}$          \\
CS-SFL                  & $77.13_{\pm 1.67}$          &$72.95_{\pm 2.28}$          & $61.45_{\pm 2.57}$          &$64.50_{\pm 2.05}$          & $53.90_{\pm 2.33}$          & $60.62_{\pm 0.70}$          & $22.22_{\pm 0.24}$                                                              & $41.63_{\pm 0.37}$          \\
\midrule
SCALA                    & $\mathbf{82.70}_{\pm 0.57}$ & $\mathbf{79.04}_{\pm 1.33}$ & $\mathbf{68.96}_{\pm 1.41}$ & $\mathbf{71.89}_{\pm 0.27}$ & $\mathbf{55.67}_{\pm 5.43}$ & $\mathbf{65.34}_{\pm 1.09}$ & $\mathbf{45.46}_{\pm 1.20}$ & $\mathbf{54.73}_{\pm 0.37}$ \\ \bottomrule
\end{tabular}}
\end{table*}

\begin{figure}[t]
  \centering 
  \subfigure[CIFAR10 with $\alpha=2$.]{ 
    \includegraphics[clip, viewport= 5 0 398 298,width=0.227\textwidth]{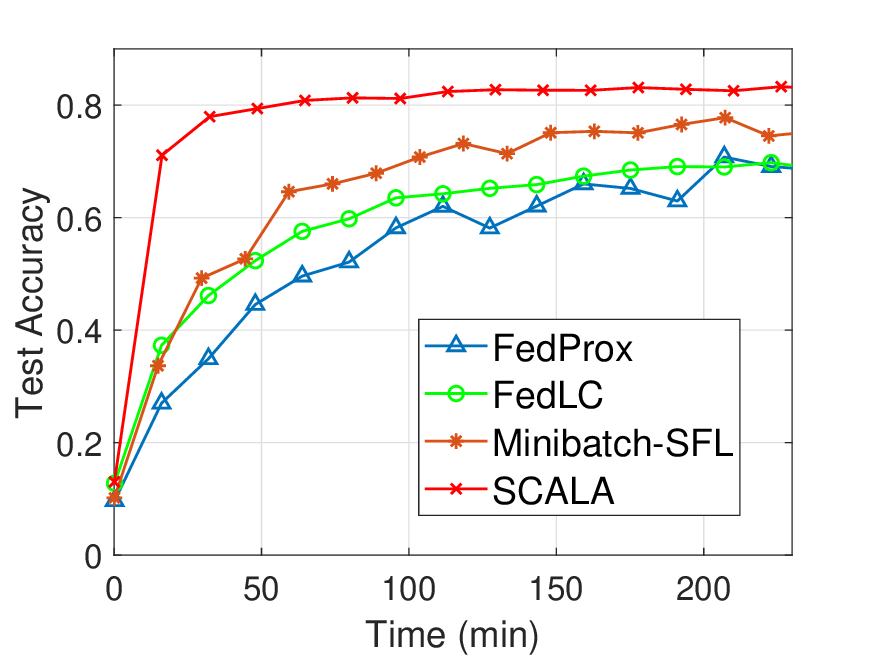} 
  } 
  \subfigure[CIFAR10 with $\beta=0.05$.]{ 
    \includegraphics[clip, viewport= 5 0 398 298,width=0.227\textwidth]{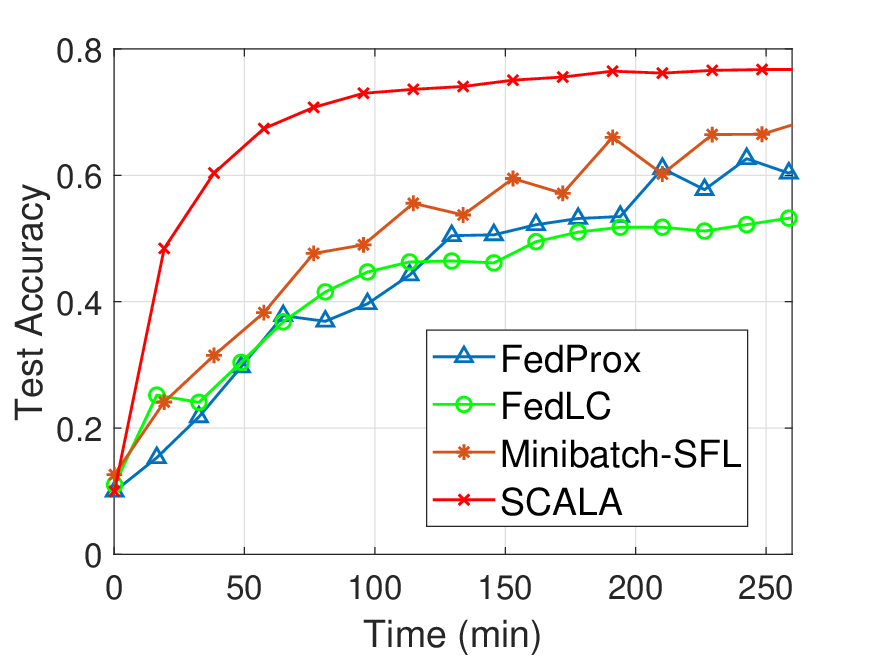} 
  } 
    \subfigure[CIFAR100 with $\alpha=2$.]{ 
    \includegraphics[clip, viewport= 5 0 398 298,width=0.227\textwidth]{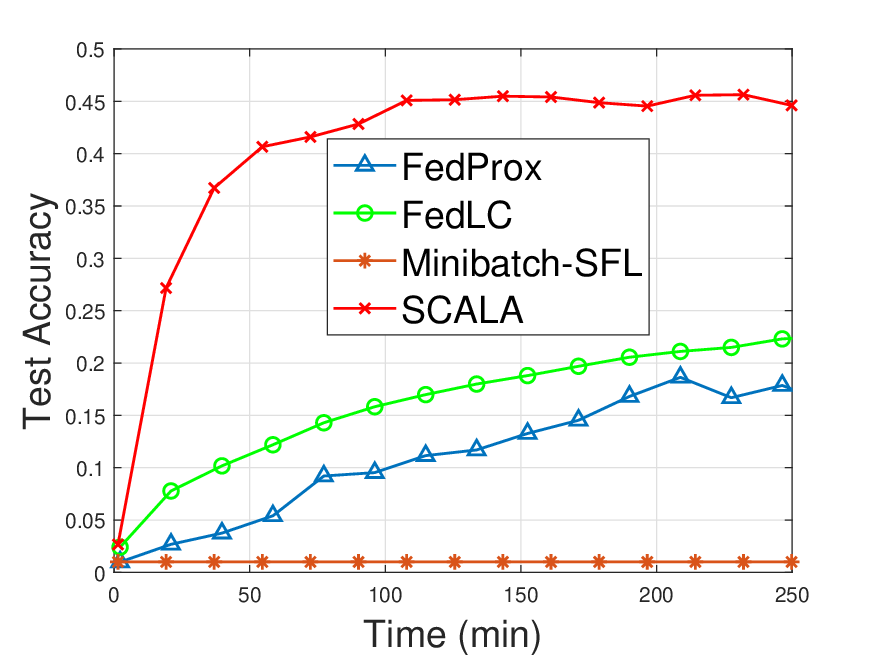} 
  } 
  \subfigure[CIFAR100 with $\beta=0.05$.]{ 
    \includegraphics[clip, viewport= 5 0 398 298,width=0.227\textwidth]{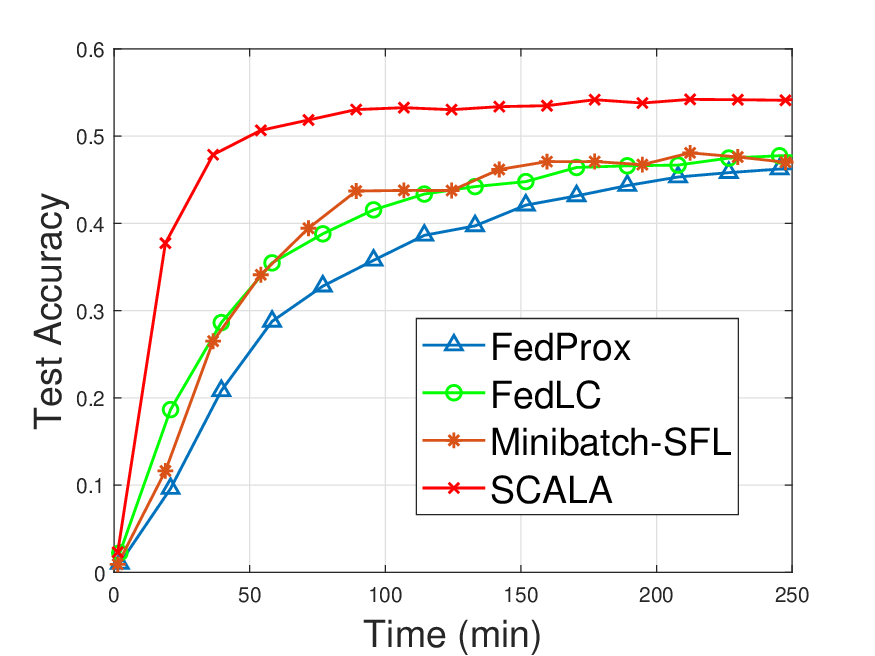} 
  } 
  \subfigure[ImageNette with $\alpha=2$.]{ 
    \includegraphics[clip, viewport= 5 0 398 298,width=0.227\textwidth]{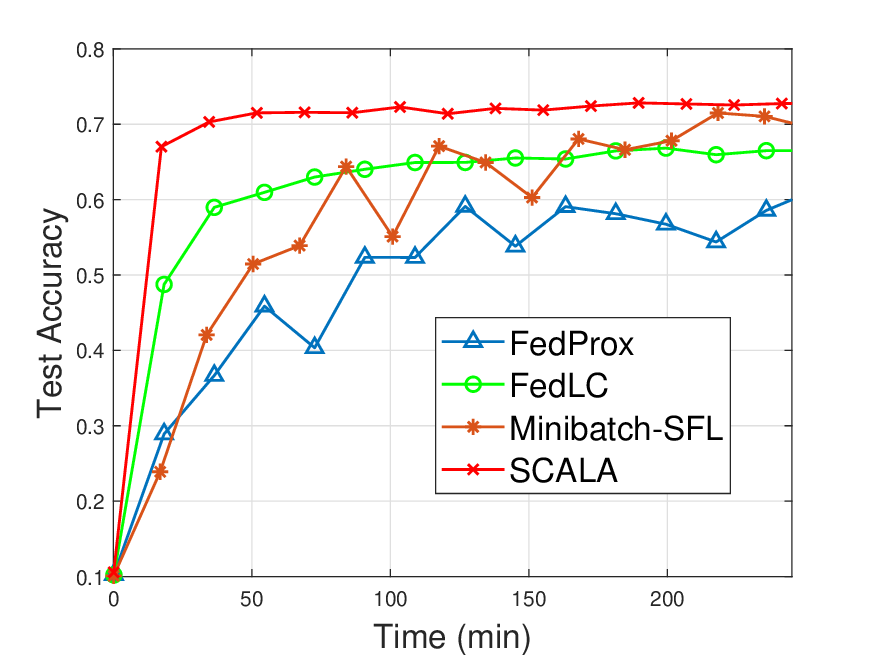} 
  } 
  \subfigure[ImageNette with $\beta=0.1$.]{ 
    \includegraphics[clip, viewport= 5 0 398 298,width=0.227\textwidth]{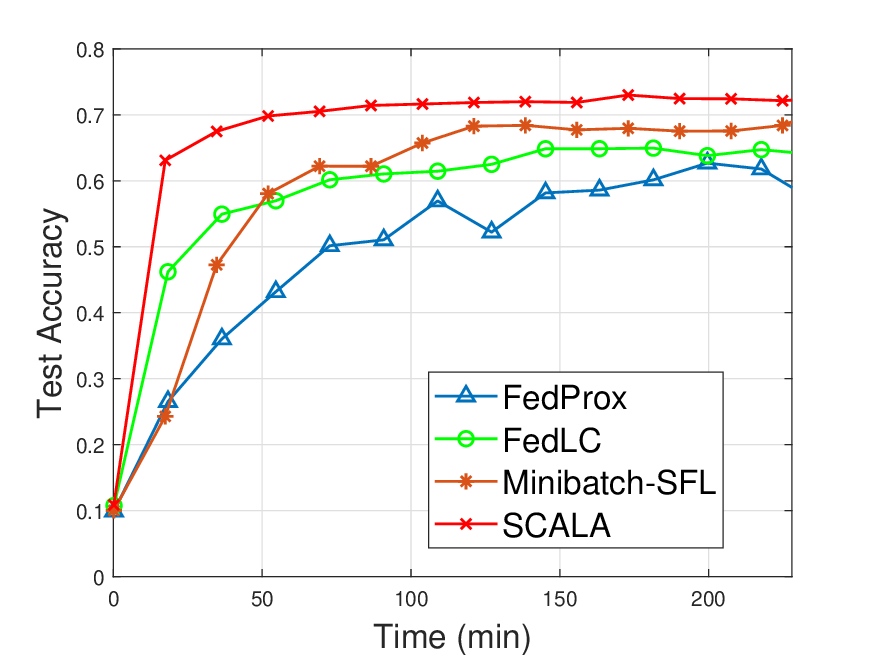} 
  } 
\caption{Test accuracy of SCALA compared against the baseline methods.} 
\label{figure2}
\end{figure}

\begin{table}[t]
\caption{Test accuracy (\%) on CIFAR-100 under different client participation ratios compared against baselines.}
\label{table4}
\resizebox{\linewidth}{!}{
\begin{tabular}{cccccc}
\toprule
Skewness                        & Method    & $\rho=5\%$                     & $\rho=10\%$                    & $\rho=20\%$                    \\ \midrule
\multirow{4}{*}{$\alpha=3$}     & FedProx   & $8.67_{\pm 0.47}$           & $11.03_{\pm 1.23}$          & $20.90_{\pm 1.08}$          \\
                                & FedLC     & $35.92_{\pm 0.16}$          & $37.94_{\pm 0.50}$          & $39.28_{\pm 1.16}$          \\
                                & FedConcat & $6.93_{\pm 0.14}$           & $17.74_{\pm 2.55}$          & $35.48_{\pm 1.01}$          \\ \cmidrule{2-5}   
                                & SCALA      & $\mathbf{58.17}_{\pm 1.15}$ & $\mathbf{62.26}_{\pm 0.62}$ & $\mathbf{63.50}_{\pm 0.62}$ \\ \midrule
\multirow{4}{*}{$\beta = 0.05$} & FedProx   & $54.78_{\pm 2.71}$          & $61.70_{\pm 1.47}$          & $66.22_{\pm 0.31}$          \\
                                & FedLC     & $54.78_{\pm 2.71}$          & $63.06_{\pm 0.48}$          & $65.80_{\pm 0.39}$          \\
                                & FedConcat & $29.22_{\pm 0.67}$          & $44.88_{\pm 0.86}$          & $58.22_{\pm 0.97}$          \\ \cmidrule{2-5}   
                                & SCALA      & $\mathbf{65.26}_{\pm 0.32}$ & $\mathbf{66.22}_{\pm 0.38}$ & $\mathbf{66.05}_{\pm 0.87}$ \\ \bottomrule
\end{tabular}}
\end{table}

\subsection{Effect of Partial Client Participation}
We study the robustness of SCALA to the proportion of clients participating at each global iteration using ResNet-18. We select participation ratios of $\rho = 5\%$, $\rho = 10\%$, and $\rho = 20\%$ and set the degree of label skewness as $\alpha =3$ and $\beta =0.05$. The results are shown in Table \ref{table4}. 
We observe that SCALA exhibits high robustness to the variation in client participation ratios, maintaining high accuracy across all settings.
Note that under the distribution-based label skew setting, higher client participation ratios, such as $\rho = 20\%$, lead to improved accuracy across all methods. This is because the increased client participation ensures sufficient training of data from each class in each iteration, thereby mitigating the impact of global label distribution skew. On the other hand, under the quantity-based label skew setting, the accuracy of all baseline methods is notably low, especially for FedConcat. This indicates that the presence of missing classes in local datasets significantly degrades model performance.
SCALA addresses this issue by introducing concatenated activations and further alleviating global label distribution skew through logit adjustments, thereby achieving higher accuracy even in spectrum-constrained environments with lower participation rates. 

\begin{table}[t]
\caption{Comparison of communication and computation overhead per global iteration, as well as test accuracy under varying local iterations on CIFAR10.}
\label{comm}
\centering
\resizebox{\linewidth}{!}{
\begin{tabular}{ccccc}
\toprule
$I$                   & Method  & Accuracy (\%)              & Comm. Overhead               & Comp. Overhead         \\ \midrule
\multirow{3}{*}{$10$} & FedProx & $59.02_{\pm7.04}$          & $14.76$ MB                   & $44.69$ GFLOPs                 \\ \
                      & FedLC   & $70.70_{\pm1.56}$          & $14.76$ MB                   & $44.69$ GFLOPs              \\ \cmidrule{2-5} 
                      & SCALA   & $\mathbf{80.33}_{\pm0.34}$ & $\mathbf{5.07}$ MB           & $\mathbf{3.59}$ GFLOPs         \\ \midrule
\multirow{3}{*}{$20$} & FedProx & $59.12_{\pm7.83}$          & $14.76$ MB                   & $89.38$ GFLOPs                 \\ 
                      & FedLC   & $73.84_{\pm1.55}$          & $14.76$ MB                   & $89.38$ GFLOPs               \\ \cmidrule{2-5} 
                      & SCALA   & $\mathbf{81.89}_{\pm0.23}$ & $\mathbf{10.07}$ MB          & $\mathbf{7.18}$ GFLOPs         \\ \midrule
\multirow{3}{*}{$30$} & FedProx & $64.67_{\pm 7.24}$         & $14.76$ MB                   & $134.07$ GFLOPs                 \\
                      & FedLC   & $75.20_{\pm 0.49}$         & $14.76$ MB                   & $134.07$ GFLOPs             \\  \cmidrule{2-5}
                      & SCALA    & $\mathbf{81.39}_{\pm0.54}$ & $\mathbf{15.08}$ MB          & $\mathbf{10.77}$ GFLOPs         \\ \bottomrule
\end{tabular}}
\end{table}

\subsection{Communication Overhead Analysis of SCALA}
We conduct experiments to analyze the communication overhead introduced by SCALA. Specifically, The total communication overhead of SCALA in each global iteration can be expressed as $ D(A) \cdot I + D(\mathbf{w}_c) $, where $ D(\cdot) $ denotes the size of the transmitted paremeters and $ I $ is the number of local iterations. In widely used models such as AlexNet and VGG, the size of activations is significantly smaller than that of the full model parameters $ D(\mathbf{w}) $. Therefore, as long as $ D(A) \cdot I + D(\mathbf{w}_c) < D(\mathbf{w}) $, SCALA incurs lower communication overhead compared to traditional FL approaches. 
To illustrate this advantage more concretely, we compare the communication overhead, computation overhead, and model accuracy of SCALA with baseline methods under different numbers of local iterations. The experiments are conducted using AlexNet on the CIFAR10, with a skewness parameter set to $\alpha=2$ and the number of global iterations fixed at $1000$. As shown in the table \ref{comm}, SCALA achieves higher accuracy with fewer local iterations, while also reducing both communication and computation overhead compared to baselines that transmit and train full model weights. These results demonstrate that SCALA enables a flexible trade-off between communication efficiency and model performance by adjusting the number of local iterations.

\subsection{Effect of Split Point Selection}
\begin{figure}[t]
  \centering 
  \subfigure[AlexNet.]{ 
    \includegraphics[clip, viewport= 5 0 398 298,width=0.227\textwidth]{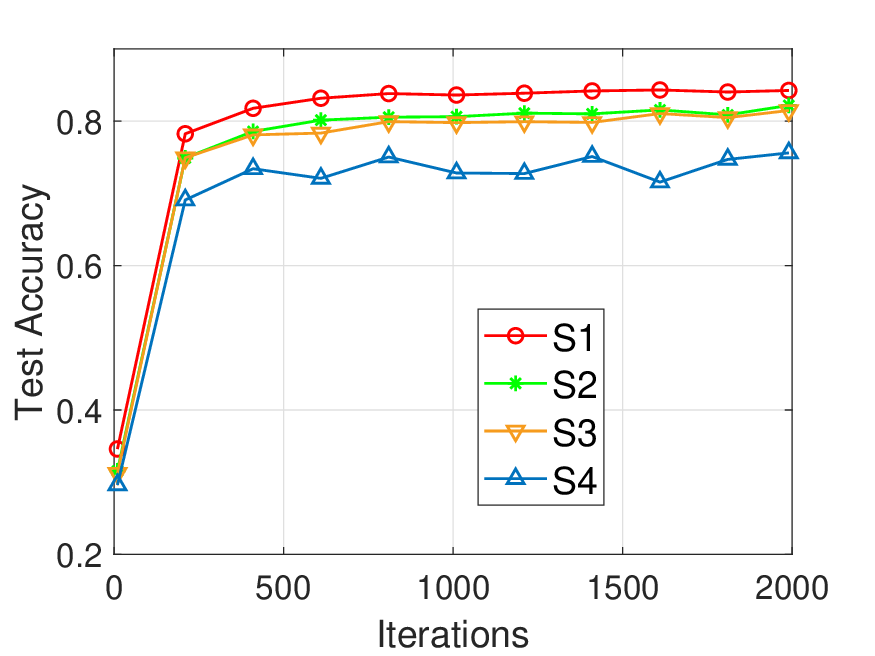} 
  } 
  \subfigure[ResNet-18.]{ 
    \includegraphics[clip, viewport= 5 0 398 298,width=0.227\textwidth]{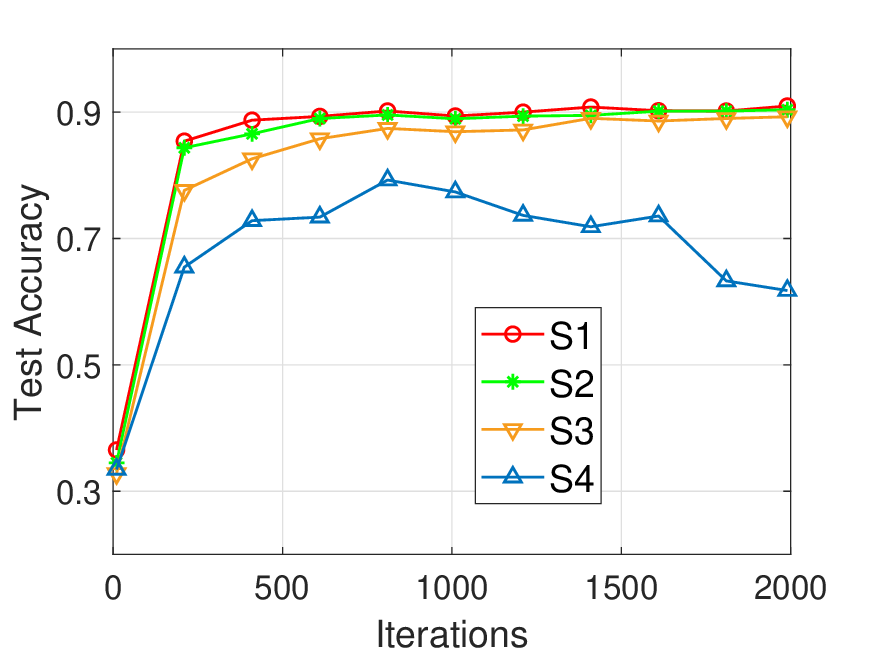} 
  } 
  \caption{Test accuracy of SCALA under different split points on CIFAR10 with label skewness $\alpha=2$.}
\label{split}
\end{figure}

We conducted ablation studies on different split points in AlexNet and ResNet-18. For AlexNet, the split points are selected at the first convolutional layer, the third convolutional layer, the fourth convolutional layer, and the fifth convolutional layer. For ResNet-18, the split points are selected at the first convolutional layer, the first residual block, the second residual block, and the third residual block. We denote the split points from shallow to deep as S1, S2, S3, and S4. The experimental results are shown in Fig. \ref{split}.
As indicated in Fig. \ref{split}, the model accuracy decreases as the depth of the split increases. This result demonstrates that deploying more models on the server to use concatenated activations for centralized training can effectively mitigate model drift in local label distribution skew, thereby improving model performance.

\begin{table}[t]
\caption{Performance of privacy-enhanced SCALA (SCALA-PE) in terms of image reconstruction quality and model accuracy.}
\label{privacy_table}
\centering
\resizebox{\linewidth}{!}{
\begin{tabular}{cccccc}
\toprule
Dataset                   & Method   & Accuracy (\%) & MSE    & SSIM   & PSNR  \\ \midrule
\multirow{2}{*}{CIFAR10}  & SCALA    & 91.25         & 0.0016 & 0.9314 & 28.20 \\
                          & SCALA-PE & 89.08         & 0.0051 & 0.8142 & 23.33 \\ \midrule
\multirow{2}{*}{CIFAR100} & SCALA    & 67.35         & 0.0048 & 0.8632 & 23.81 \\
                          & SCALA-PE & 65.50         & 0.0328 & 0.5659 & 15.84 \\ \bottomrule
\end{tabular}}
\end{table}

\subsection{Enhancing Privacy in SCALA}
We conduct experiments to evaluate the effectiveness of the proposed privacy-enhanced SCALA framework. Similar to other SL and SFL methods, SCALA requires transmitting intermediate activations from the clients to the server, which makes it vulnerable to model inversion attacks. The attacker may exploit these activations to reconstruct the original input data, thereby causing potential privacy leakage. To address this threat, we incorporate two complementary privacy-preserving mechanisms from ResSFL \cite{li2022ressfl} into SCALA. First, a simulated inversion model is employed to mimic model inversion attacks and generate reconstruction scores. These scores are then utilized as a regularization term to guide the client-side feature extractor toward producing privacy-preserving representations. The overall optimization objective is formulated as
\begin{align}
f_k(\mathbf{w}_c)
=l(\mathbf{w}_s;h(\mathbf{w}_c;\tilde{\mathcal{D}}_k))+\gamma \mathcal{R}(\mathcal{L}(\mathbf{w}_v;h(\mathbf{w}_c;\tilde{\mathcal{D}}_k)),\tilde{\mathcal{D}}_k),
\end{align}
where $\mathbf{w}_v$ denotes the parameters of the simulated inversion model, and $\mathcal{R}$ measures the reconstruction quality by comparing the recovered images with the ground-truth samples $\tilde{\mathcal{D}}_k$. Second, we introduce bottleneck layers to compress the intermediate feature space, thereby reducing the amount of information that can be exploited for data reconstruction and mitigating potential information leakage. We adopt the same configuration as in ResSFL \cite{li2022ressfl}, using VGG-11 with the split layer set at the second layer. We implement the bottleneck layers using a pair of Conv2D layers, with a channel size of $8$ and a stride of $1$. The simulated inversion model is designed as a shallow convolutional autoencoder with a channel size of $16$, while the attacker's inversion model is implemented as a shallow convolutional autoencoder with an internal channel size of $64$. We evaluate image quality using Mean Squared Error (MSE), Structural Similarity Index (SSIM), and Peak Signal-to-Noise Ratio (PSNR), where lower MSE, higher SSIM, and higher PSNR indicate better image reconstruction quality. The experimental results are presented in the Table \ref{privacy_table}. As shown in the table, SCALA-PE demonstrates an effective capability to resist model inversion attacks while maintaining high model accuracy, which becomes more evident on complex datasets. For example, on the CIFAR-100 dataset, the MSE increases by approximately 6.8 times (from 0.0048 to 0.0328), whereas the accuracy decreases by only about two percentage points.

\section{Conclusion}
We proposed SCALA to address the issue of label distribution skew in SFL. We first concatenated activations output by participating clients to serve as the input of server-side model training. Then we proposed loss functions with logit adjustments for the server-side and client-side models. We performed detailed theoretical analysis and extensive experiments to verify the effectiveness of SCALA.

\noindent\textbf{Limitations:} Like other SL and SFL algorithms, SCALA has two main limitations. First, SCALA requires the transmission of labels, which poses a risk of privacy leakage. Incorporating privacy preserving mechanisms of SFL \cite{li2022ressfl,lyu2023optimal} into SCALA to enhance data security is a promising direction for future work. Second, SCALA relies on the frequent exchange of intermediate activations between clients and the server, which can incur substantial communication overhead. Designing activation compression schemes or more communication-efficient variants of SCALA to reduce this overhead while preserving model performance is another valuable direction.

\appendices
\section{Proof of Theorem 1}
\label{proof_theorem1}
We define $\boldsymbol{g}_{s,k}(\cdot)$  as the stochastic gradients of $ F_k(\cdot) $ with respect to $ \mathbf{w}_s $, and $ \boldsymbol{g}_{c,k}(\cdot) $ as the stochastic gradients of $ F_k(\cdot) $ with respect to $ \mathbf{w}_c $. We define $ C $ as the average number of participating clients per global iteration and assume that all clients have an identical participation probability of $ q_k = \frac{C}{K} $. Let $ \mathbf{I}_k^t $ be a binary indicator denoting whether client $ k $ participates in training during iteration $ t $, which follows a Bernoulli distribution with probability $ q_k $. Accordingly, the overall client participation rate is given by $ \rho = q_k = \frac{C}{K} $.
The expected gradients of server-side model is defined as $\nabla_{\mathbf{w}_s} \tilde{F}^{t,i}$, which is unbiasedly estimated by the stochastic gradients as
\allowdisplaybreaks
\begin{align}
    \tilde{\boldsymbol{g}}_s^{t,i} = \sum_{k=1}^K \frac{\mathbf{I}_k^t a_k}{q_k} \boldsymbol{g}_{s,k}^{t,i}([\mathbf{w}_s^{t,i}; \mathbf{w}_{c,k}^{t,i}]),
\end{align}
where $ a_k = \frac{|\mathcal{D}_k|}{\sum_{k\in \mathcal{C}}|\mathcal{D}_k|} $ is the data size proportion. Then, we propose the following lemma:
\begin{lemma}\label{norm_diff}
For the squared norm difference of the server-side model during local iterations, we have:
\begin{align}
    &\mathbb{E}\left[\left\|\mathbf{w}_s^{t,\iota} - \mathbf{w}_s^t\right\|^2\right] \nonumber \\
    &\leq \frac{2 \eta^2 \iota N_s \sigma_{\text{max}}^2}{B} 
+ 2 \eta^2 \iota \sum_{i=0}^{\iota-1} \mathbb{E}\left[\left\|\nabla_{\mathbf{w}_s} \tilde{F}^{t,i}\right\|^2\right]
\end{align}
\end{lemma}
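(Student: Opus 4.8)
The plan is to write the one-step server update \eqref{eqn:up_server} as a stochastic gradient step and unroll it over the $\iota$ local iterations. Since the server-side parameters evolve as $\mathbf{w}_s^{t,i+1} = \mathbf{w}_s^{t,i} - \eta\, \tilde{\boldsymbol{g}}_s^{t,i}$ with $\mathbf{w}_s^{t,0} = \mathbf{w}_s^t$, telescoping gives $\mathbf{w}_s^{t,\iota} - \mathbf{w}_s^t = -\eta \sum_{i=0}^{\iota-1} \tilde{\boldsymbol{g}}_s^{t,i}$, so that
\[
\mathbb{E}\big[\|\mathbf{w}_s^{t,\iota} - \mathbf{w}_s^t\|^2\big] = \eta^2 \,\mathbb{E}\Big[\big\|\textstyle\sum_{i=0}^{\iota-1} \tilde{\boldsymbol{g}}_s^{t,i}\big\|^2\Big].
\]
The whole proof then reduces to controlling the second moment of this sum of stochastic gradients.

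First I would perform a bias--variance split, writing each $\tilde{\boldsymbol{g}}_s^{t,i} = \nabla_{\mathbf{w}_s}\tilde{F}^{t,i} + \boldsymbol{\xi}^{t,i}$, where $\boldsymbol{\xi}^{t,i} := \tilde{\boldsymbol{g}}_s^{t,i} - \nabla_{\mathbf{w}_s}\tilde{F}^{t,i}$ is the zero-mean stochastic noise guaranteed by the unbiasedness stated just above the lemma. Applying $\|a+b\|^2 \le 2\|a\|^2 + 2\|b\|^2$ separates the deterministic drift from the noise,
\[
\big\|\textstyle\sum_{i} \tilde{\boldsymbol{g}}_s^{t,i}\big\|^2 \le 2\big\|\textstyle\sum_{i} \nabla_{\mathbf{w}_s}\tilde{F}^{t,i}\big\|^2 + 2\big\|\textstyle\sum_{i} \boldsymbol{\xi}^{t,i}\big\|^2.
\]
For the drift term I apply the Cauchy--Schwarz (equivalently Jensen) inequality $\|\sum_{i=0}^{\iota-1} v_i\|^2 \le \iota \sum_{i=0}^{\iota-1}\|v_i\|^2$ to produce the factor $\iota$, which yields exactly the second term of the claimed bound after multiplying by $2\eta^2$.

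The noise term is where the argument needs care. Conditioning on the set of participating clients (which is frozen within global iteration $t$) and on the filtration $\mathcal{F}^{t,i}$ generated by all minibatch draws up to iteration $i$, the tower property together with $\mathbb{E}[\boldsymbol{\xi}^{t,i}\mid \mathcal{F}^{t,i}]=0$ makes $\{\boldsymbol{\xi}^{t,i}\}_i$ a martingale difference sequence, so all cross terms $\mathbb{E}[\langle \boldsymbol{\xi}^{t,i},\boldsymbol{\xi}^{t,j}\rangle]=0$ for $i\neq j$ and $\mathbb{E}\|\sum_{i} \boldsymbol{\xi}^{t,i}\|^2 = \sum_{i} \mathbb{E}\|\boldsymbol{\xi}^{t,i}\|^2$. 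I would then bound each per-iteration variance by summing Assumption~\ref{assumption2} over the $N_s$ layers of the server-side model, noting that the server computes its gradient on the concatenated batch of total size $B$, giving $\mathbb{E}\|\boldsymbol{\xi}^{t,i}\|^2 \le \sum_{n=1}^{N_s}\sigma_n^2/B \le N_s\sigma_{\max}^2/B$. Summing over the $\iota$ iterations and multiplying by $2\eta^2$ produces the first term $2\eta^2\iota N_s\sigma_{\max}^2/B$, completing the bound.

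The main obstacle is the bookkeeping around the two distinct sources of randomness: the per-round participation indicators $\mathbf{I}_k^t$ and the per-iteration minibatch sampling. I expect the cleanest route is to condition on the participation set $\mathcal{C}^t$ throughout this lemma---which is legitimate since it is fixed across the $\iota$ local iterations---so that the only stochasticity driving the martingale-difference argument is the minibatch noise and the variance bound of Assumption~\ref{assumption2} applies directly with the concatenated batch size $B$ in the denominator. Care is also needed to confirm that the importance weights $a_k/q_k$ appearing in the definition of $\tilde{\boldsymbol{g}}_s^{t,i}$ do not inflate the conditional variance beyond $N_s\sigma_{\max}^2/B$ once the participation set is fixed.
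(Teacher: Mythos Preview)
Your proposal is correct and follows essentially the same route as the paper's proof: telescoping the server updates, splitting $\tilde{\boldsymbol{g}}_s^{t,i}$ into drift $\nabla_{\mathbf{w}_s}\tilde{F}^{t,i}$ plus zero-mean noise via $\|a+b\|^2\le 2\|a\|^2+2\|b\|^2$, applying Jensen to the drift sum, using the martingale-difference property to collapse the cross terms of the noise, and bounding each per-iteration variance by $N_s\sigma_{\max}^2/B$ via the layer-wise Assumption~\ref{assumption2}. Your explicit conditioning on the participation set $\mathcal{C}^t$ and the filtration $\mathcal{F}^{t,i}$ is in fact more careful than the paper, which simply asserts ``the inner product is zero due to the unbiasedness of stochastic gradient'' and applies the variance bound without discussing the importance weights.
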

\begin{proof}
\begin{align}
&\mathbb{E}\left[\left\|\mathbf{w}_s^{t,\iota} - \mathbf{w}_s^t\right\|^2\right] 
= \eta^2 \mathbb{E}\left[\left\|\sum_{i=0}^{\iota-1} \tilde{\mathbf{g}}_s^{t,i}\right\|^2\right] \nonumber \\
&\leq \eta^2 \mathbb{E}\left[\left\|\sum_{i=0}^{\iota-1} \left(\tilde{\mathbf{g}}_s^{t,i} - \nabla_{\mathbf{w}_s} \tilde{F}^{t,i} + \nabla_{\mathbf{w}_s} \tilde{F}^{t,i}\right)\right\|^2\right] \nonumber \\
&\overset{(a)}{\leq} 2 \eta^2 \mathbb{E}\left[\left\|\sum_{i=0}^{\iota-1} \left(\tilde{\mathbf{g}}_s^{t,i} - \nabla_{\mathbf{w}_s} \tilde{F}^{t,i}\right)\right\|^2\right] \nonumber \\
& \quad\quad\quad\quad + 2 \eta^2 \iota \sum_{i=0}^{\iota-1} \mathbb{E}\left[\left\|\nabla_{\mathbf{w}_s} \tilde{F}^{t,i}\right\|^2\right] \nonumber \\
&\overset{(b)}{\leq} 2 \eta^2 \sum_{i=0}^{\iota-1} \mathbb{E}\left[\left\|\tilde{\mathbf{g}}_s^{t,i} - \nabla_{\mathbf{w}_s} \tilde{F}^{t,i}\right\|^2\right] \nonumber \\
&\quad\quad\quad\quad + 2 \eta^2 \iota \sum_{i=0}^{\iota-1} \mathbb{E}\left[\left\|\nabla_{\mathbf{w}_s} \tilde{F}^{t,i}\right\|^2\right] \nonumber \\
&\leq \frac{2 \eta^2 \iota N_s \sigma_{\text{max}}^2}{B} 
+ 2 \eta^2 \iota \sum_{i=0}^{\iota-1} \mathbb{E}\left[\left\|\nabla_{\mathbf{w}_s} \tilde{F}^{t,i}\right\|^2\right],
\end{align}
where (a) is derived through Jensen's inequality and (b) is because the inner product is zero due to the unbiasedness of stochastic gradient.
\end{proof}

\par
Under Assumption 1, the decrease of the loss function can be bounded as
\begin{align}\label{eqn:bound_all}
&\mathbb{E}[F(\mathbf{w}^{t+1})] - F(\mathbf{w}^t) \nonumber \\
&\leq \mathbb{E}\left[\langle \nabla_{\mathbf{w}_s} F(\mathbf{w}^t), \mathbf{w}_s^{t+1} - \mathbf{w}_s^t \rangle\right] 
+ \frac{\gamma}{2} \mathbb{E}[\|\mathbf{w}_s^{t+1} - \mathbf{w}_s^t\|^2]  \nonumber \\
&\quad + \mathbb{E}\left[\langle \nabla_{\mathbf{w}_c} F(\mathbf{w}^t), \mathbf{w}_c^{t+1} - \mathbf{w}_c^t \rangle\right] 
+ \frac{\gamma}{2} \mathbb{E}[\|\mathbf{w}_c^{t+1} - \mathbf{w}_c^t\|^2].
\end{align}
For the server, we have 
\begin{align}
    &\mathbb{E}\left[\langle \nabla_{\mathbf{w}_s} F(\mathbf{w}^t), \mathbf{w}_s^{t+1} - \mathbf{w}_s^t \rangle\right] \nonumber \\
    &= -\eta \sum_{i=0}^{I-1} \mathbb{E}\left[\langle \nabla_{\mathbf{w}_s} F(\mathbf{w}^t), \mathbf{\tilde{g}}_s^{t,i} \rangle\right]  \nonumber \\
&= -\eta \sum_{i=0}^{I-1} \mathbb{E}\left[\frac{1}{2} \left\|\nabla_{\mathbf{w}_s} F(\mathbf{w}^t)\right\|^2 
+ \frac{1}{2} \left\|\nabla_{\mathbf{w}_s} \tilde{F}^{t,i}\right\|^2 \right. \nonumber \\
&\quad-\left.\frac{1}{2} \left\|\nabla_{\mathbf{w}_s} F(\mathbf{w}^t) - \nabla_{\mathbf{w}_s} \tilde{F}^{t,i}\right\|^2 \right] \nonumber \\
&\leq -\frac{1}{2} \eta I \left\|\nabla_{\mathbf{w}_s} F(\mathbf{w}^t)\right\|^2 
- \frac{1}{2} \eta \sum_{i=0}^{I-1} \mathbb{E}\left[\left\|\nabla_{\mathbf{w}_s} \tilde{F}^{t,i}\right\|^2\right] \nonumber \\
 &\quad+ \frac{1}{2} \eta \sum_{i=0}^{I-1} \mathbb{E}\left[\left\|\nabla_{\mathbf{w}_s} F(\mathbf{w}^t) - \nabla_{\mathbf{w}_s} \tilde{F}^{t,i}\right\|^2\right]. 
\end{align}
For $\mathbb{E}\left[\left\|\nabla_{\mathbf{w}_s} F(\mathbf{w}^t) - \nabla_{\mathbf{w}_s} \tilde{F}^{t,i}\right\|^2\right] $ we have
\begin{align}
    &\mathbb{E}\left[\left\|\nabla_{\mathbf{w}_s} F(\mathbf{w}^t) - \nabla_{\mathbf{w}_s} \tilde{F}^{t,i}\right\|^2\right]  \nonumber \\
&\leq \mathbb{E}\left[\left\|\sum_{k=1}^K a_k \left(\nabla_{\mathbf{w}_s} F_k(\mathbf{w}^t) - \nabla_{\mathbf{w}_s} F_k\left(\left[\mathbf{w}_s^{t,i}; \mathbf{w}_{c,k}^{t,i}\right]\right)\right)\right\|^2\right]  \nonumber \\
&\leq \sum_{k=1}^K a_k \mathbb{E}\left[\left\|\nabla_{\mathbf{w}_s} F_k(\mathbf{w}^t) - \nabla_{\mathbf{w}_s} F_k^{t,i} \right.\right. \nonumber \\
& \quad\quad\quad\quad \left.\left. + \nabla_{\mathbf{w}_s} F_k^{t,i} - \nabla_{\mathbf{w}_s} F_k\left(\left[\mathbf{w}_s^{t,i}; \mathbf{w}_{c,k}^{t,i}\right]\right)\right\|^2\right]  \nonumber \\
&\leq 2 \sum_{k=1}^K a_k \gamma^2 \mathbb{E}\left[\left\|\mathbf{w}_s^t - \mathbf{w}_s^{t,i}\right\|^2\right] 
+ 2  \nu^2  \nonumber \\
&\leq 2 \gamma^2 \mathbb{E}\left[\left\|\mathbf{w}_s^t - \mathbf{w}_s^{t,i}\right\|^2\right] 
+ 2  \nu^2.
\end{align}
Therefore, we can obtain
\begin{align}\label{eqn:server_bound_1}
\mathbb{E}&\left[\langle \nabla_{\mathbf{w}_s} F(\mathbf{w}^t), \mathbf{w}_s^{t+1} - \mathbf{w}_s^t \rangle\right] + \frac{\gamma}{2} \mathbb{E}\left[\left\|\mathbf{w}_s^{t+1} - \mathbf{w}_s^t\right\|^2\right]\nonumber \\
&\overset{(a)}{\leq} -\frac{1}{2} \eta I \left\|\nabla_{\mathbf{w}_s} F(\mathbf{w}^t)\right\|^2 
+ \eta I \nu^2 \nonumber \\
&\quad + \frac{2 \eta^3 \gamma^2 I^2 N_s \sigma_{\text{max}}^2}{B} 
+ \frac{\gamma \eta^2 I N_s \sigma_{\text{max}}^2}{B} \nonumber \\
&\quad + \left(2 \eta^3 \gamma^2 I^2 + \gamma \eta^2 I - \frac{1}{2} \eta\right) \sum_{i=0}^{I-1} \mathbb{E}\left[\left\|\nabla_{\mathbf{w}_s} \tilde{F}^{t,i}\right\|^2\right], 
\end{align}
where (a) is derived based on Lemma \ref{norm_diff}. Assume that $\eta \leq \frac{1}{8 \gamma I}$, we have $2 \eta^3 \gamma^2 I^2 + \gamma \eta^2 I - \frac{1}{4} \eta \leq 0$, then \eqref{eqn:server_bound_1} can be further bounded as
\begin{align}\label{eqn:server_bound_all}
    \mathbb{E}&\left[\langle \nabla_{\mathbf{w}_s} F(\mathbf{w}^t), \mathbf{w}_s^{t+1} - \mathbf{w}_s^t \rangle\right] + \frac{\gamma}{2} \mathbb{E}\left[\left\|\mathbf{w}_s^{t+1} - \mathbf{w}_s^t\right\|^2\right]\nonumber \\
    &\leq-\frac{1}{4} \eta I \left\|\nabla_{\mathbf{w}_s} F(\mathbf{w}^t)\right\|^2 
+ \eta I \nu^2 + \nonumber \\
&\quad \frac{2 \eta^3 \gamma^2 I^2 N_s \sigma_{\text{max}}^2}{B} + \frac{\gamma \eta^2 I N_s \sigma_{\text{max}}^2}{B}.
\end{align}
For the client, we have
\begin{align}
&\mathbb{E}\left[\left\langle \nabla_{\mathbf{w}_c} F(\mathbf{w}^t), \mathbf{w}_c^{t+1} - \mathbf{w}_c^t \right\rangle\right] \nonumber\\
&\overset{(a)}{\leq} - \frac{1}{2} \eta I \left\|\nabla_{\mathbf{w}_c} F(\mathbf{w}^t)\right\|^2+\nonumber\\
&\frac{1}{2} \eta \sum_{i=0}^{I-1} \sum_{k=1}^K a_k \mathbb{E}\left[\left\|\nabla_{\mathbf{w}_c} F_k (\mathbf{w}^t) - \nabla_{\mathbf{w}_c} F_k \left(\left[\mathbf{w}_s^{t,i}; \mathbf{w}_{c,k}^{t,i}\right]\right)\right\|^2\right] \nonumber \\
&\quad 
- \frac{1}{2} \eta \sum_{i=0}^{I-1} \mathbb{E}\left[\left\|\sum_{k=1}^K a_k \nabla_{\mathbf{w}_c} F_k \left(\left[\mathbf{w}_s^{t,i}; \mathbf{w}_{c,k}^{t,i}\right]\right)\right\|^2\right],
\end{align}
where (a) is derived through Jensen’s inequality. According to \cite{yang2021achieving,wanglightweight}, when $\eta\leq\frac1{\sqrt{30}\gamma I}$, we have
\begin{align}
    \mathbb{E}\left[\left\|\mathbf{w}_c^t - \mathbf{w}_{c,k}^{t,i}\right\|^2\right] 
\leq& \frac{5 \eta^2 I N_c \sigma_{\max}^2}{B_k} 
+ 30 \eta^2 I^2 N_c \kappa_{\max}^2 \nonumber \\
&+ 30 \eta^2 I^2 \left\|\nabla_{\mathbf{w}_c} F(\mathbf{w}_c^t)\right\|^2.
\end{align}
Thus for $\mathbb{E}\left[\left\|\nabla_{\mathbf{w}_c} F_k (\mathbf{w}^t) - \nabla_{\mathbf{w}_c} F_k \left(\left[\mathbf{w}_s^{t,i}; \mathbf{w}_{c,k}^{t,i}\right]\right)\right\|^2\right]$, we have
\begin{align}
\mathbb{E}&\left[\left\|\nabla_{\mathbf{w}_c} F_k (\mathbf{w}^t) - \nabla_{\mathbf{w}_c} F_k \left(\left[\mathbf{w}_s^{t,i}; \mathbf{w}_{c,k}^{t,i}\right]\right)\right\|^2\right] \nonumber\\
&\overset{(a)}{\leq} \gamma^2 \mathbb{E}\left[\left\|\mathbf{w}_c^t - \mathbf{w}_{c,k}^{t,i}\right\|^2\right]. \nonumber \\
&\leq \frac{5 \eta^2 \gamma^2 I N_c \sigma_{\max}^2}{B_k} 
+ 30 \eta^2 \gamma^2 I^2 N_c \kappa_{\max}^2 \nonumber \\
&\quad+ 30 \eta^2 \gamma^2 I^2 \left\|\nabla_{\mathbf{w}_c} F(\mathbf{w}^t_c)\right\|^2,
\end{align}
where (a) is derived based on Assumption 1. Therefore, we can obtain
\begin{align}
    \mathbb{E}&\left[\langle \nabla_{\mathbf{w}_c} F(\mathbf{w}^t), \mathbf{w}_c^{t+1} - \mathbf{w}_c^t \rangle\right]  \nonumber \\
&\leq \frac{5 \eta^3 \gamma^2 I^2 N_c \sigma_{\max}^2 K}{2 B} + 15 \eta^3 \gamma^2 I^3 N_c \kappa_{\max}^2 
\nonumber\\
&\quad + \left(15 \eta^3 \gamma^2 I^3 - \frac{1}{2} \eta I\right) \left\|\nabla_{\mathbf{w}_c} F(\mathbf{w}^t)\right\|^2 \nonumber\\
&\quad - \frac{1}{2} \eta \sum_{i=0}^{I-1} \mathbb{E}\left[\left\|\sum_{k=1}^K a_k \nabla_{\mathbf{w}_c} F_k\left(\left[\mathbf{w}_s^{t,i}; \mathbf{w}_{c,k}^{t,i}\right]\right)\right\|^2\right].
\end{align}
Then for $\frac{\gamma}{2} \mathbb{E}\left[\left\|\mathbf{w}_c^{t+1} - \mathbf{w}_c^t\right\|^2\right]$, we have
\begin{align}\label{eqn:client_model_tmp}
    &\frac{\gamma}{2} \mathbb{E}\left[\left\|\mathbf{w}_c^{t+1} - \mathbf{w}_c^t\right\|^2\right] \nonumber\\
&= \frac{\gamma \eta^2}{2} \mathbb{E}\left[\left\|\sum_{i=0}^{I-1} \sum_{k=1}^K \frac{\mathbf{I}_k^t}{q_k} a_k \mathbf{g}_{c,k}\left(\left[\mathbf{w}_s^{t,i}; \mathbf{w}_{c,k}^{t,i}\right]\right)\right\|^2\right]  \nonumber \\
&\leq \gamma \eta^2 \mathbb{E}\left[\left\|\sum_{i=0}^{I-1} \sum_{k=1}^K \frac{\mathbf{I}_k^t}{q_k} a_k \left(\mathbf{g}_{c,k}\left(\left[\mathbf{w}_s^{t,i}; \mathbf{w}_{c,k}^{t,i}\right]\right) \right.\right.\right. \nonumber\\
& \quad\quad\quad\quad \left.\left.\left.- \nabla_{\mathbf{w}_c} F_k\left(\left[\mathbf{w}_s^{t,i}; \mathbf{w}_{c,k}^{t,i}\right]\right)\right)\right\|^2\right] \nonumber \\ 
&\quad + \gamma \eta^2 \mathbb{E}\left[\left\|\sum_{i=0}^{I-1} \sum_{k=1}^K \frac{\mathbf{I}_k^t}{q_k} a_k \nabla_{\mathbf{w}_c} F_k\left(\left[\mathbf{w}_s^{t,i}; \mathbf{w}_{c,k}^{t,i}\right]\right)\right\|^2\right] \nonumber \\
&\overset{(a)}{\leq} \gamma \eta^2 I \sum_{k=1}^K \frac{a_k^2 N_c \sigma_{\max}^2}{q_k B_k} 
+ \nonumber\\
&\gamma \eta^2 \sum_{k=1}^K \mathbb{E}\left[\left\|\frac{a_k}{q_k} \left(\mathbf{I}_k^t - q_k\right) \sum_{i=0}^{I-1} 
\nabla_{\mathbf{w}_c} F_k\left(\left[\mathbf{w}_s^{t,i}; \mathbf{w}_{c,k}^{t,i}\right]\right)\right\|^2\right] \nonumber \\ 
&\quad + \gamma \eta^2 \mathbb{E}\left[\left\|\sum_{i=0}^{I-1} \sum_{k=1}^K a_k \nabla_{\mathbf{w}_c} F_k\left(\left[\mathbf{w}_s^{t,i}; \mathbf{w}_{c,k}^{t,i}\right]\right)\right\|^2\right],
\end{align}
where (a) is because the inner product is zero due to the unbiasedness of stochastic gradient. Since $\mathbf{I}_k^t$ Follows the Bernoulli distribution, we have  $\mathbb{E}[\mathbf{I}^t_k] = q_k$ and $\mathrm{Var}[\mathbf{I}^t_k] = q_k(1 - q_k)$. Thus \eqref{eqn:client_model_tmp} can be bounded as 
\begin{align}\label{eqn:client_model}
& \frac{\gamma}{2} \mathbb{E}\left[\left\|\mathbf{w}_c^{t+1} - \mathbf{w}_c^t\right\|^2\right]
\leq \frac{\gamma \eta^2 I N_c \sigma_{\max}^2}{B \rho} \nonumber\\
&\quad + \frac{\gamma \eta^2 I}{\rho} \sum_{k=1}^K a_k^2 \sum_{i=0}^{I-1} \mathbb{E}\left[\left\|\nabla_{\mathbf{w}_c} F_k\left(\left[\mathbf{w}_s^{t,i}; \mathbf{w}_{c,k}^{t,i}\right]\right)\right\|^2\right]  \nonumber \\
&\quad + \gamma \eta^2 I \sum_{i=0}^{I-1} \mathbb{E}\left[\left\|\sum_{k=1}^K a_k \nabla_{\mathbf{w}_c} F_k\left(\left[\mathbf{w}_s^{t,i}; \mathbf{w}_{c,k}^{t,i}\right]\right)\right\|^2\right].
\end{align}
For $\mathbb{E}\left[\left\|\nabla_{\mathbf{w}_c} F_k\left(\left[\mathbf{w}_s^{t,i}; \mathbf{w}_{c,k}^{t,i}\right]\right)\right\|^2\right] $, we have
\begin{align}\label{eqn:client_fk}
    \mathbb{E}&\left[\left\|\nabla_{\mathbf{w}_c} F_k\left(\left[\mathbf{w}_s^{t,i}; \mathbf{w}_{c,k}^{t,i}\right]\right)\right\|^2\right]  \nonumber \\
&\leq 3 \gamma^2 \mathbb{E}\left[\left\|\mathbf{w}_c^t - \mathbf{w}_{c,k}^{t,i}\right\|^2\right] 
+ 3 N_c \kappa_{max}^2 
+ 3 \left\|\nabla_{\mathbf{w}_c} F(\mathbf{w}^t)\right\|^2 \nonumber \\
&\leq \frac{15 \eta^2 \gamma^2 I N_c \sigma_{max}^2}{B_k} 
+ 90 \eta^2 \gamma^2 I^2 N_c \kappa_{max}^2 
+ 3 N_c \kappa_{max}^2 \nonumber \\
&\quad+ 3 \left(30 \eta^2 \gamma^2 I^2 + 1\right) \left\|\nabla_{\mathbf{w}_c} F(\mathbf{w}^t)\right\|^2.
\end{align}
Incorporating \eqref{eqn:client_fk} into \eqref{eqn:client_model}, we can further bound $\frac{\gamma}{2} \mathbb{E}\left[\left\|\mathbf{w}_c^{t+1} - \mathbf{w}_c^t\right\|^2\right]$ as
\begin{align}
    &\frac{\gamma}{2} \mathbb{E}\left[\left\|\mathbf{w}_c^{t+1} - \mathbf{w}_c^t\right\|^2\right] \nonumber \\
&\leq \frac{\gamma \eta^2 I N_c \sigma_{max}^2}{B \rho} 
+ \frac{15 \eta^4 \gamma^3 I^3 N_c \sigma_{max}^2}{B \rho} \nonumber \\
&\quad+ \frac{90 \eta^4 \gamma^3 I^4 N_c \kappa_{max}^2}{\rho} 
+ \frac{3 \gamma \eta^2 I^2 N_c \kappa_{max}^2}{\rho} \nonumber \\
& \quad+ \frac{3 \gamma \eta^2 I^2 \left(30 \eta^2 \gamma^2 I^2 + 1\right) \left\|\nabla_{\mathbf{w}_c} F(\mathbf{w}^t)\right\|^2}{\rho} \nonumber \\
&\quad + \gamma \eta^2 I \sum_{i=0}^{I-1} \mathbb{E}\left[\left\|\sum_{k=1}^K a_k \nabla_{\mathbf{w}_c} F_k\left(\left[\mathbf{w}_s^{t,i}; \mathbf{w}_{c,k}^{t,i}\right]\right)\right\|^2\right].
\end{align}
Therefore, we can obtain
\begin{align}\label{eqn:client_bound_1}
    \mathbb{E}&\left[\langle \nabla_{\mathbf{w}_c} F(\mathbf{w}^t), \mathbf{w}_c^{t+1} - \mathbf{w}_c^t \rangle\right] 
+ \frac{\gamma}{2} \mathbb{E}[\|\mathbf{w}_c^{t+1} - \mathbf{w}_c^t\|^2]\nonumber \\
&\leq \frac{5 \eta^3 \gamma^2 I^2 N_c \sigma_{\max}^2 K}{2 B} + 15 \eta^3 \gamma^2 I^3 N_c \kappa_{\max}^2 \nonumber \\
&\quad+ \left(15 \eta^3 \gamma^2 I^3 - \frac{1}{2} \eta I\right) \left\|\nabla_{\mathbf{w}_c} F(\mathbf{w}^t)\right\|^2 \nonumber\\
&\quad+ \frac{\gamma \eta^2 I N_c \sigma_{max}^2}{B \rho} 
+ \frac{15 \eta^4 \gamma^3 I^3 N_c \sigma_{max}^2}{B \rho} \nonumber \\
&\quad + \frac{90 \eta^4 \gamma^3 I^4 N_c \kappa_{max}^2}{\rho} 
+ \frac{3 \gamma \eta^2 I^2 N_c \kappa_{max}^2}{\rho} \nonumber \\
&\quad + \frac{3 \gamma \eta^2 I^2 \left(30 \eta^2 \gamma^2 I^2 + 1\right) \left\|\nabla_{\mathbf{w}_c} F(\mathbf{w}^t)\right\|^2}{\rho} \nonumber \\
&+(\gamma \eta^2 I-\frac{\eta}{2}) \sum_{i=0}^{I-1} \mathbb{E}\left[\left\|\sum_{k=1}^K a_k \nabla_{\mathbf{w}_c} F_k\left(\left[\mathbf{w}_s^{t,i}; \mathbf{w}_{c,k}^{t,i}\right]\right)\right\|^2\right].
\end{align}
Assume that $\eta \leq \frac{\rho}{36 \gamma I}$, we have $\gamma \eta^2 I-\frac{1}{2}\eta$ and $\frac{3 \gamma \eta^2 I^2 \left(30 \eta^2 \gamma^2 I^2 + 1\right)}{\rho} + 15 \eta^3 \gamma^2 I^3 - \frac{1}{2} \eta I \leq \frac{1}{4} \eta I$, then \eqref{eqn:client_bound_1} can be further bounded as 
\begin{align}\label{eqn:client_bound_all}
    &\mathbb{E}\left[\langle \nabla_{\mathbf{w}_c} F(\mathbf{w}^t), \mathbf{w}_c^{t+1} - \mathbf{w}_c^t \rangle\right] 
+ \frac{\gamma}{2} \mathbb{E}[\|\mathbf{w}_c^{t+1} - \mathbf{w}_c^t\|^2]\nonumber \\
&\leq \frac{1}{4} \eta I \left\|\nabla_{\mathbf{w}_c} F(\mathbf{w}^t)\right\|^2 + \frac{5 \eta^3 \gamma^2 I^2 N_c \sigma_{\max}^2 K}{2 B} \nonumber\\
&+ 15 \eta^3 \gamma^2 I^3 N_c \kappa_{\max}^2 + \frac{\gamma \eta^2 I N_c \sigma_{max}^2}{B \rho} + \frac{15 \eta^4 \gamma^3 I^3 N_c \sigma_{max}^2}{B \rho}    \nonumber\\
&
+ \frac{90 \eta^4 \gamma^3 I^4 N_c \kappa_{max}^2}{\rho} 
+ \frac{3 \gamma \eta^2 I^2 N_c \kappa_{max}^2}{\rho}.
\end{align}
Incorporating \eqref{eqn:client_bound_all} and \eqref{eqn:server_bound_all} into \eqref{eqn:bound_all}, and then taking the total expectation and averaging over all rounds, we can obtain the convergence rate:
\begin{align}
    &\frac{1}{T} \sum_{t=0}^{T-1} \mathbb{E}\left[\left\|\nabla_{\mathbf{w}} F(\mathbf{w}^t)\right\|^2\right] 
\leq \frac{F(\mathbf{w}^0) - F^*}{\frac{1}{4} \eta T I} 
+ 4 \nu^2 \nonumber\\
&+ \frac{8 \eta^2 \gamma^2 I N_s \sigma_{\text{max}}^2}{B} 
+ \frac{4 \gamma \eta N_s \sigma_{\text{max}}^2}{B}+ \frac{20 \eta^2 \gamma^2 I N_c \sigma_{max}^2 K}{2B}  \nonumber \\
&
+ 60 \eta^2 \gamma^2 I^2 N_c \kappa_{max}^2 
+ \frac{360 \eta^3 \gamma^3 I^3 N_c \kappa_{max}^2}{\rho} \nonumber \\
&+ \frac{12 \gamma \eta I N_c \kappa_{max}^2}{\rho} + \frac{4 \gamma \eta N_c \sigma_{max}^2}{B \rho} 
+ \frac{60 \eta^3 \gamma^3 I^2 N_c \sigma_{max}^2}{B \rho}.
\end{align}
Assume that $\eta = \Theta\left(\frac{1}{\sqrt{T I}}\right)$ and omit the lower-order terms, we can obtain the convergence of the concatenated activations enabled SFL:
\begin{align}
&\frac{1}{T} \sum_{t=0}^{T-1} \mathbb{E}\left[\left\|\nabla_{\mathbf{w}} F(\mathbf{w}^t)\right\|^2\right] \leq \mathcal{O} \left( \frac{F(\mathbf{w}^0) - F^*}{\sqrt{T I}}+\nu^2 \right) \nonumber \\
&\quad+ \mathcal{O} \left( \frac{N_s \sigma_{\max}^2}{B \sqrt{T I}} \right) + \mathcal{O} \left( \frac{N_c}{\rho\sqrt{T}}\left(\frac{ \sigma_{\max}^2}{B\sqrt{I}} + \sqrt{I} \kappa_{\max}^2\right) \right),
\end{align}
which completes the proof.
\section{Proof of Theorem 2}
\label{proof_theorem2}
We first analyze the update process of a classifier when using the softmax cross-entropy loss function, as illustrated in the following lemma:
\begin{lemma}\label{lemma1}
    Consider a dataset $\mathcal{D}$ with $M$ classes and assume that Assumption 5 holds, we can obtain the update of the logit when using the softmax cross-entropy loss function as
    \begin{align}\label{eqn:lemma1}
        \Delta \zeta_y\cdot \pi_y =\eta P(y)\text{avg}_y\left(\frac{\sum_{y^{\prime}\neq y}e^{s_{y^{\prime}}(\mathbf{x})-s_{y}(\mathbf{x})} }{1+\sum_{y^{\prime}\neq y}e^{s_{y^{\prime}}(\mathbf{x})-s_{y}(\mathbf{x})} }\right)\pi_y\cdot\pi_y,
    \end{align}
where $\text{avg}_y(\cdot)=\frac{1}{|\mathcal{D}_y|}\sum_{\mathbf{x}\in\mathcal{D}_y}(\cdot)$. 
\end{lemma}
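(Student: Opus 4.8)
The plan is to differentiate the empirical softmax cross-entropy loss with respect to the classifier weight $\zeta_y$, take one gradient-descent step, and then project the resulting update onto the class mean $\pi_y$, invoking the orthogonality in Assumption~\ref{assumption5} to discard every contribution coming from samples of other classes. First I would write the empirical loss as $L=\frac{1}{|\mathcal{D}|}\sum_{(\mathbf{x}_i,c_i)\in\mathcal{D}} g(c_i,s(\mathbf{x}_i))$ with $g$ the softmax cross-entropy and $s_{y'}(\mathbf{x})=\zeta_{y'}\cdot\pi(\mathbf{x})$. Using \eqref{eqn:p_y} and the chain rule through the logits, the per-sample gradient with respect to $\zeta_y$ collapses to the familiar softmax residual $\big(p_y(\mathbf{x}_i)-\mathbf{1}[c_i=y]\big)\pi(\mathbf{x}_i)$, since $\partial s_{y'}/\partial\zeta_y=\mathbf{1}[y'=y]\,\pi(\mathbf{x})$. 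The gradient-descent step then gives $\Delta\zeta_y=-\frac{\eta}{|\mathcal{D}|}\sum_i\big(p_y(\mathbf{x}_i)-\mathbf{1}[c_i=y]\big)\pi(\mathbf{x}_i)$.

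Next I would take the inner product with $\pi_y$ and split the sum according to whether a sample's label equals $y$. For a sample with $c_i\neq y$ the term carries the factor $\pi(\mathbf{x}_i)\cdot\pi_y$; treating each sample feature as aligned with its class mean, Assumption~\ref{assumption5} forces $\pi_{c_i}\cdot\pi_y=0$, so every cross-class contribution vanishes and only the samples in $\mathcal{D}_y$ survive. On these remaining terms I would rewrite the residual as $1-p_y(\mathbf{x}_i)=\frac{\sum_{y'\neq y}e^{s_{y'}(\mathbf{x}_i)-s_y(\mathbf{x}_i)}}{1+\sum_{y'\neq y}e^{s_{y'}(\mathbf{x}_i)-s_y(\mathbf{x}_i)}}$, obtained by dividing the numerator and denominator of $1-p_y$ by $e^{s_y(\mathbf{x}_i)}$. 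I would then convert $\frac{1}{|\mathcal{D}|}\sum_{\mathbf{x}_i\in\mathcal{D}_y}=P(y)\,\text{avg}_y$ using $|\mathcal{D}_y|/|\mathcal{D}|=P(y)$, and factor out $\pi_y\cdot\pi_y$ via the per-class identity $\pi(\mathbf{x}_i)\cdot\pi_y=\pi_y\cdot\pi_y$ on $\mathcal{D}_y$. This reproduces exactly \eqref{eqn:lemma1}, with the sign automatically positive since $1-p_y(\mathbf{x}_i)>0$, consistent with the stated goal that the update raises the logit of the true class.

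The main obstacle is the handling of Assumption~\ref{assumption5}, which is stated only for the \emph{class-averaged} features $\pi_y$, whereas the computation above needs a per-sample statement: $\pi(\mathbf{x}_i)\cdot\pi_y=0$ for $c_i\neq y$ and $\pi(\mathbf{x}_i)\cdot\pi_y=\pi_y\cdot\pi_y$ for $c_i=y$. I would bridge this gap by adopting the idealized feature-extractor interpretation underlying the assumption, in which each sample feature coincides with (or is collinear with) its class prototype $\pi_{c_i}$, so that averaged orthogonality lifts to the per-sample identities used above; once this reduction is accepted, the remaining manipulations are purely algebraic. This lemma is the workhorse for Theorem~\ref{theorem2}, where the same derivation applied to the logit-adjusted loss merely replaces each $e^{s_{y'}-s_y}$ by $e^{s_{y'}-s_y}\,P(y')/P(y)$, after which the monotonicity of $t\mapsto t/(1+t)$ and the limits $P(y)\to0,1$ yield the claimed comparison.
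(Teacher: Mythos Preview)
Your proposal is correct and follows essentially the same route as the paper: compute the softmax-residual gradient $(p_y(\mathbf{x})-\mathbf{1}[c=y])\pi(\mathbf{x})$, split the sum by class, dot with $\pi_y$, and then use the within-class feature similarity together with Assumption~\ref{assumption5} to factor out $\pi_y\cdot\pi_y$ and kill the cross-class terms. The only cosmetic difference is that the paper handles the per-sample versus class-mean gap you flag by explicitly invoking ``Property 1'' from \cite{zhang2022federated} (features of the same label are similar) before applying Assumption~\ref{assumption5}, whereas you fold that step directly into an idealized prototype interpretation; the substance is identical.
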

\begin{proof}
For simplicity of presentation, we denote $S(y^{\prime})$ as $s_{y^{\prime}}(\mathbf{x}) - s_{y}(\mathbf{x})$. Given a dataset $\mathcal{D}$ with $M$ classes, we can obtain the update of the classifier of label $y$ when using the softmax cross-entropy loss function as
\begin{align}\label{eqn:Delta1}
    &\Delta \zeta_y =  -\eta P(y) \frac{1}{|\mathcal{D}_y|}\sum_{\mathbf{x}_i\in\mathcal{D}_y}\frac{\partial g(y, s(\mathbf{x}_i))}{\partial \zeta_y} 
    - \nonumber\\
    &\quad\quad-\eta \sum_{y^{\prime}\neq y}P(y^{\prime})\frac{1}{|\mathcal{D}_{y^{\prime}}|}\sum_{\mathbf{x}_i\in\mathcal{D}_{y^{\prime}}}\frac{\partial g(y^{\prime}, s(\mathbf{x}_i))}{\partial \zeta_y} \nonumber\\
     &= \eta P(y) \frac{1}{|\mathcal{D}_y|}\sum_{\mathbf{x}_i\in\mathcal{D}_y}(1- p_y(\mathbf{x}_i))\pi(\mathbf{x}_i)\nonumber\\
    &\quad\quad
     - \eta \sum_{y^{\prime}\neq y}P(y^{\prime})\frac{1}{|\mathcal{D}_{y^{\prime}}|}\sum_{\mathbf{x}_i\in\mathcal{D}_{y^{\prime}}}p_y(\mathbf{x}_i)\pi(\mathbf{x}_i)\nonumber\\
     &=\eta P(y)\text{avg}_y\left(\frac{\sum_{y^{\prime}\neq y}e^{S(y^{\prime})} }{1+\sum_{y^{\prime}\neq y}e^{S(y^{\prime})} }\pi(\mathbf{x})\right) \nonumber\\
    &\quad\quad- \eta \sum_{y^{\prime}\neq y}P(y^{\prime})\text{avg}_{y^{\prime}}\left(\frac{1}{1+\sum_{y^{\prime}\neq y}e^{S(y^{\prime})} }\pi(\mathbf{x})\right),
\end{align}
where $\mathcal{D}_y$ is the subset of dataset $\mathcal{D}$ with label $y$ and $\text{avg}_y(\cdot)=\frac{1}{|\mathcal{D}_y|}\sum_{\mathbf{x}\in\mathcal{D}_y}(\cdot)$. 
By introducing \eqref{eqn:Delta1}, we can obtain the update of the logit of label $y$ as
\begin{align}\label{eqn:Delta2}
    &\Delta \zeta_y\cdot \pi_y
    =\eta P(y)\text{avg}_y\left(\frac{\sum_{y^{\prime}\neq y}e^{S(y^{\prime})} }{1+\sum_{y^{\prime}\neq y}e^{S(y^{\prime})} }\pi(\mathbf{x})\cdot \pi_y\right) \nonumber\\
    &\quad- \eta \sum_{y^{\prime}\neq y}P(y^{\prime})\text{avg}_{y^{\prime}}\left(\frac{1}{1+\sum_{y^{\prime}\neq y}e^{S(y^{\prime})} }\pi(\mathbf{x})\cdot \pi_y\right),
\end{align}
where $\pi_y$ is the averaged model feature of label $y$ defined as $\pi_y=\frac{1}{|\mathcal{D}_y|}\sum_{\mathbf{x}_i\in\mathcal{D}_y}\pi(\mathbf{x}_i)$. When model feature $\pi(\mathbf{x})$ of label $y$ is similar, according to Property 1 in \cite{zhang2022federated}, \eqref{eqn:Delta2} can be approximated as
\begin{align}\label{eqn:lemma1_ori}
    &\Delta \zeta_y\cdot \pi_y\approx\eta P(y)\text{avg}_y\left(\frac{\sum_{y^{\prime}\neq y}e^{S(y^{\prime})} }{1+\sum_{y^{\prime}\neq y}e^{S(y^{\prime})} }\right)\pi_y\cdot\pi_y \nonumber\\
    &- \eta \sum_{y^{\prime}\neq y}P(y^{\prime})\text{avg}_{y^{\prime}}\left(\frac{1}{1+\sum_{y^{\prime}\neq y}e^{S(y^{\prime})} }\right)\pi_y\cdot\pi_{y^{\prime}}.
\end{align}
When Assumption 5 holds, we can rewrite \eqref{eqn:lemma1_ori} as \eqref{eqn:lemma1} and completes the proof.
\end{proof}
Lemma \ref{lemma1} indicates that the update of the logit $\Delta \zeta_y\cdot \pi_y$ is positively correlated with the label distribution $P(y)$, where $\Delta \zeta_y\cdot \pi_y$ decreases with the reduction of $P(y)$ and ultimately tends to $0$. Therefore, the classifier will exhibit a bias, because it neglects the prediction of low-frequency labels and outputs higher accuracy for high-frequency labels.
\par
Based on the softmax cross-entropy loss function with logit adjustment defined as $-\log\left[\frac{e^{s_y(\mathbf{x})+\log P(y)}}{\sum_{y^{\prime}=1}^M e^{s_{y^{\prime}}(\mathbf{x})+\log P(y^{\prime})}}\right]$, we propose the following lemma:
\begin{lemma}\label{lemma2}
     Let $\Delta \zeta^{\text{bal}} $ be the update of the classifier $\zeta$ when using softmax cross-entropy loss function with logit adjustment.
    Consider a dataset $\mathcal{D}$ with $M$ classes and assume that Assumption 5 holds,  we can obtain the corresponding update of the logit as
    \begin{align}\label{eqn:lemma2}
        &\Delta\zeta^{\text{bal}}_y\cdot\pi_y \nonumber\\
        &=\eta \text{avg}_y\left(\frac{P(y)\sum_{y^{\prime}\neq y}P(y^{\prime})e^{s_{y^{\prime}}(\mathbf{x})-s_y(\mathbf{x})}}{P(y)+\sum_{y^{\prime}\neq y}P(y^{\prime})e^{s_{y^{\prime}}(\mathbf{x})-s_y(\mathbf{x})}}\right)\pi_y\cdot\pi_y.
    \end{align}
\end{lemma}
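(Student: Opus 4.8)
The plan is to mirror the derivation of Lemma \ref{lemma1} step by step, replacing the plain softmax probabilities with the logit-adjusted ones, and to invoke the same two structural ingredients (Property 1 of \cite{zhang2022federated} and Assumption \ref{assumption5}) at the end. First I would record the adjusted softmax probability $p_y^{\text{bal}}(\mathbf{x}) = \frac{P(y)e^{s_y(\mathbf{x})}}{\sum_{y'} P(y') e^{s_{y'}(\mathbf{x})}}$, obtained by exponentiating the logit shifts in \eqref{eqn:g}, and then differentiate the adjusted loss. Since $g^{\text{bal}}(c, s(\mathbf{x})) = -s_c(\mathbf{x}) - \log P(c) + \log \sum_{y'} P(y') e^{s_{y'}(\mathbf{x})}$ depends on $\zeta_y$ only through $s_y(\mathbf{x}) = \zeta_y \cdot \pi(\mathbf{x})$, the gradient retains the familiar softmax form $\frac{\partial g^{\text{bal}}(c, s(\mathbf{x}))}{\partial \zeta_y} = (p_y^{\text{bal}}(\mathbf{x}) - \delta_{c,y})\pi(\mathbf{x})$, the only change from Lemma \ref{lemma1} being that the $P(y')$-weights now sit inside $p_y^{\text{bal}}$.

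Second, I would assemble the classifier update exactly as in \eqref{eqn:Delta1}: summing the per-class gradient-descent contributions weighted by $P(c)$ and averaged over $\mathcal{D}_c$, then separating the $c=y$ term (which carries the factor $1 - p_y^{\text{bal}}(\mathbf{x})$) from the $c = y' \neq y$ terms (which carry $p_y^{\text{bal}}(\mathbf{x})$). This produces $\Delta\zeta_y^{\text{bal}} = \eta P(y)\,\text{avg}_y[(1 - p_y^{\text{bal}}(\mathbf{x}))\pi(\mathbf{x})] - \eta \sum_{y' \neq y} P(y')\,\text{avg}_{y'}[p_y^{\text{bal}}(\mathbf{x})\pi(\mathbf{x})]$, in direct analogy with the two-line structure of \eqref{eqn:Delta1}.

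Third comes the key algebraic simplification: dividing numerator and denominator of $1 - p_y^{\text{bal}}(\mathbf{x})$ by $P(y)e^{s_y(\mathbf{x})}$ rewrites it as $\frac{\sum_{y'\neq y} P(y') e^{s_{y'}(\mathbf{x}) - s_y(\mathbf{x})}}{P(y) + \sum_{y'\neq y} P(y') e^{s_{y'}(\mathbf{x}) - s_y(\mathbf{x})}}$, which is precisely the kernel appearing in \eqref{eqn:lemma2}. Taking the dot product with $\pi_y$ and then applying Property 1 of \cite{zhang2022federated} (the features within class $y$ are approximately aligned, so $\pi(\mathbf{x})$ may be replaced by $\pi_y$ under $\text{avg}_y$, and $\pi(\mathbf{x})$ by $\pi_{y'}$ under $\text{avg}_{y'}$) pulls the scalar kernels out of the averages. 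Finally, Assumption \ref{assumption5} forces $\pi_{y'}\cdot\pi_y = 0$ for every $y' \neq y$, annihilating the entire second summation; since $P(y)$ is constant under $\text{avg}_y$, absorbing it into that average yields exactly \eqref{eqn:lemma2}.

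I expect the main obstacle to be the same heuristic step as in Lemma \ref{lemma1}: the passage from the exact quantity $\text{avg}_y[(1-p_y^{\text{bal}}(\mathbf{x}))\,\pi(\mathbf{x})\cdot\pi_y]$ to $\text{avg}_y[1-p_y^{\text{bal}}(\mathbf{x})]\,\pi_y\cdot\pi_y$ relies on the within-class feature-similarity approximation of Property 1 rather than on an exact identity, so the stated equality holds in the same approximate sense as \eqref{eqn:lemma1} unless that alignment is exact. Once the adjusted gradient is in hand, the remaining orthogonality and constant-absorption steps are mechanical.
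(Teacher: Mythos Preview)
Your proposal is correct and follows essentially the same route as the paper's own proof: compute the gradient of the logit-adjusted cross-entropy to obtain the two-term update analogous to \eqref{eqn:Delta1}, take the dot product with $\pi_y$, invoke Property~1 of \cite{zhang2022federated} to pull the scalar kernels outside the averages, and then use Assumption~\ref{assumption5} to kill the cross terms. You also correctly flag that the equality is only approximate via Property~1, which the paper treats the same way.
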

\begin{proof}
Given a dataset $\mathcal{D}$ with $M$ classes, we can obtain the update of the classifier of label $y$ when using the softmax cross-entropy loss function with logit adjustment as
\begin{align}\label{eqn:Delta3}
    &\Delta \zeta_y =  -\eta P(y) \frac{1}{|\mathcal{D}_y|}\sum_{\mathbf{x}_i\in\mathcal{D}_y}\frac{\partial g^{\text{bal}}(y, s(\mathbf{x}_i))}{\partial \zeta_y} \nonumber\\
    &\quad- \eta \sum_{y^{\prime}\neq y}P(y^{\prime})\frac{1}{|\mathcal{D}_{y^{\prime}}|}\sum_{\mathbf{x}_i\in\mathcal{D}_{y^{\prime}}}\frac{\partial g^{\text{bal}}(y^{\prime}, s(\mathbf{x}_i))}{\partial \zeta_y} \nonumber\\
     &=\eta \text{avg}_y\left(\frac{P(y)\sum_{y^{\prime}\neq y}P(y^{\prime})e^{S(y^{\prime})} }{P(y)+\sum_{y^{\prime}\neq y}P(y^{\prime})e^{S(y^{\prime})} }\pi(\mathbf{x})\right) \nonumber\\
    &\quad- \eta \sum_{y^{\prime}\neq y}P(y^{\prime})\text{avg}_{y^{\prime}}
    \left(\frac{P(y)}{P(y)+\sum_{y^{\prime}\neq y}P(y^{\prime})e^{S(y^{\prime})} }\pi(\mathbf{x})\right).
\end{align}
By introducing \eqref{eqn:Delta3}, we can obtain the update of the logit of label $y$ as
\begin{align}\label{eqn:Delta4}
&\Delta\zeta^{\text{bal}}_y\cdot\pi_y=\eta \text{avg}_y\left(\frac{P(y)\sum_{y^{\prime}\neq y}P(y^{\prime})e^{S(y^{\prime})}}{P(y)+\sum_{y^{\prime}\neq y}P(y^{\prime})e^{S(y^{\prime})}}\pi(\mathbf{x})\cdot\pi_y\right)
\nonumber\\
&- \eta \sum_{y^{\prime}\neq y}P(y^{\prime})\nonumber\\
&\quad\quad\quad \text{avg}_{y^{\prime}}\left(\frac{P(y)}{P(y)+\sum_{y^{\prime}\neq y}P(y^{\prime})e^{S(y^{\prime})} }\pi(\mathbf{x})\cdot\pi_y\right). 
\end{align}
When model feature $\pi(\mathbf{x})$ of label $y$ is similar, according to Property 1 in \cite{zhang2022federated}, \eqref{eqn:Delta4} can be approximated as
\begin{align}\label{eqn:lemma2_ori}
&\Delta\zeta^{\text{bal}}_y\cdot\pi_y\approx\eta \text{avg}_y\left(\frac{P(y)\sum_{y^{\prime}\neq y}P(y^{\prime})e^{S(y^{\prime})}}{P(y)+\sum_{y^{\prime}\neq y}P(y^{\prime})e^{S(y^{\prime})}}\right)\pi_y\cdot\pi_y
\nonumber\\
&- \eta \sum_{y^{\prime}\neq y}P(y^{\prime})\text{avg}_{y^{\prime}}\left(\frac{P(y)}{P(y)+\sum_{y^{\prime}\neq y}P(y^{\prime})e^{S(y^{\prime})} }\right)\pi_y\cdot\pi_{y^{\prime}}. 
\end{align}
When Assumption 5 holds, we can rewrite \eqref{eqn:lemma2_ori} as \eqref{eqn:lemma2} and completes the proof.
\end{proof}


Lemma \ref{lemma2} reveals that, compared with the traditional loss function, a loss function with logit adjustment can balance the updates of the classifier under skewed label distributions and can achieve similar recognition capabilities for both low-frequency and high-frequency labels. Specifically, when $P(y)$ is low, the update of the logit starts from $0$ and increases with the increase of $P(y)$ to promote recognition of low-frequency labels. When $P(y)$ is high, the update of the logit decreases and approaches $0$ with the increase of $P(y)$ to prevent biased updating of high-frequency labels. 
\par
Based on Lemma \ref{lemma1} and Lemma \ref{lemma2}, it is straightforward to derive that as P(y) approaches 1, 
\begin{align}
    \Delta\zeta^{\text{bal}}_y\cdot\pi_y=0,
\end{align}
while  
\begin{align}
    \Delta \zeta_y\cdot \pi_y =\eta \text{avg}_y\left(\frac{\sum_{y^{\prime}\neq y}e^{s_{y^{\prime}}(\mathbf{x})-s_{y}(\mathbf{x})} }{1+\sum_{y^{\prime}\neq y}e^{s_{y^{\prime}}(\mathbf{x})-s_{y}(\mathbf{x})} }\right)\pi_y\cdot\pi_y > 0.
\end{align}
Hence, it can be concluded that
\begin{align}
   \lim_{P(y)\rightarrow 1}  \Delta\zeta^{\text{bal}}y\cdot\pi_y < \lim_{P(y)\rightarrow 1} \Delta\zeta_y\cdot\pi_y.
\end{align}
To clarify the conclusion, we consider a scenario where all labels are uniformly distributed except for label $y$, that is, for all $y^{\prime}\neq y$, $P(y^{\prime}) = \frac{1-P(y)}{M-1}$. Then \eqref{eqn:lemma2} can be rewritten as 
\begin{align}
    &\Delta\zeta^{\text{bal}}_y\cdot\pi_y= \nonumber\\
    &\eta \text{avg}_y\left(\frac{P(y)(1-P(y))\sum_{y^{\prime}\neq y}e^{S(y^{\prime})}}{P(y)(M-1)+(1-P(y))\sum_{y^{\prime}\neq y}e^{S(y^{\prime})}}\right)\pi_y\cdot\pi_y.
\end{align}
Let $E$ denote $\sum_{y^{\prime}\neq y}e^{s_{y^{\prime}}(\mathbf{x})-s_y(\mathbf{x})}$, we have
\begin{align}\label{eqn:par1}
    &\lim_{P(y)\rightarrow 0}\frac{\partial \Delta\zeta^{\text{bal}}_y\cdot\pi_y}{\partial P(y)} = \eta \pi_y\cdot\pi_y,
\end{align}
and
\begin{align}\label{eqn:par2}
    \lim_{P(y)\rightarrow 0}\frac{\partial \Delta\zeta_y\cdot\pi_y}{\partial P(y)} = \eta \text{avg}_y\left(\frac{E}{1+E}\right)\pi_y\cdot\pi_y.
\end{align}
According to \eqref{eqn:par1} and \eqref{eqn:par2}, we have
\begin{align}
    \lim_{P(y)\rightarrow 0}\frac{\partial \Delta\zeta^{\text{bal}}_y\cdot\pi_y}{\partial P(y)} > \lim_{P(y)\rightarrow 0}\frac{\partial \Delta\zeta_y\cdot\pi_y}{\partial P(y)}.
\end{align}
Consequently, as $P(y)$ approaches $0$, $\Delta\zeta^{\text{bal}}_y\cdot\pi_y$ will increase at a faster rate with the increase of $P(y)$. Given that $\Delta\zeta^{\text{bal}}_y\cdot\pi_y =\Delta\zeta_y\cdot\pi_y=0$ when $P(y)=0$, we can draw the conclusion:
\begin{align}
        \lim_{P(y)\rightarrow 0} \Delta\zeta^{\text{bal}}_y\cdot\pi_y > \lim_{P(y)\rightarrow 0} \Delta\zeta_y\cdot\pi_y,
\end{align}
which completes the proof.

\bibliographystyle{IEEEtran}
\bibliography{IEEEabrv,link}

\begin{thebibliography}{10}
\providecommand{\url}[1]{#1}
\csname url@samestyle\endcsname
\providecommand{\newblock}{\relax}
\providecommand{\bibinfo}[2]{#2}
\providecommand{\BIBentrySTDinterwordspacing}{\spaceskip=0pt\relax}
\providecommand{\BIBentryALTinterwordstretchfactor}{4}
\providecommand{\BIBentryALTinterwordspacing}{\spaceskip=\fontdimen2\font plus
\BIBentryALTinterwordstretchfactor\fontdimen3\font minus \fontdimen4\font\relax}
\providecommand{\BIBforeignlanguage}[2]{{%
\expandafter\ifx\csname l@#1\endcsname\relax
\typeout{** WARNING: IEEEtran.bst: No hyphenation pattern has been}%
\typeout{** loaded for the language `#1'. Using the pattern for}%
\typeout{** the default language instead.}%
\else
\language=\csname l@#1\endcsname
\fi
#2}}
\providecommand{\BIBdecl}{\relax}
\BIBdecl

\bibitem{9016486}
J.~Jeon and J.~Kim, ``Privacy-sensitive parallel split learning,'' in \emph{2020 International Conference on Information Networking (ICOIN)}, 2020, pp. 7--9.

\bibitem{thapa2022splitfed}
C.~Thapa, P.~C.~M. Arachchige, S.~Camtepe, and L.~Sun, ``Splitfed: When federated learning meets split learning,'' in \emph{Proceedings of the AAAI Conference on Artificial Intelligence}, vol.~36, no.~8, 2022, pp. 8485--8493.

\bibitem{gawali2021comparison}
M.~Gawali, C.~Arvind, S.~Suryavanshi, H.~Madaan, A.~Gaikwad, K.~Bhanu~Prakash, V.~Kulkarni, and A.~Pant, ``Comparison of privacy-preserving distributed deep learning methods in healthcare,'' in \emph{Medical Image Understanding and Analysis: 25th Annual Conference, MIUA 2021, Oxford, United Kingdom, July 12--14, 2021, Proceedings 25}.\hskip 1em plus 0.5em minus 0.4em\relax Springer, 2021, pp. 457--471.

\bibitem{9084352}
T.~Li, A.~K. Sahu, A.~Talwalkar, and V.~Smith, ``Federated learning: Challenges, methods, and future directions,'' \emph{IEEE Signal Processing Magazine}, vol.~37, no.~3, pp. 50--60, 2020.

\bibitem{mcmahan2017communication}
B.~McMahan, E.~Moore, D.~Ramage, S.~Hampson, and B.~A. y~Arcas, ``Communication-efficient learning of deep networks from decentralized data,'' in \emph{Artificial intelligence and statistics}.\hskip 1em plus 0.5em minus 0.4em\relax PMLR, 2017, pp. 1273--1282.

\bibitem{gupta2018distributed}
O.~Gupta and R.~Raskar, ``Distributed learning of deep neural network over multiple agents,'' \emph{Journal of Network and Computer Applications}, vol. 116, pp. 1--8, 2018.

\bibitem{zhang2022federated}
J.~Zhang, Z.~Li, B.~Li, J.~Xu, S.~Wu, S.~Ding, and C.~Wu, ``Federated learning with label distribution skew via logits calibration,'' in \emph{International Conference on Machine Learning}.\hskip 1em plus 0.5em minus 0.4em\relax PMLR, 2022, pp. 26\,311--26\,329.

\bibitem{wang2024aggregation}
Y.~Wang, H.~Fu, R.~Kanagavelu, Q.~Wei, Y.~Liu, and R.~S.~M. Goh, ``An aggregation-free federated learning for tackling data heterogeneity,'' in \emph{Proceedings of the IEEE/CVF Conference on Computer Vision and Pattern Recognition}, 2024, pp. 26\,233--26\,242.

\bibitem{10510418}
J.~Ouyang and Y.~Liu, ``Learning efficiency maximization for wireless federated learning with heterogeneous data and clients,'' \emph{IEEE Transactions on Cognitive Communications and Networking}, vol.~10, no.~6, pp. 2282--2295, Dec 2024.

\bibitem{zhang2023fed}
J.~Zhang, A.~Li, M.~Tang, J.~Sun, X.~Chen, F.~Zhang, C.~Chen, Y.~Chen, and H.~Li, ``Fed-cbs: A heterogeneity-aware client sampling mechanism for federated learning via class-imbalance reduction,'' in \emph{International Conference on Machine Learning}.\hskip 1em plus 0.5em minus 0.4em\relax PMLR, 2023, pp. 41\,354--41\,381.

\bibitem{10910050}
C.~Xie, Z.~Chen, W.~Yi, H.~Shin, and A.~Nallanathan, ``Tackling class imbalance and client heterogeneity for split federated learning in wireless networks,'' \emph{IEEE Transactions on Wireless Communications}, pp. 1--1, 2025.

\bibitem{wang2020tackling}
J.~Wang, Q.~Liu, H.~Liang, G.~Joshi, and H.~V. Poor, ``Tackling the objective inconsistency problem in heterogeneous federated optimization,'' \emph{Advances in neural information processing systems}, vol.~33, pp. 7611--7623, 2020.

\bibitem{acar2021federated}
D.~A.~E. Acar, Y.~Zhao, R.~Matas, M.~Mattina, P.~Whatmough, and V.~Saligrama, ``Federated learning based on dynamic regularization,'' in \emph{International Conference on Learning Representations}, 2021.

\bibitem{yang2021achieving}
H.~Yang, M.~Fang, and J.~Liu, ``Achieving linear speedup with partial worker participation in non-{IID} federated learning,'' in \emph{International Conference on Learning Representations}, 2021.

\bibitem{kim2024communication}
G.~Kim, J.~Kim, and B.~Han, ``Communication-efficient federated learning with accelerated client gradient,'' in \emph{Proceedings of the IEEE/CVF Conference on Computer Vision and Pattern Recognition}, 2024, pp. 12\,385--12\,394.

\bibitem{li2020federated}
T.~Li, A.~K. Sahu, M.~Zaheer, M.~Sanjabi, A.~Talwalkar, and V.~Smith, ``Federated optimization in heterogeneous networks,'' \emph{Proceedings of Machine learning and systems}, vol.~2, pp. 429--450, 2020.

\bibitem{shi2023towards}
Y.~Shi, J.~Liang, W.~Zhang, V.~Tan, and S.~Bai, ``Towards understanding and mitigating dimensional collapse in heterogeneous federated learning,'' in \emph{The Eleventh International Conference on Learning Representations}, 2023.

\bibitem{lee2022preservation}
G.~Lee, M.~Jeong, Y.~Shin, S.~Bae, and S.-Y. Yun, ``Preservation of the global knowledge by not-true distillation in federated learning,'' in \emph{Advances in Neural Information Processing Systems}, A.~H. Oh, A.~Agarwal, D.~Belgrave, and K.~Cho, Eds., 2022.

\bibitem{NEURIPS2021_2f2b2656}
M.~Luo, F.~Chen, D.~Hu, Y.~Zhang, J.~Liang, and J.~Feng, ``No fear of heterogeneity: Classifier calibration for federated learning with non-iid data,'' in \emph{Advances in Neural Information Processing Systems}, M.~Ranzato, A.~Beygelzimer, Y.~Dauphin, P.~Liang, and J.~W. Vaughan, Eds., vol.~34.\hskip 1em plus 0.5em minus 0.4em\relax Curran Associates, Inc., 2021, pp. 5972--5984.

\bibitem{shang2022federated}
X.~Shang, Y.~Lu, G.~Huang, and H.~Wang, ``Federated learning on heterogeneous and long-tailed data via classifier re-training with federated features,'' \emph{arXiv preprint arXiv:2204.13399}, 2022.

\bibitem{huang2023minibatch}
C.~Huang, G.~Tian, and M.~Tang, ``When minibatch sgd meets splitfed learning: Convergence analysis and performance evaluation,'' \emph{arXiv preprint arXiv:2308.11953}, 2023.

\bibitem{liao2024mergesfl}
Y.~Liao, Y.~Xu, H.~Xu, L.~Wang, Z.~Yao, and C.~Qiao, ``Mergesfl: Split federated learning with feature merging and batch size regulation,'' in \emph{2024 IEEE 40th International Conference on Data Engineering (ICDE)}.\hskip 1em plus 0.5em minus 0.4em\relax IEEE, 2024, pp. 2054--2067.

\bibitem{menon2021longtail}
A.~K. Menon, S.~Jayasumana, A.~S. Rawat, H.~Jain, A.~Veit, and S.~Kumar, ``Long-tail learning via logit adjustment,'' in \emph{International Conference on Learning Representations}, 2021.

\bibitem{Zhong_2021_CVPR}
Z.~Zhong, J.~Cui, S.~Liu, and J.~Jia, ``Improving calibration for long-tailed recognition,'' in \emph{Proceedings of the IEEE/CVF Conference on Computer Vision and Pattern Recognition (CVPR)}, June 2021, pp. 16\,489--16\,498.

\bibitem{10105457}
Y.~Zhang, B.~Kang, B.~Hooi, S.~Yan, and J.~Feng, ``Deep long-tailed learning: A survey,'' \emph{IEEE Transactions on Pattern Analysis and Machine Intelligence}, vol.~45, no.~9, pp. 10\,795--10\,816, 2023.

\bibitem{diao2024exploiting}
Y.~Diao, Q.~Li, and B.~He, ``Exploiting label skews in federated learning with model concatenation,'' in \emph{Proceedings of the AAAI Conference on Artificial Intelligence}, vol.~38, no.~10, 2024, pp. 11\,784--11\,792.

\bibitem{li2024adafl}
Q.~Li, X.~Li, L.~Zhou, and X.~Yan, ``Adafl: Adaptive client selection and dynamic contribution evaluation for efficient federated learning,'' in \emph{ICASSP 2024-2024 IEEE International Conference on Acoustics, Speech and Signal Processing (ICASSP)}.\hskip 1em plus 0.5em minus 0.4em\relax IEEE, 2024, pp. 6645--6649.

\bibitem{voigt2017eu}
P.~Voigt and A.~Von~dem Bussche, ``The eu general data protection regulation (gdpr),'' \emph{A practical guide, 1st ed., Cham: Springer International Publishing}, vol.~10, no. 3152676, pp. 10--5555, 2017.

\bibitem{bukaty2019california}
P.~Bukaty, \emph{The california consumer privacy act (ccpa): An implementation guide}.\hskip 1em plus 0.5em minus 0.4em\relax IT Governance Ltd, 2019.

\bibitem{kairouz2021advances}
P.~Kairouz, H.~B. McMahan, B.~Avent, A.~Bellet, M.~Bennis, A.~N. Bhagoji, K.~Bonawitz, Z.~Charles, G.~Cormode, R.~Cummings \emph{et~al.}, ``Advances and open problems in federated learning,'' \emph{Foundations and trends{\textregistered} in machine learning}, vol.~14, no. 1--2, pp. 1--210, 2021.

\bibitem{zhu2021federated}
H.~Zhu, J.~Xu, S.~Liu, and Y.~Jin, ``Federated learning on non-iid data: A survey,'' \emph{Neurocomputing}, vol. 465, pp. 371--390, 2021.

\bibitem{zhang2021federated}
L.~Zhang, Y.~Luo, Y.~Bai, B.~Du, and L.-Y. Duan, ``Federated learning for non-iid data via unified feature learning and optimization objective alignment,'' in \emph{Proceedings of the IEEE/CVF international conference on computer vision}, 2021, pp. 4420--4428.

\bibitem{ma2022state}
X.~Ma, J.~Zhu, Z.~Lin, S.~Chen, and Y.~Qin, ``A state-of-the-art survey on solving non-iid data in federated learning,'' \emph{Future Generation Computer Systems}, vol. 135, pp. 244--258, 2022.

\bibitem{zhou2023fedfa}
T.~Zhou, J.~Zhang, and D.~H. Tsang, ``Fedfa: Federated learning with feature anchors to align features and classifiers for heterogeneous data,'' \emph{IEEE Transactions on Mobile Computing}, vol.~23, no.~6, pp. 6731--6742, 2023.

\bibitem{zhou2024federated}
T.~Zhou, Y.~Yuan, B.~Wang, and E.~Konukoglu, ``Federated feature augmentation and alignment,'' \emph{IEEE Transactions on Pattern Analysis and Machine Intelligence}, 2024.

\bibitem{yan2025simple}
Y.~Yan, H.~Fu, Y.~Li, J.~Xie, J.~Ma, G.~Yang, and L.~Zhu, ``A simple data augmentation for feature distribution skewed federated learning,'' in \emph{Proceedings of the Computer Vision and Pattern Recognition Conference}, 2025, pp. 25\,749--25\,758.

\bibitem{karimireddy2020scaffold}
S.~P. Karimireddy, S.~Kale, M.~Mohri, S.~Reddi, S.~Stich, and A.~T. Suresh, ``Scaffold: Stochastic controlled averaging for federated learning,'' in \emph{International conference on machine learning}.\hskip 1em plus 0.5em minus 0.4em\relax PMLR, 2020, pp. 5132--5143.

\bibitem{hsieh2020non}
K.~Hsieh, A.~Phanishayee, O.~Mutlu, and P.~Gibbons, ``The non-iid data quagmire of decentralized machine learning,'' in \emph{International Conference on Machine Learning}.\hskip 1em plus 0.5em minus 0.4em\relax PMLR, 2020, pp. 4387--4398.

\bibitem{yang2020federated}
K.~Yang, T.~Jiang, Y.~Shi, and Z.~Ding, ``Federated learning via over-the-air computation,'' \emph{IEEE transactions on wireless communications}, vol.~19, no.~3, pp. 2022--2035, 2020.

\bibitem{zhu2020toward}
G.~Zhu, D.~Liu, Y.~Du, C.~You, J.~Zhang, and K.~Huang, ``Toward an intelligent edge: Wireless communication meets machine learning,'' \emph{IEEE communications magazine}, vol.~58, no.~1, pp. 19--25, 2020.

\bibitem{tan2022fedproto}
Y.~Tan, G.~Long, L.~Liu, T.~Zhou, Q.~Lu, J.~Jiang, and C.~Zhang, ``Fedproto: Federated prototype learning across heterogeneous clients,'' in \emph{Proceedings of the AAAI Conference on Artificial Intelligence}, vol.~36, no.~8, 2022, pp. 8432--8440.

\bibitem{mu2023fedproc}
X.~Mu, Y.~Shen, K.~Cheng, X.~Geng, J.~Fu, T.~Zhang, and Z.~Zhang, ``Fedproc: Prototypical contrastive federated learning on non-iid data,'' \emph{Future Generation Computer Systems}, vol. 143, pp. 93--104, 2023.

\bibitem{yang2023client}
J.~Yang, Y.~Liu, and R.~Kassab, ``Client selection for federated bayesian learning,'' \emph{IEEE Journal on Selected Areas in Communications}, vol.~41, no.~4, pp. 915--928, 2023.

\bibitem{niknam2020federated}
S.~Niknam, H.~S. Dhillon, and J.~H. Reed, ``Federated learning for wireless communications: Motivation, opportunities, and challenges,'' \emph{IEEE Communications Magazine}, vol.~58, no.~6, pp. 46--51, 2020.

\bibitem{yang2022federated}
Z.~Yang, M.~Chen, K.-K. Wong, H.~V. Poor, and S.~Cui, ``Federated learning for 6g: Applications, challenges, and opportunities,'' \emph{Engineering}, vol.~8, pp. 33--41, 2022.

\bibitem{10759679}
X.~Lin, Y.~Liu, and F.~Chen, ``Cooperative d2d partial training for wireless federated learning,'' \emph{IEEE Internet of Things Journal}, vol.~12, no.~7, pp. 8712--8724, April 2025.

\bibitem{han2020adaptive}
P.~Han, S.~Wang, and K.~K. Leung, ``Adaptive gradient sparsification for efficient federated learning: An online learning approach,'' in \emph{2020 IEEE 40th international conference on distributed computing systems (ICDCS)}.\hskip 1em plus 0.5em minus 0.4em\relax IEEE, 2020, pp. 300--310.

\bibitem{lin2023joint}
X.~Lin, Y.~Liu, F.~Chen, X.~Ge, and Y.~Huang, ``Joint gradient sparsification and device scheduling for federated learning,'' \emph{IEEE Transactions on Green Communications and Networking}, vol.~7, no.~3, pp. 1407--1419, 2023.

\bibitem{vaishnav2024communication}
S.~Vaishnav, S.~Khirirat, and S.~Magn{\'u}sson, ``Communication-adaptive gradient sparsification for federated learning with error compensation,'' \emph{IEEE Internet of Things Journal}, 2024.

\bibitem{xu2021online}
B.~Xu, W.~Xia, J.~Zhang, T.~Q. Quek, and H.~Zhu, ``Online client scheduling for fast federated learning,'' \emph{IEEE Wireless Communications Letters}, vol.~10, no.~7, pp. 1434--1438, 2021.

\bibitem{cui2021client}
Y.~Cui, K.~Cao, G.~Cao, M.~Qiu, and T.~Wei, ``Client scheduling and resource management for efficient training in heterogeneous iot-edge federated learning,'' \emph{IEEE Transactions on Computer-Aided Design of Integrated Circuits and Systems}, vol.~41, no.~8, pp. 2407--2420, 2021.

\bibitem{10197174}
L.~Fu, H.~Zhang, G.~Gao, M.~Zhang, and X.~Liu, ``Client selection in federated learning: Principles, challenges, and opportunities,'' \emph{IEEE Internet of Things Journal}, vol.~10, no.~24, pp. 21\,811--21\,819, 2023.

\bibitem{11215609}
X.~Lin, Y.~Liu, and F.~Chen, ``Rethinking outage in federated learning: An adaptive retransmission design,'' \emph{IEEE Transactions on Wireless Communications}, pp. 1--1, 2025.

\bibitem{wang2022asynchronous}
Z.~Wang, Z.~Zhang, Y.~Tian, Q.~Yang, H.~Shan, W.~Wang, and T.~Q. Quek, ``Asynchronous federated learning over wireless communication networks,'' \emph{IEEE Transactions on Wireless Communications}, vol.~21, no.~9, pp. 6961--6978, 2022.

\bibitem{xu2023asynchronous}
C.~Xu, Y.~Qu, Y.~Xiang, and L.~Gao, ``Asynchronous federated learning on heterogeneous devices: A survey,'' \emph{Computer Science Review}, vol.~50, p. 100595, 2023.

\bibitem{10346989}
J.~Yang, Y.~Liu, F.~Chen, W.~Chen, and C.~Li, ``Asynchronous wireless federated learning with probabilistic client selection,'' \emph{IEEE Transactions on Wireless Communications}, vol.~23, no.~7, pp. 7144--7158, 2024.

\bibitem{luo2021cost}
B.~Luo, X.~Li, S.~Wang, J.~Huang, and L.~Tassiulas, ``Cost-effective federated learning in mobile edge networks,'' \emph{IEEE Journal on Selected Areas in Communications}, vol.~39, no.~12, pp. 3606--3621, 2021.

\bibitem{xu2023accelerating}
C.~Xu, J.~Li, Y.~Liu, Y.~Ling, and M.~Wen, ``Accelerating split federated learning over wireless communication networks,'' \emph{IEEE Transactions on Wireless Communications}, vol.~23, no.~6, pp. 5587--5599, 2023.

\bibitem{vaswani2017attention}
A.~Vaswani, N.~Shazeer, N.~Parmar, J.~Uszkoreit, L.~Jones, A.~N. Gomez, {\L}.~Kaiser, and I.~Polosukhin, ``Attention is all you need,'' \emph{Advances in neural information processing systems}, vol.~30, 2017.

\bibitem{vepakomma2018split}
P.~Vepakomma, O.~Gupta, T.~Swedish, and R.~Raskar, ``Split learning for health: Distributed deep learning without sharing raw patient data,'' \emph{arXiv preprint arXiv:1812.00564}, 2018.

\bibitem{11176835}
C.~Xu, Y.~Liu, and J.~Yang, ``Inference routing over multi-hop edge networks,'' \emph{IEEE Transactions on Cognitive Communications and Networking}, pp. 1--1, 2025.

\bibitem{turina2021federated}
V.~Turina, Z.~Zhang, F.~Esposito, and I.~Matta, ``Federated or split? a performance and privacy analysis of hybrid split and federated learning architectures,'' in \emph{2021 IEEE 14th International Conference on Cloud Computing (CLOUD)}.\hskip 1em plus 0.5em minus 0.4em\relax IEEE, 2021, pp. 250--260.

\bibitem{singh2019detailed}
A.~Singh, P.~Vepakomma, O.~Gupta, and R.~Raskar, ``Detailed comparison of communication efficiency of split learning and federated learning,'' \emph{arXiv preprint arXiv:1909.09145}, 2019.

\bibitem{yang2024gas}
J.~Yang and Y.~Liu, ``Gas: Generative activation-aided asynchronous split federated learning,'' in \emph{Proceedings of the AAAI Conference on Artificial Intelligence}, vol.~39, no.~20, 2025, pp. 21\,956--21\,964.

\bibitem{yang2022robust}
Z.~Yang, Y.~Chen, H.~Huangfu, M.~Ran, H.~Wang, X.~Li, and Y.~Zhang, ``Robust split federated learning for u-shaped medical image networks,'' \emph{arXiv preprint arXiv:2212.06378}, 2022.

\bibitem{10522472}
G.~Zhu, Y.~Deng, X.~Chen, H.~Zhang, Y.~Fang, and T.~F. Wong, ``Esfl: Efficient split federated learning over resource-constrained heterogeneous wireless devices,'' \emph{IEEE Internet of Things Journal}, vol.~11, no.~16, pp. 27\,153--27\,166, 2024.

\bibitem{wu2023communication}
D.~Wu, R.~Ullah, P.~Rodgers, P.~Kilpatrick, I.~Spence, and B.~Varghese, ``Communication efficient dnn partitioning-based federated learning,'' \emph{arXiv preprint arXiv:2304.05495}, 2023.

\bibitem{wang2022fedlite}
J.~Wang, H.~Qi, A.~S. Rawat, S.~Reddi, S.~Waghmare, F.~X. Yu, and G.~Joshi, ``Fedlite: A scalable approach for federated learning on resource-constrained clients,'' \emph{arXiv preprint arXiv:2201.11865}, 2022.

\bibitem{li2022ressfl}
J.~Li, A.~S. Rakin, X.~Chen, Z.~He, D.~Fan, and C.~Chakrabarti, ``Ressfl: A resistance transfer framework for defending model inversion attack in split federated learning,'' in \emph{Proceedings of the IEEE/CVF Conference on Computer Vision and Pattern Recognition}, 2022, pp. 10\,194--10\,202.

\bibitem{vepakomma2020nopeek}
P.~Vepakomma, A.~Singh, O.~Gupta, and R.~Raskar, ``Nopeek: Information leakage reduction to share activations in distributed deep learning,'' in \emph{2020 International Conference on Data Mining Workshops (ICDMW)}.\hskip 1em plus 0.5em minus 0.4em\relax IEEE, 2020, pp. 933--942.

\bibitem{shen2023ringsfl}
J.~Shen, N.~Cheng, X.~Wang, F.~Lyu, W.~Xu, Z.~Liu, K.~Aldubaikhy, and X.~Shen, ``Ringsfl: An adaptive split federated learning towards taming client heterogeneity,'' \emph{IEEE Transactions on Mobile Computing}, 2023.

\bibitem{rumelhart1986learning}
D.~E. Rumelhart, G.~E. Hinton, and R.~J. Williams, ``Learning representations by back-propagating errors,'' \emph{nature}, vol. 323, no. 6088, pp. 533--536, 1986.

\bibitem{wang2022unified}
S.~Wang and M.~Ji, ``A unified analysis of federated learning with arbitrary client participation,'' \emph{Advances in Neural Information Processing Systems}, vol.~35, pp. 19\,124--19\,137, 2022.

\bibitem{cho2023convergence}
Y.~J. Cho, P.~Sharma, G.~Joshi, Z.~Xu, S.~Kale, and T.~Zhang, ``On the convergence of federated averaging with cyclic client participation,'' in \emph{International Conference on Machine Learning}.\hskip 1em plus 0.5em minus 0.4em\relax PMLR, 2023, pp. 5677--5721.

\bibitem{krizhevsky2012imagenet}
A.~Krizhevsky, I.~Sutskever, and G.~E. Hinton, ``Imagenet classification with deep convolutional neural networks,'' \emph{Advances in neural information processing systems}, vol.~25, 2012.

\bibitem{he2016deep}
K.~He, X.~Zhang, S.~Ren, and J.~Sun, ``Deep residual learning for image recognition,'' in \emph{Proceedings of the IEEE conference on computer vision and pattern recognition}, 2016, pp. 770--778.

\bibitem{abetaevolved}
S.~Abeta, ``Evolved universal terrestrial radio access (eutra); further advancements for e-utra physical layer aspects,'' Technical report (TR) 36.814. 3GPP, Tech. Rep., 2010.

\bibitem{han2021accelerating}
D.-J. Han, H.~I. Bhatti, J.~Lee, and J.~Moon, ``Accelerating federated learning with split learning on locally generated losses,'' in \emph{ICML 2021 workshop on federated learning for user privacy and data confidentiality. ICML Board}, 2021.

\bibitem{krizhevsky2009learning}
A.~Krizhevsky, G.~Hinton \emph{et~al.}, ``Learning multiple layers of features from tiny images,'' 2009.

\bibitem{darlow2018cinic}
L.~N. Darlow, E.~J. Crowley, A.~Antoniou, and A.~J. Storkey, ``Cinic-10 is not imagenet or cifar-10,'' \emph{arXiv preprint arXiv:1810.03505}, 2018.

\bibitem{deng2009imagenet}
J.~Deng, W.~Dong, R.~Socher, L.-J. Li, K.~Li, and L.~Fei-Fei, ``Imagenet: A large-scale hierarchical image database,'' in \emph{2009 IEEE conference on computer vision and pattern recognition}.\hskip 1em plus 0.5em minus 0.4em\relax Ieee, 2009, pp. 248--255.

\bibitem{lyu2023optimal}
S.~Lyu, Z.~Lin, G.~Qu, X.~Chen, X.~Huang, and P.~Li, ``Optimal resource allocation for u-shaped parallel split learning,'' in \emph{2023 IEEE Globecom Workshops (GC Wkshps)}.\hskip 1em plus 0.5em minus 0.4em\relax IEEE, 2023, pp. 197--202.

\bibitem{wanglightweight}
S.~Wang and M.~Ji, ``A lightweight method for tackling unknown participation statistics in federated averaging,'' in \emph{The Twelfth International Conference on Learning Representations}, 2024.

\end{thebibliography}

\begin{IEEEbiography}
[{\includegraphics[width=1in,height=1.25in,clip,keepaspectratio]{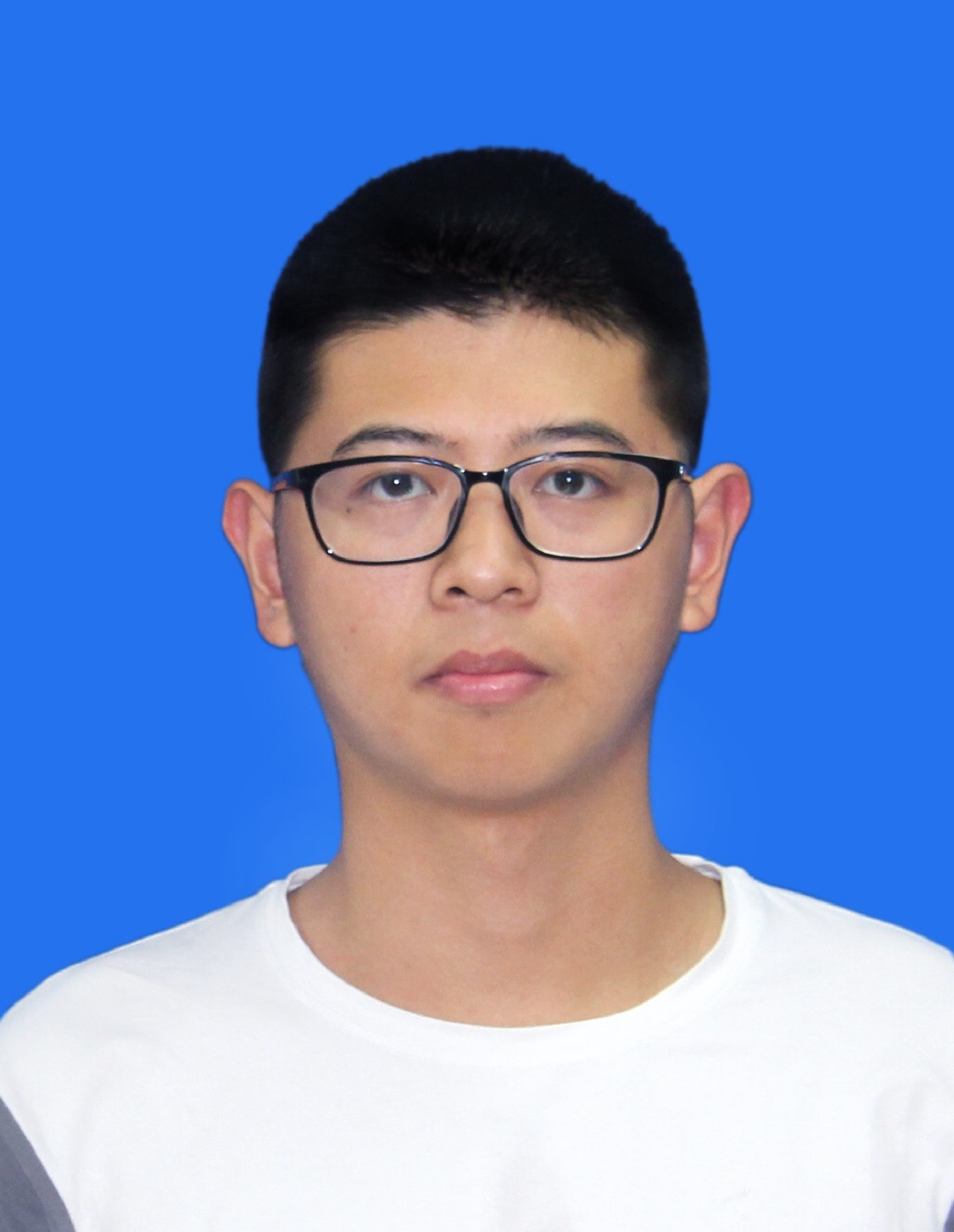}}]
{Jiarong Yang} received the B.S. degree from South China University of Technology, Guangzhou, China, in 2021. He is currently pursuing the Ph.D. degree with the School of Electronic and Information Engineering, South China University of Technology, Guangzhou, China. His research interests include federated learning, Bayesian learning, and machine learning techniques in wireless communications.
\end{IEEEbiography}
	
\begin{IEEEbiography} [{\includegraphics[width=1in,height=1.25in,clip,keepaspectratio]{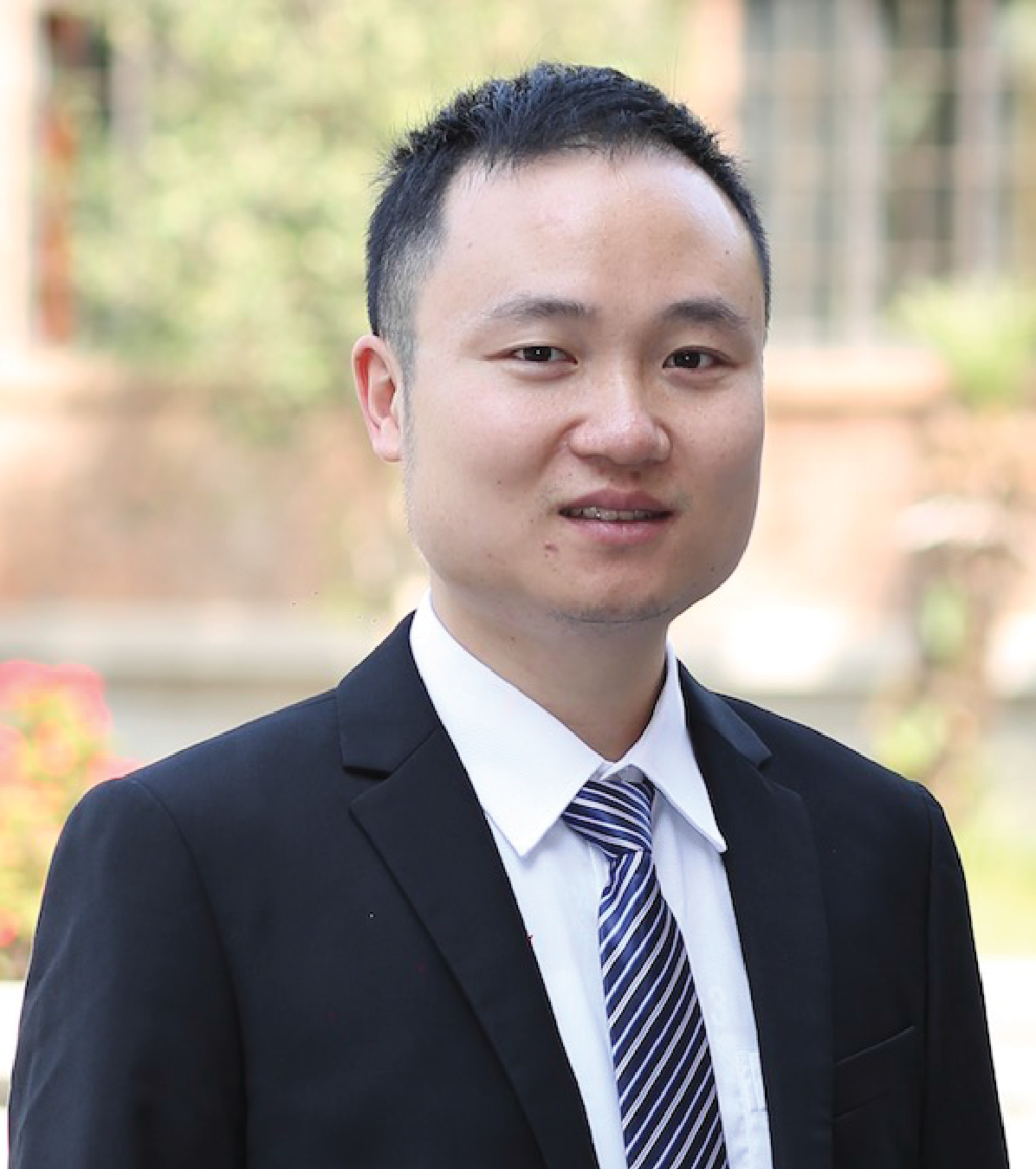}}]
{Yuan Liu}(Senior Member, IEEE) received the Ph.D. degree from Shanghai Jiao Tong University in electronic engineering, China, in 2013. Since 2013, he has been with the School of Electronic and Information Engineering, South China University of Technology, Guangzhou, where he is currently a professor.
He served as an editor for the \textsc{IEEE Communications Letters} and the \textsc{IEEE Access}. His research interests include machine learning, large language models, and edge intelligence. 
\end{IEEEbiography}

\vfill

\end{document}